\newtheorem{definition}{Definition}
\newcommand{\RN}[1]{%
	\textup{\lowercase\expandafter{\it \romannumeral#1}}%
}
\newcommand{\distas}[1]{\mathbin{\overset{#1}{\kern\z@\sim}}}%
\newcommand{\beq}{\vspace{0mm}\begin{equation}}
\newcommand{\eeq}{\vspace{0mm}\end{equation}}
\newcommand{\beqs}{\vspace{0mm}\begin{eqnarray}}
\newcommand{\eeqs}{\vspace{0mm}\end{eqnarray}}
\newcommand{\barr}{\begin{array}}
\newcommand{\earr}{\end{array}}
\newtheorem{theorem}{Theorem} 
\newtheorem{lemma}{Lemma}
\newcommand{\norm}[1]{\left\| #1 \right\|}
\newcommand{\Abs}[1]{\left| #1 \right| }
\newcommand{\Probc}[3][]{\mathbf{P}_{#1}\left( \hiderel{#2} \middle | \hiderel{#3} \right) }
\newcommand{\Exv}[2][]{\mathbf{E}_{#1}\left[ #2 \right]}
\newcommand{\trace}[1]{\mathbf{tr}\left( #1 \right)}
\titlespacing{\paragraph}{%
  0pt}{
  0.0\baselineskip}{
  1em}
\begin{document}

%

%

\twocolumn[

\aistatstitle{AMAGOLD: Amortized Metropolis Adjustment for Efficient Stochastic Gradient MCMC}

\aistatsauthor{ Ruqi Zhang \And A. Feder Cooper \And Christopher De Sa }

\aistatsaddress{ Cornell University \And  Cornell University \And Cornell University } ]
\color{black}
\begin{abstract}
Stochastic gradient Hamiltonian Monte Carlo (SGHMC) is an efficient method for sampling from continuous distributions. It is a faster alternative to HMC: instead of using the whole dataset at each iteration, SGHMC uses only a subsample. This improves performance, but introduces bias that can cause SGHMC to converge to the wrong distribution. One can prevent this using a step size that decays to zero, but such a step size schedule can drastically slow down convergence. To address this tension, we propose a novel second-order SG-MCMC algorithm\textemdash AMAGOLD\textemdash that \textit{infrequently} uses Metropolis-Hastings (M-H) corrections to remove bias. The infrequency of corrections amortizes their cost. We prove AMAGOLD converges to the target distribution with a fixed, rather than a diminishing, step size, and that its convergence rate is at most a constant factor slower than a full-batch baseline. We empirically demonstrate AMAGOLD's effectiveness on synthetic distributions, Bayesian logistic regression, and Bayesian neural networks.
\end{abstract}

\section{Introduction}\label{sec:intro}
Markov chain Monte Carlo (MCMC) methods play an important role in Bayesian inference. They work by constructing a Markov chain with the desired distribution as its equilibrium distribution; one samples from the chain and, as the algorithm converges to its equilibrium, the samples drawn reflect the desired distribution \citep{metropolis1953equation,duane1987hybrid,horowitz1991generalized,neal2011mcmc}. Although MCMC is a powerful technique, when the sampled distribution depends on a very large dataset,  its performance is often limited by the dataset's size---usually by the cost of computing sums over the entire dataset.

One approach for scaling MCMC is to decouple the algorithm from the size of the dataset, using stochastic gradients in lieu of full-batch gradients \citep{welling2011bayesian, chen2014stochastic, ding2014bayesian}. This family of methods is called Stochastic gradient MCMC (SG-MCMC). For example, stochastic gradient Langevin dynamics (SGLD) replaces the gradient with a stochastic estimate in first order Langevin Monte Carlo (LMC) \citep{welling2011bayesian}. The second-order analog of SGLD is stochastic gradient Hamiltonian Monte Carlo (SGHMC). SGHMC can be thought of as a stochastic version of the popular Hamiltonian Monte Carlo (HMC) algorithm and second-order Langevin dynamics (L2MC) \citep{chen2014stochastic}. These algorithms have been particularly useful in Bayesian neural networks, and many variants have been proposed to increase their sampling efficiency and accuracy \citep{ding2014bayesian,ahn2012bayesian,ma2015complete,zhang2017stochastic}. Table \ref{tab:terms} provides a clarifying summary of the algorithms considered in this paper.

The runtime efficiency benefits of such SG-MCMC methods also come with a drawback: stochastic gradients introduce bias. Bias can cause convergence to a stationary distribution that differs from the one we wanted to sample from, and usually comes from two sources: converting the continuous-time process into discrete gradient updates, and noise from stochastic gradient estimates. These sources of error are in a sense unavoidable because they are also the source of SG-MCMC's runtime efficiency. Discretization with a large step size $\epsilon$ (instead of diminishing $\epsilon \rightarrow 0$) allows SG-MCMC to quickly move around its state space, and using stochastic gradients is key for scalability.

The standard approach for removing bias from a Markov chain is to introduce a \emph{Metropolis-Hastings (M-H) correction} \citep{metropolis1953equation}. This involves rejecting some fraction of the chain's transitions to restore the correct stationary distribution. Na\"ively applying M-H to SG-MCMC algorithms is often computationally prohibitive because the M-H step typically needs to sum over the entire dataset. Performing this expensive computation every iteration would defeat the purpose of using stochastic gradients to improve performance. Thus, more sophisticated techniques are needed to achieve efficient, unbiased sampling for SG-MCMC.

\begin{table}[t]
	\centering
	\begin{tabular}{@{}lcc@{}} \toprule
		Algorithm & Exact? & Stochastic Gradient? \\ \midrule
		AMAGOLD & Yes & Yes   \\
		L2MC & Yes & No \\
		HMC & Yes & No \\
		SGHMC & No & Yes  \\ 
		\bottomrule
	\end{tabular}
	\caption{Comparing 2nd order MCMC methods.}
	\label{tab:terms}
\end{table}

In this paper, we show that asymptotic exactness is possible \textit{without being prohibitively expensive}. Specifically, we propose \emph{Amortized Metropolis-Adjusted stochastic Gradient second-Order Langevin Dynamics (AMAGOLD)}. It achieves asymptotic exactness for SGHMC by using an M-H correction step and does so without obliterating the performance gains provided by stochasticity. Our key insight is to apply the M-H step \textit{infrequently}. Rather than computing it every update, AMAGOLD performs it every $T$ steps ($T>0$). We prove this is sufficient to remove bias while also improving performance by amortizing the M-H correction cost over $T$ steps. We develop both reversible and non-reversible AMAGOLD variants and prove both converge to the desired distribution. We also prove a convergence rate relative to using full-batch gradients, which cleanly captures the effect of using stochastic gradients. This result provides insight about the trade-off between minibatching speed-ups and the convergence rate. Our results also show the noise from stochastic gradients has a provably bounded effect on convergence. In summary, our contributions are as follows:
\begin{itemize}[itemsep=0pt,topsep=0pt]
    \item We introduce AMAGOLD, an efficient, asymptotically-exact SGHMC variant that \textit{infrequently} applies an M-H correction. We give reversible and non-reversible versions.
    \item We guarantee AMAGOLD converges to the target distribution, and does so without requiring step size $\epsilon\rightarrow 0$ or precise noise variance estimation.
    \item We prove a bound on AMAGOLD's convergence rate with mild assumptions, measured by the spectral gap. This bound is relative to how fast the algorithm \emph{would have} converged if full-batch gradients were used. This is the first such relative convergence bound we are aware of for SG-MCMC.
    \item We validate our convergence guarantees empirically.  Comparing to SGHMC, AMAGOLD is more robust to hyperparameters. Regarding performance, AMAGOLD is competitive with full-batch baselines on synthetic and real-world datasets, and outperforms  SGHMC on various tasks.
\end{itemize}

\begin{algorithm}[t]
\caption{SGHMC}
\label{alg:SGHMC}
\begin{algorithmic}[1]
  \STATE \textbf{given:} Energy $U$, initial state $\theta \in \Theta$, step size $\epsilon$, momentum variance $\sigma^2$, friction $\beta$
  \LOOP
    \STATE \textbf{optionally, resample momentum: } 
    \STATE $r \sim \mathcal{N}(0, \sigma^2)$
    \STATE \textbf{initialize position and momentum: } 
    \STATE$r_{\frac{1}{2}} \leftarrow r $, $\theta_0 \leftarrow \theta$
    \FOR {$t = 1$ \textbf{to} $T$}
    \STATE \textbf{position update:} 
    $\theta_{t} \leftarrow \theta_{t-1} + \epsilon \sigma^{-2} r_{t-\frac{1}{2}}$
    \STATE \textbf{sample noise } $\eta_t \sim \mathcal{N}(0, 4 \epsilon \beta \sigma^2)$
    \STATE \textbf{sample random energy component} $\tilde U_t$
    \STATE \textbf{update momentum: } \[r_{t + \frac{1}{2}} \leftarrow r_{t - \frac{1}{2}} - \epsilon \nabla \tilde U_t(\theta_t) - 2\epsilon \beta r_{t - \frac{1}{2}}  + \eta_t\]
    \ENDFOR
    \STATE \textbf{new values: }$(\theta, r) \leftarrow (\theta_{T}, r_{T+\frac{1}{2}})$ 
    \STATE $\triangleright$ no M-H step
  \ENDLOOP
\end{algorithmic}
\end{algorithm}

\section{Related Work}

Our work is situated within a rich literature of SG-MCMC variants that take advantage of stochastic gradient techniques. These methods have demonstrated success on deep neural networks (DNNs) for various tasks \citep{li2016learning,gan2016scalable,zhang2019cyclical}. In particular, second-order SG-MCMC methods like SGHMC, which have a momentum term, have been shown to outperform first-order methods like SGLD on many applications \citep{chen2014stochastic,chen2015convergence}. \citet{gao2018global} proves SGHMC's convergence can be faster than SGLD's on non-convex problem due to its momentum-based acceleration. SGHMC can also be thought of as a stochastic version of L2MC \citep{horowitz1991generalized} or HMC; we therefore use both L2MC and HMC as experimental full-batch baselines.

Prior work has also studied SGHMC's convergence properties. \citet{chen2014stochastic} examines its convergence for ``asymptotically'' small step sizes, in which a continuous-time system governs the dynamics (in contrast, our algorithm is asymptotically exact with a constant step size). Other work proves convergence with high-order integrators \citep{chen2015convergence} and obtains non-asymptotic convergence bounds for SGHMC on non-convex optimization tasks \citep{gao2018global}.

Additional work has studied the properties of first-order M-H adjusted Langevin methods, such as MALA \citep{grenander1994representations,roberts1996exponential,roberts1998optimal,roberts2002langevin,stramer1999langevin}. \citet{dwivedi2018log} derives the mixing time of MALA for strongly log-concave densities, showing it has a better convergence rate than unadjusted Langevin (in comparison, AMAGOLD does not require the assumption of strongly log-concave densities). \citet{korattikara2014austerity} developed a minibatch M-H approach, which uses subsampling in the M-H correction step, and applied it to correct bias in SGLD. They show cases where SGLD diverges from the target distribution, while SGLD with a minibatched M-H correction performs well.

The work above involves first-order methods. To the best of our knowledge, we are the first to develop an unbiased, efficient second-order SG-MCMC algorithm. We are also the first in this space to use the spectral gap, a traditional metric to evaluate MCMC convergence \citep{hairer2014spectral,levin2017markov,de2018minibatch}. It requires milder assumptions than techniques in prior SG-MCMC work, such as 2-Wasserstein \citep{raginsky2017non,dalalyan2019user}, mean squared error \citep{vollmer2016exploration,chen2015convergence}, and empirical risk \citep{gao2018global}. 

\section{Preliminaries}
We start by briefly describing the standard setup of Bayesian inference. Given some dataset $\mathcal{D}$ and domain $\Theta$, suppose we are interested in sampling from the posterior distribution 
$\pi(\theta) \propto \exp\left(- U(\theta)\right)$
where
\vspace{-.1cm}
\[
U(\theta) = -\sum_{x\in\mathcal{D}} \log p(x|\theta) - \log p(\theta).
\]
$U(\theta)$ is the \emph{energy function},
$\theta$ ranges over $\Theta$, and $\pi \propto \mu$ denotes $\pi$ is the unique distribution with PDF proportional to $\mu$. One way to compute this distribution is to construct a Markov chain with stationary distribution $\pi$ and run it to produce a sequence of samples. 

A second-order chain, such as HMC, SGHMC, or L2MC \citep{duane1987hybrid,horowitz1991generalized,neal2011mcmc}, does this by \emph{augmenting} the state space with an additional momentum variable $r$, giving joint distribution
\vspace{-.25cm}
\begin{align*}
  \pi(\theta, r)
  \propto
  \exp(-H(\theta, r))
  =
  \exp\left(-U(\theta) - \frac{1}{2\sigma^2} \| r \|^2 \right),
\end{align*}
where $H$ is the \emph{Hamiltonian}, which measures the total energy of the system. Note we could replace the norm with any positive definite quadratic form on $r$. For simplicity, we only consider the case of isotropic momentum energy\textemdash where the \emph{mass matrix} is $\sigma^2 I$.
HMC then simulates Hamiltonian dynamics
\begin{equation}
    \label{eqnHamDyn}
     d\theta = \sigma^{-2} r \; dt, \hspace{2em}
     dr = - \nabla U(\theta) \; dt. 
\end{equation}
The value of the Hamiltonian is preserved under these dynamics, so we must also include transitions that change the value of $H$ to explore the whole state space. HMC does this by periodically resampling $r$ from its conditional distribution. L2MC does so by continuously modifying $r$ with a friction term and added Gaussian noise. Even though (\ref{eqnHamDyn}) preserves $H$, the discrete simulation of (\ref{eqnHamDyn}) run by HMC or L2MC \textit{does not} necessarily do so. Therefore both algorithms need an M-H correction step to prevent bias due to discretization.

SGHMC (Algorithm~\ref{alg:SGHMC}) reduces the computational cost of these methods by using a stochastic gradient in lieu of the full-batch gradient $\nabla U$. It estimates $U(\theta)$ using minibatch $\tilde{ \mathcal{D}}$:
\[
\tilde U(\theta) \approx -\frac{\Abs{\mathcal{D}}}{|\tilde{ \mathcal{D}}|}\sum_{x\in\tilde{\mathcal{D}}}\log p(x|\theta) - \log p(\theta).
\]
However, using a minibatch introduces noise; na\"ively replacing $U$ by $\tilde U$ leads to divergence from the target distribution. To offset this noise, SGHMC adds the friction term from L2MC (Appendix~\ref{sec:append:sghmc}). SGHMC uses the leapfrog algorithm to discretize the system \citep{neal2011mcmc}. Notably, SGHMC does not include an M-H correction; to reduce the bias, it requires small $\epsilon$. 

\section{Amortized Metropolis Adjustment}
\label{sec:AMA}
\emph{Reversible} Markov chains are a particularly well-studied and well-behaved class of Markov chains.
A Markov chain with transition probability operator $G$ is reversible (also called satisfying the \emph{detailed balance condition}) if
for any pair of states $x$ and $y$
\begin{align}
\label{eq:detail-balance}
\pi(x) G(x, y) = \pi(y) G(y, x).
\end{align}
It is well-known that a chain satisfying (\ref{eq:detail-balance}) has stationary distribution $\pi$.
An M-H correction constructs a reversible chain $G$ with stationary distribution $\pi$ from any Markov chain $P$ (called the \emph{proposal distribution}) by doing the following at each iteration.
First, starting from state $x$, sample $y$ from the proposal distribution $P(x,y)$.
Second, compute the \emph{acceptance probability}
\[
\tau = \min\left(1, \frac{\pi(y)P(y,x)}{\pi(x)P(x,y)}\right).
\]
Finally, with probability $\tau$ transition to state $y$; otherwise, remain in state $x$.
This correction results in a reversible chain with stationary distribution $\pi$; however, computing $\tau$ at every step can be costly. 

The natural way to amortize the cost of running M-H is to replace the single proposal of baseline M-H with $T$ proposal-chain steps. This divides its cost among $T$ iterations of the underlying chain, effectively decreasing it by a factor of $T$. 
For stochastic MCMC, each proposal step can be written as $P(x,y;\zeta)$, which denotes the probability of going from state $x$ to state $y$ given stochastic sample $\zeta$ taken from some known distribution. (In minibatched MCMC, $\zeta$ captures information about which data we sample at that step.)
Using this, we can run the following algorithm, starting at $x$.
First, set $x_0 = x$, and run for $t$ from $0$ to $T - 1$
\[
    \text{sample noise } \zeta_t \text{, then sample } x_{t+1} \sim P(x_t,x_{t+1};\zeta_t).
\]
Next, set $y = x_T$: this is the proposal run an M-H correction on.
Finally, compute the acceptance probability 
\begin{align}
\tau
&=
\min\left(1, \frac{\pi(y)}{\pi(x)} \prod_{t = 0}^{T-1} \frac{P(x_{t+1},x_t; \zeta_t)}{P(x_t,x_{t+1}; \zeta_t)}\right) \label{eqn:amortmcmc} \\
&=
\min\left(1, \prod_{t = 0}^{T-1} \frac{\pi(x_{t+1})P(x_{t+1},x_t; \zeta_t)}{\pi(x_t)P(x_t,x_{t+1}; \zeta_t)}\right). \label{eqn:amortmcmc2}
\end{align}
and transition to state $y$ with probability $\tau$; otherwise, remain in state $x$.

It is straightforward to see this algorithm results in a reversible chain with stationary distribution $\pi$.\footnote{\label{noteproof}A detailed proof appears in Appendix~\ref{sec:append:AMA}.}
Additionally, this amortized M-H step (\ref{eqn:amortmcmc}) is easily computed as long as the probabilities $P(\cdot,\cdot;\zeta)$ are tractable.

We expect this approach will be effective when the M-H step does ``not reject too often''. This will certainly be the case when the terms inside the product in (\ref{eqn:amortmcmc2}) all tend to be close to $1$, which happens when the proposals $P(x, y; \eta)$ are ``close'' to being reversible with stationary distribution $\pi$. This is a good heuristic: our amortization approach should be effective when the proposals are close to being reversible.

Unfortunately, this heuristic does not apply to SGHMC's proposal step since SGHMC and other Hamiltonian-like steps are not close to satisfying the reversibility condition (\ref{eq:detail-balance}).
Instead, the natural ``reverse'' trajectory for a Hamiltonian step reverses the order of the states and \emph{negates the momentum}. 
The analog of reversibility for this sort of step is \emph{skew-reversibility}~\citep{turitsyn2011irreversible}.
Given some measure-preserving involution over the state space denoted $x^\bot$, a chain $G$ is skew-reversible if $\pi(x) = \pi(x^{\bot})$ and
\begin{equation}
\label{eq:skew-detail-balance}
\pi(x) G(x, y) = \pi(y^{\bot}) G(y^{\bot}, x^{\bot}).
\end{equation}
Concretely, for Hamiltonian dynamics we use the involution that negates the momentum, i.e. $(\theta, r)^{\bot} = (\theta, -r)$. It is straightforward to show that a skew-reversible Markov chain also has $\pi$ as its stationary distribution.\textsuperscript{\ref{noteproof}}
Such non-reversible chains have attracted a great deal of recent attention because they are more efficient than reversible ones in some situations~\citep{turitsyn2011irreversible,hukushima2013irreversible,ma2016unifying}.

A natural consequence of this setup is that we can amortize M-H in the same manner as before, using skew-reversibility in place of reversibility.
This gives the same multi-step-proposal algorithm as before, except that the acceptance probability is replaced with 
\begin{align}
    \tau
    &=
    \min\left(1, \frac{\pi(y^{\bot})}{\pi(x)} \prod_{t = 0}^{T-1} \frac{P(x_{t+1}^{\bot},x_t^{\bot}; \zeta_t)}{P(x_t,x_{t+1}; \zeta_t)}\right) \label{eqn:amortmcmcskew}.
\end{align}
The resulting corrected chain will be skew-reversible with stationary distribution $\pi$.\textsuperscript{\ref{noteproof}}
Intuitively, this chain will ``not reject too often'' as long as the proposals $P$ are ``close'' to being skew-reversible.
Since SGHMC steps are close to being skew-reversible, this is the more natural approach for amortizing M-H, rather than using (\ref{eqn:amortmcmc}). 
If one wants to use the well-developed theoretical tools for a reversible chain, it is known that we can recover a reversible chain from a skew-reversible one by simply resampling the momentum at the beginning of the outer loop.\textsuperscript{\ref{noteproof}} Note this reversible chain can be different from the one obtained by using condition (\ref{eqn:amortmcmc}).

\section{AMAGOLD}\label{sec:AMAGOLD}
We now apply the amortized Metropolis adjustment (AMA) method of Section~\ref{sec:AMA} to second-order SG-MCMC. 
As a proposal, we use the stochastic leapfrog step that starts in $(\theta, r)$ and proposes $(\theta^*, r^*)$ by running
{\small\begin{align*}
    \theta_0 &= \theta + \frac{1}{2} \epsilon \sigma^{-2} r \\
    r^* &= ( 
        (1 - \epsilon \beta) r - \epsilon \nabla \tilde U_t(\theta_0) + \mathcal{N}(0, 4 \epsilon \beta \sigma^2 I)
    )/(1 + \epsilon \beta) \\
    \theta^* &= \theta_0 + \frac{1}{2} \epsilon \sigma^{-2} r^*.
\end{align*}}%
Applying our amortized M-H correction to this proposal step using the acceptance probability (\ref{eqn:amortmcmcskew}) results in AMAGOLD (Algorithm~\ref{alg:IMA}).
AMAGOLD is, by construction, skew-reversible, and we have the option of making it reversible by resampling the momentum. 

AMAGOLD has three key differences compared to SGHMC. First, motivated by the time-reversal-symmetric nature of conditions (\ref{eq:detail-balance}) and (\ref{eq:skew-detail-balance}), we use a clearly time-symmetric update step in the inner loop (compare Line 12 of Algorithm~\ref{alg:IMA}, which can be written as $r_{t + \frac{1}{2}} \leftarrow r_{t - \frac{1}{2}} - \epsilon \nabla \tilde U_t(\theta_t) - \epsilon\beta(r_{t - \frac{1}{2}}+r_{t + \frac{1}{2}}) + \eta_t$, with the less clearly symmetric Line 11 of Algorithm~\ref{alg:SGHMC}). Note this is just a different way of writing the algorithm: the update steps could be made equivalent by appropriately setting the hyperparameters. Second, we use a type of leapfrog integration that starts and ends the outer loop with a half-position-update (Lines 5 and 15). This too is done in the interest of time-reversal-symmetry. Third, there is an additional term $\rho$ in AMAGOLD, which we call the \emph{energy accumulator}, which accumulates the $\log$ of the product in (\ref{eqn:amortmcmcskew}). Computing $\rho$ requires little extra cost since all its terms are already obtained in the standard update. AMAGOLD is thus unbiased without adding too much cost over SGHMC. The following theorem summarizes AMAGOLD's asymptotic accuracy. (This follows from the construction; an explicit proof is in Appendix~\ref{sec:thm1}.)
\begin{theorem}\label{thm:convergence}
Consider the Markov chain described by AMAGOLD (Algorithm~\ref{alg:IMA}).
If the momentum is resampled (on line 3), then this Markov chain is reversible.
Otherwise the Markov chain is skew-reversible. In either case, its stationary distribution is $\pi$.
\end{theorem}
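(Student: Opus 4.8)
The plan is to reduce the theorem to the general amortized skew-reversible M--H construction of Section~\ref{sec:AMA}, whose validity (skew-reversibility of the corrected chain, and hence stationary distribution $\pi$) is already established in the excerpt. Concretely, I would identify AMAGOLD's stochastic leapfrog step as the proposal kernel $P(\cdot,\cdot;\zeta)$, where $\zeta$ bundles the minibatch draw (which fixes $\tilde U_t$) and the injected Gaussian noise $\eta_t\sim\mathcal{N}(0,4\epsilon\beta\sigma^2 I)$. Once I verify that the acceptance probability computed in Algorithm~\ref{alg:IMA} via the energy accumulator $\rho$ coincides with \eqref{eqn:amortmcmcskew} for this $P$, both conclusions (skew-reversible without resampling; reversible with resampling) follow from the construction together with the involution $(\theta,r)^\bot=(\theta,-r)$, under which $\pi$ is invariant.

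The crux is the single-step density ratio appearing in \eqref{eqn:amortmcmcskew}. Since the position updates are deterministic given the momenta, one leapfrog step's proposal density is just the Gaussian density of the noise $\eta_t$ needed to realize it, so I would write $\eta_t=(1+\epsilon\beta)r^*-(1-\epsilon\beta)r+\epsilon\nabla\tilde U_t(\theta_0)$ for the forward step and the analogous $\eta_t'$ for the reversed step from $(\theta^*,-r^*)$ to $(\theta,-r)$. The key algebraic facts are that (i) the half-position-updates force the reversed step to evaluate $\nabla\tilde U_t$ at the \emph{same} midpoint $\theta_0$, and (ii) $\theta^*-\theta=\tfrac{1}{2}\epsilon\sigma^{-2}(r+r^*)$. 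Expanding $\norm{\eta_t}^2-\norm{\eta_t'}^2$ and dividing by $8\epsilon\beta\sigma^2$, the kinetic terms $\tfrac{1}{2\sigma^2}(\norm{r^*}^2-\norm{r}^2)$ produced by the Gaussian ratio exactly cancel the kinetic part of the $\pi$-ratio, and the remainder collapses to $(\theta^*-\theta)\cdot\nabla\tilde U_t(\theta_0)$. Thus each factor of \eqref{eqn:amortmcmcskew} contributes $-(U(\theta_{t+1})-U(\theta_t))+(\theta_{t+1}-\theta_t)\cdot\nabla\tilde U_t(\theta_0^{(t)})$ to the log-acceptance ratio; summing telescopes the true potential into the single endpoint difference $U(\theta_T)-U(\theta_0)$ (computed once, hence amortized), while the stochastic-gradient terms are precisely what $\rho$ accumulates. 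This identifies Algorithm~\ref{alg:IMA}'s acceptance rule with \eqref{eqn:amortmcmcskew}. I expect this cancellation---verifying that the potential energy drops out and only the accumulator plus one full-batch endpoint evaluation survives---to be the main obstacle, since it is exactly where the symmetric update and half-step integration are essential.

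Given this identification, skew-reversibility of the chain $G$ without resampling is immediate from the construction, and stationarity follows by summing \eqref{eq:skew-detail-balance} over $x$: using that $\bot$ is measure-preserving and $\pi(y^\bot)=\pi(y)$, one gets $\sum_x\pi(x)G(x,y)=\pi(y^\bot)\sum_{x'}G(y^\bot,x')=\pi(y)$, so $\pi G=\pi$.

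For the reversible case, I would model the line-3 momentum resampling as a Gibbs kernel $R$ that redraws $r\sim\mathcal{N}(0,\sigma^2 I)$ independently of the input momentum. This $R$ leaves $\pi$ invariant and is itself reversible, and crucially it is symmetric under the flip involution in the sense that $R(x,\cdot)=R(x^\bot,\cdot)$ and $R(x,y^\bot)=R(x,y)$, i.e. $R(x,dz)=R(x^\bot,dz^\bot)$. Composing this symmetric, reversible resampling with the skew-reversible dynamics $G$ removes the momentum negation from \eqref{eq:skew-detail-balance}, yielding an ordinary detailed-balance relation \eqref{eq:detail-balance} for the composite kernel; this is the standard symmetrization alluded to in Section~\ref{sec:AMA}. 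Stationarity of $\pi$ then follows exactly as above by summing \eqref{eq:detail-balance}. The only care needed here is bookkeeping the involution through the composition, which the momentum-flip symmetry of $R$ handles.
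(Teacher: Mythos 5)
Your proof is correct in substance but takes a genuinely different route from the paper's. The paper's Appendix~\ref{sec:thm1} gives a direct, self-contained computation: it writes $\pi(\theta)G(\theta,\theta^*)$ as an integral over intermediate states, performs a change of variables from $\bm\theta$ to $(\bm\eta,r)$ with explicit Jacobians, expands $\sum_t\norm{\eta_t}^2$, and observes that the resulting integrand is invariant under reversing the trajectory and negating the momenta. You instead take the modular route the paper only gestures at (``this follows from the construction''): identify the stochastic leapfrog step as the proposal $P(\cdot,\cdot;\zeta)$, verify that the acceptance probability built from the energy accumulator $\rho$ coincides with~(\ref{eqn:amortmcmcskew}), and then invoke the general lemmas of Appendix~\ref{sec:append:AMA}. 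Your key algebra checks out: with $\eta_t=(1+\epsilon\beta)r_{t+\frac12}-(1-\epsilon\beta)r_{t-\frac12}+\epsilon\nabla\tilde U_t(\theta_t)$ and the reverse noise evaluated at the same midpoint, $\norm{\eta_t}^2-\norm{\eta_t'}^2=4\epsilon\beta\left(\norm{r_{t+\frac12}}^2-\norm{r_{t-\frac12}}^2+\epsilon\nabla\tilde U_t(\theta_t)^T(r_{t-\frac12}+r_{t+\frac12})\right)$, the kinetic terms telescope against the momentum part of $\pi(y^\bot)/\pi(x)$, and what survives is exactly $\exp(U(\theta)-U(\theta^*)+\rho_{T-\frac12})$. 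Your approach buys modularity and makes transparent why the symmetric update and half-step integration matter; the paper's buys an explicit joint-density formula that it then reuses verbatim in the proof of Theorem~\ref{thm:convergence-rate}, which your route would not provide.

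Two points need tightening. First, $\zeta_t$ cannot ``bundle'' the injected Gaussian noise: in~(\ref{eqn:amortmcmcskew}) the same $\zeta_t$ appears in both the forward and reverse kernels, so if $\eta_t$ were part of $\zeta_t$ the kernel $P(\cdot,\cdot;\zeta_t)$ would be deterministic and the density ratio ill-posed. Your subsequent computation implicitly does the right thing ($\zeta_t$ is the minibatch draw only; $\eta_t$ supplies the kernel's density, and the constant Jacobian $(1+\epsilon\beta)^d$ from $\eta_t\mapsto r_{t+\frac12}$ cancels between forward and reverse), but the setup should say so. Second, with resampling, detailed balance does not hold for the composite kernel on the full state $(\theta,r)$ --- that kernel ignores its incoming momentum, and one can check that full-state reversibility would force a condition depending separately on the incoming and outgoing momenta that fails in general. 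The correct statement, and the one the paper proves (Lemma~4 of Appendix~\ref{sec:append:AMA} and the first half of Appendix~\ref{sec:thm1}), is that the \emph{position-marginal} chain $H(\theta,\theta^*)=\int\pi(r)G((\theta,r),(\theta^*,r^*))\,dr\,dr^*$ is reversible; you should integrate out the momenta rather than assert detailed balance for the composite kernel itself.
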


\begin{algorithm}[t]
\caption{AMAGOLD}
\label{alg:IMA}
\begin{algorithmic}[1]
  \STATE \textbf{given:} Energy $U$, initial state $\theta \in \Theta$,  step size $\epsilon$, momentum variance $\sigma^2$, friction $\beta$
  \vspace{0.5em}
  \LOOP
    \STATE \textbf{optionally, resample momentum:} 
    
    $r \sim \mathcal{N}(0, \sigma^2 I)$
    \STATE \textbf{initialize momentum, energy acc: } 
    
    $r_{-\frac{1}{2}} \leftarrow r$, $\rho_{- \frac{1}{2}} \leftarrow 0$
    \STATE \textbf{half position update:} $\theta_{0} \leftarrow \theta + \frac{1}{2} \epsilon \sigma^{-2} r_{-\frac{1}{2}}$
    \vspace{0.5em}
    \FOR {$t = 0$ \textbf{to} $T-1$}
      \IF {$t\neq 0$ }
        \STATE \textbf{position update:} $\theta_{t} \leftarrow \theta_{t-1} + \epsilon \sigma^{-2} r_{t-\frac{1}{2}}$
    \ENDIF
    \vspace{0.5em}
    \STATE \textbf{sample noise } $\eta_t \sim \mathcal{N}(0, 4 \epsilon \beta \sigma^2 I)$
    \STATE \textbf{sample random energy component} $\tilde U_t$
    \vspace{0.5em}
    \STATE \textbf{update momentum: } 
    
    $r_{t + \frac{1}{2}} \leftarrow ((1-\epsilon \beta)r_{t - \frac{1}{2}} - \epsilon \nabla \tilde U_t(\theta_t) + \eta_t)/(1+\epsilon \beta)$
    \STATE \textbf{update energy acc: }
    
    \hspace{-.07em}$\rho_{t + \frac{1}{2}} \leftarrow \rho_{t - \frac{1}{2}} + \frac{1}{2} \epsilon \sigma^{-2}  \nabla \tilde U_t(\theta_t)^T \left(r_{t - \frac{1}{2}} + r_{t + \frac{1}{2}}\right)$\vspace{0.5em}
    \ENDFOR
    \vspace{0.5em}
    \STATE \textbf{half position update:} 
    
    $\theta_T \leftarrow \theta_{T-1} + \frac{1}{2} \epsilon \sigma^{-2} r_{T-\frac{1}{2}}$\vspace{0.5em}
    \STATE \textbf{new values:} $\theta^* \leftarrow \theta_T$, $r^* \leftarrow r_{T-\frac{1}{2}}$\vspace{0.25em}
    \STATE $a \leftarrow \exp\left( U(\theta) - U(\theta^*) + \rho_{T - \frac{1}{2}} \right)$\vspace{0.25em}
    \STATE \textbf{with probability } $\min(1, a)$,
    
    \hspace{1em}\textbf{ update } $\theta \leftarrow \theta^*$, $r \leftarrow r^*$ (as long as $\theta^* \in \Theta$)
    \STATE \textbf{otherwise update } $r \leftarrow - r_{-\frac{1}{2}}$
  \ENDLOOP
\end{algorithmic}
\end{algorithm}
\vspace{-1em}
\paragraph{Connection to previous methods}
AMAGOLD is related to several previous MCMC methods. When using a full-batch gradient, AMAGOLD becomes L2MC with amortized M-H-adjustment. 
Using a full-batch, $\beta=0$, and resampling, AMAGOLD becomes HMC (Appendix \ref{sec:relation-hmc}). 
If we disable AMAGOLD's M-H step (and adjust hyperparameters), it becomes SGHMC.

\paragraph{Illustrating AMAGOLD}

\begin{figure}[t!]
    \vspace{-0mm}
    \begin{tabular}{c c}
        \centering
        \hspace{-.25cm}
        \includegraphics[width=4cm]{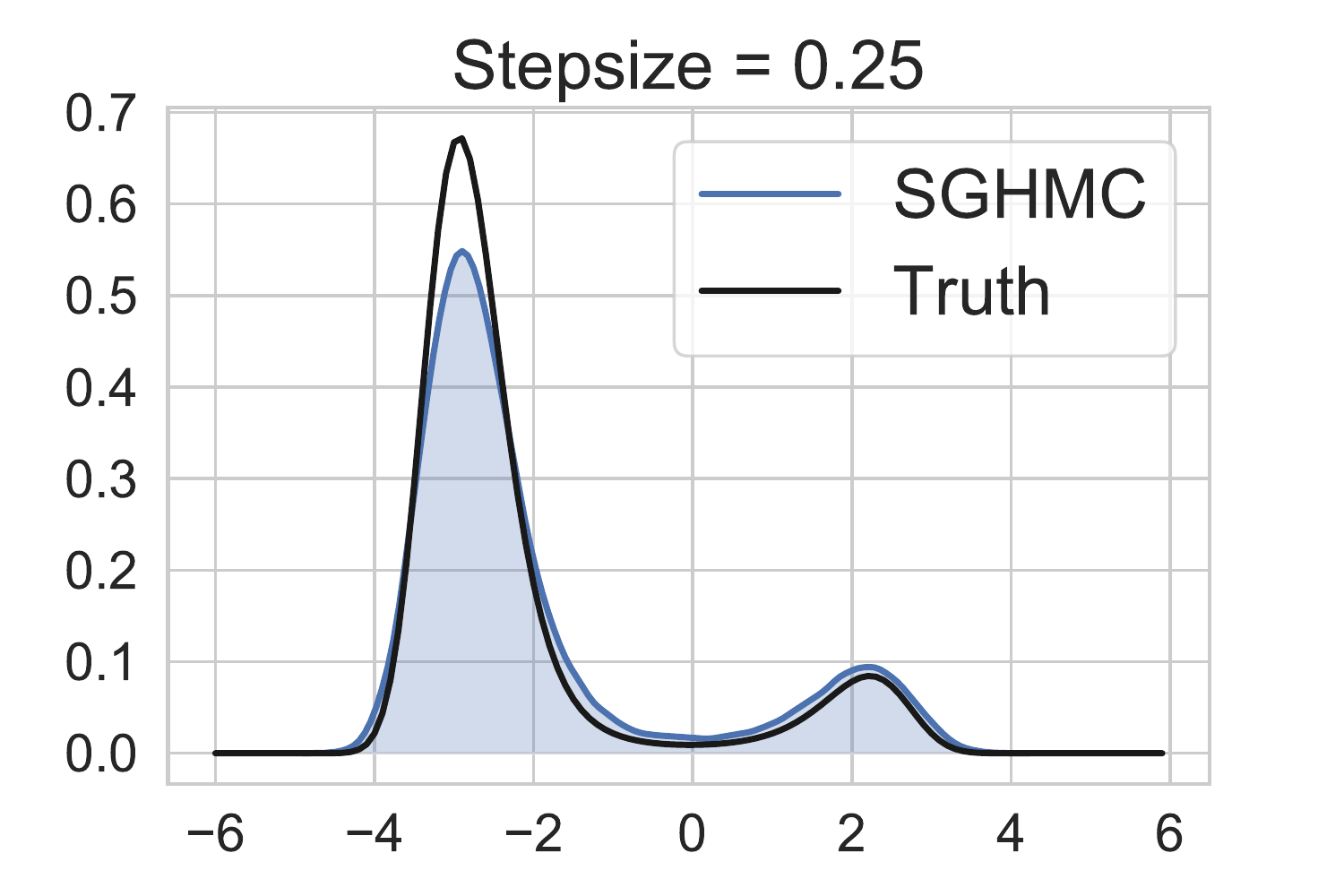} &
        \hspace{-3mm}
        \includegraphics[width=4cm]{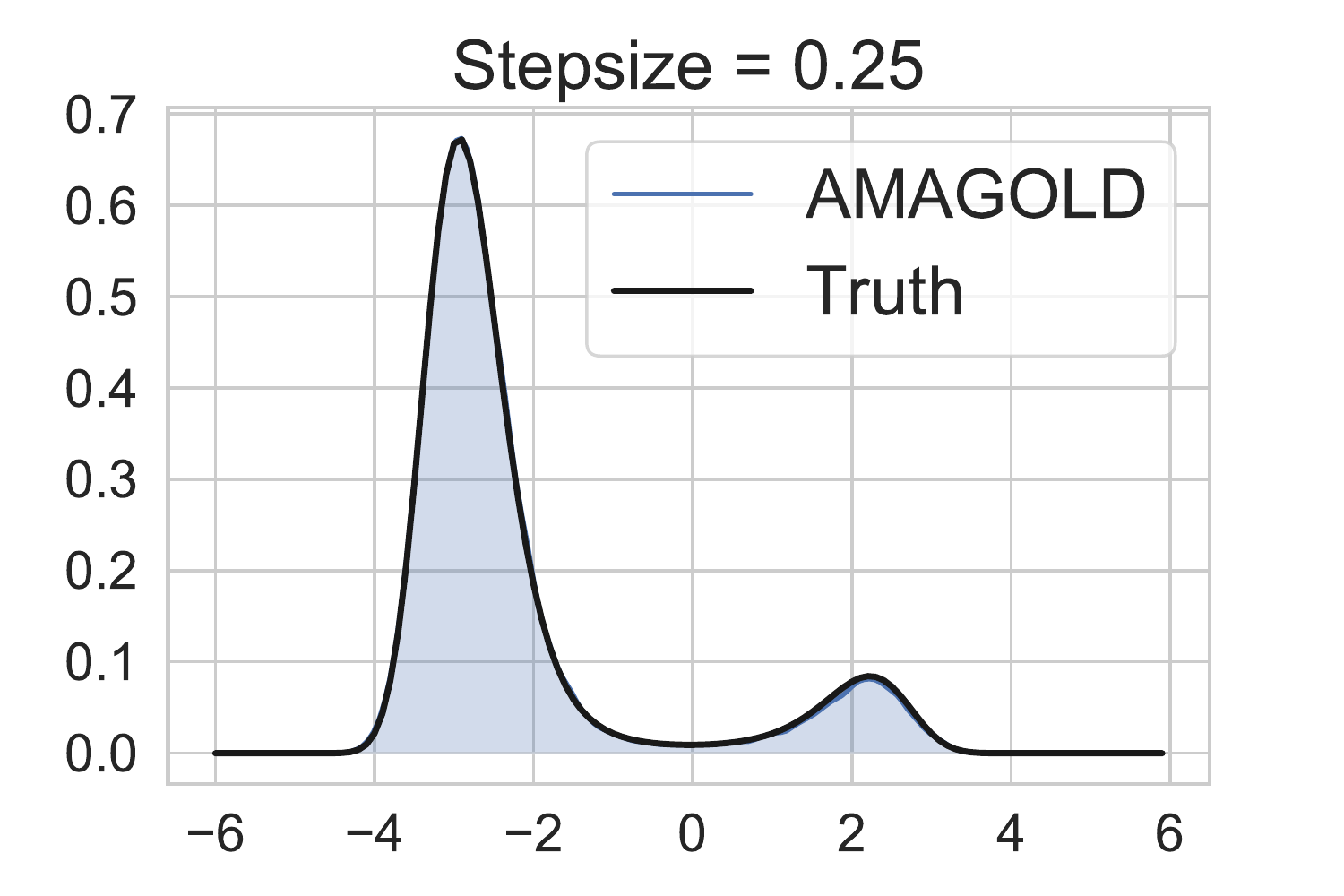}  \\
        (a) SGHMC \vspace{-0mm} & \hspace{-0mm}
        (b) AMAGOLD\hspace{-0mm} \hspace{0mm} \\
        \hspace{-.5cm}
        \includegraphics[width=4cm]{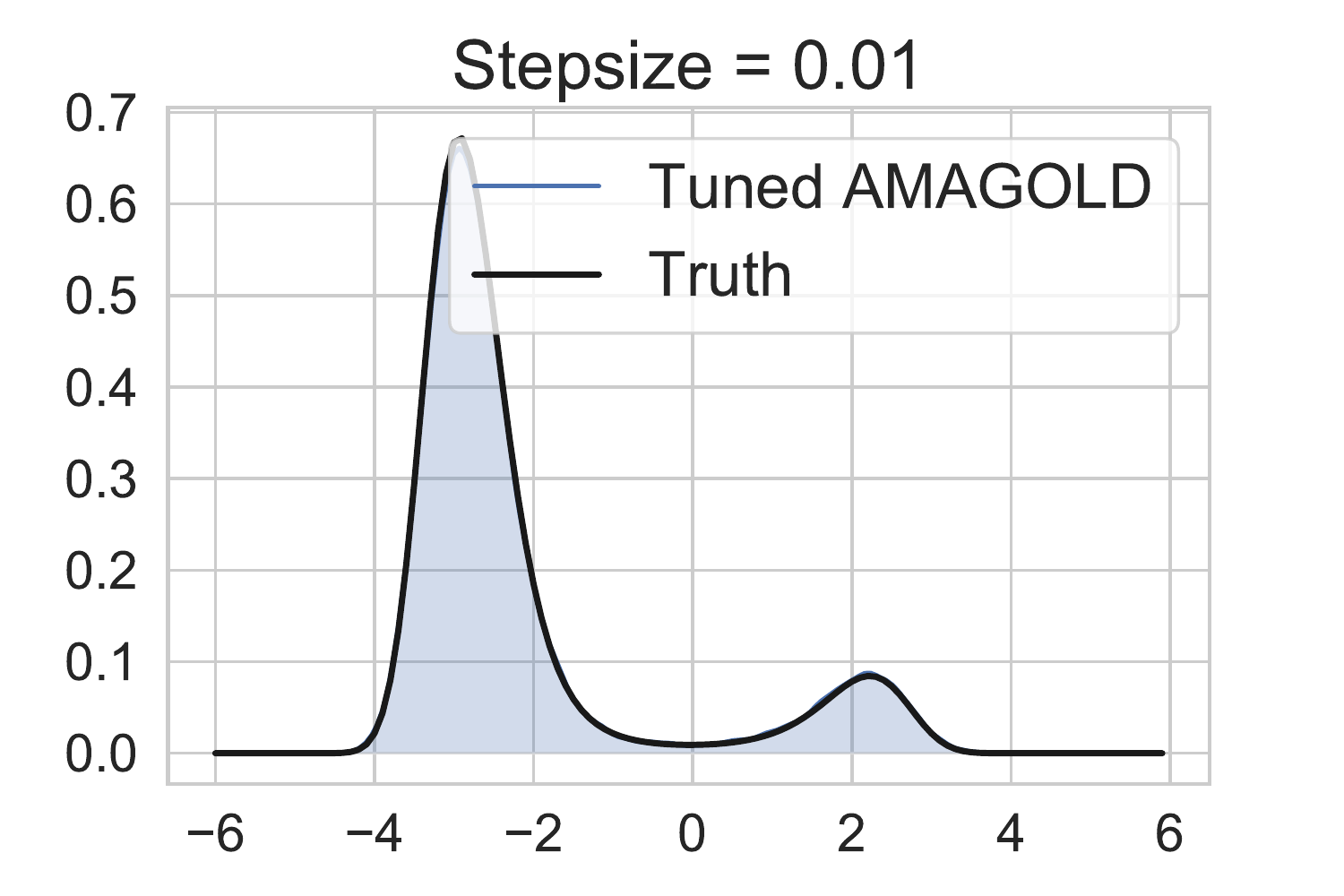} &
        \hspace{-2mm}
        \includegraphics[width=3.9cm]{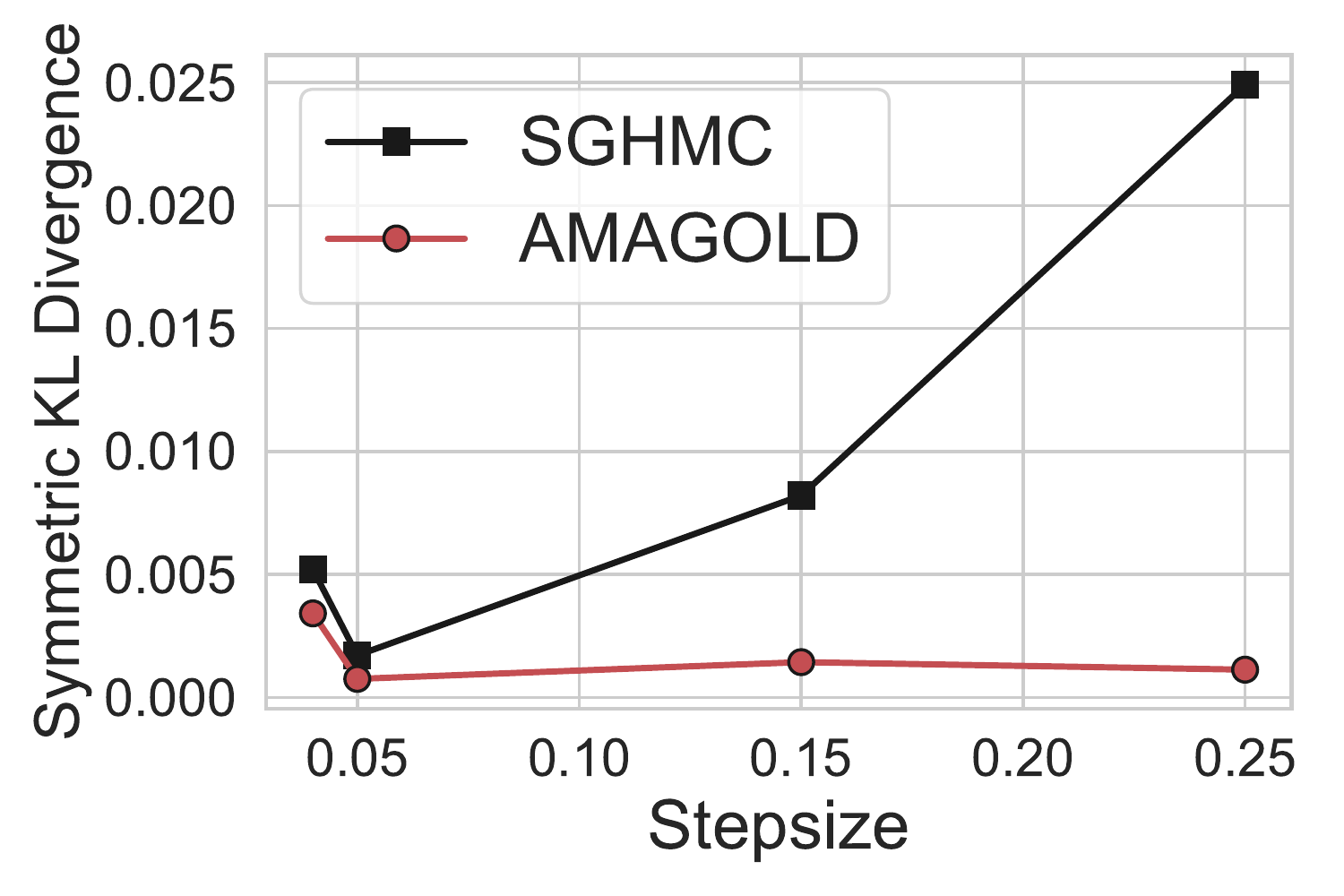}
        \\
        (c) Tuned AMAGOLD\hspace{-0mm} &
        (d)  KL Divergence \vspace{-0mm}
    \end{tabular}
    \vspace{-0mm}
    \caption{Estimated densities of (a) SGHMC and (b) AMAGOLD for step size 0.25 compared to the ground truth and (c) step size 0.01 for tuned AMAGOLD (see Section \ref{sec:tuning}); (d) Comparison of symmetric KL divergence, varying step sizes for SGHMC and AMAGOLD.}
    \vspace{-3mm}
    \label{fig:1dim}
\end{figure}

To illustrate AMAGOLD is able to achieve unbiased stochastic MCMC, we test our method on a double-well potential \citep{ding2014bayesian, li2016high}:
\[
U(\theta) = (\theta + 4)(\theta + 1)(\theta - 1)(\theta - 3)/14 + 0.5.
\]
The target distribution is proportional to $\exp(-U(\theta))$. To simulate stochastic gradients, we let $\nabla \tilde U = \nabla U + \mathcal{N}(0, 1)$. We show the results of SGHMC and AMAGOLD when $\beta = 0.25$, $T=10$ and $\epsilon = 0.25$. Results for $\epsilon = \{0.05, 0.15\}$ are in Appendix~\ref{sec:append:dwp}. 

In Figure~\ref{fig:1dim} and Appendix~\ref{sec:append:dwp}, the estimated densities of AMAGOLD are very close to the true density on varying step sizes. In contrast, SGHMC does not converge to the correct distribution asymptotically. This is especially the case when the step size is large: SGHMC diverges from the true distribution. These observations validate Theorem~\ref{thm:convergence}, as AMA guarantees convergence to the target distribution. To quantitatively measure divergence from the true distribution, we plot the symmetric KL divergence as a function of the step size in Figure~\ref{fig:1dim}d. We can see that SGHMC is very sensitive to step size, and may require careful tuning in practice, while AMAGOLD is more robust.

\subsection{Convergence Rate Analysis}

Using stochastic gradients in MCMC can reduce the cost of each iteration. However, this does not mean the overall cost of the algorithm will be less in comparison to its non-stochastic counterpart. Rather, it is possible that the stochastic chain's convergence rate becomes much slower than the non-stochastic one. 
To be confident in the effectiveness of an SG-MCMC method, we must rule this out: We must show that the convergence speed of the stochastic chain is not slowed down, or at least not too much, compared to the non-stochastic chain. We do this analysis for AMAGOLD as follows.

Since AMAGOLD can be regarded as stochastic L2MC, we study reversible AMAGOLD's convergence rate relative to L2MC with an amortized M-H correction. Prior work has used this type of bound to prove the convergence rate of subsampled MCMC methods \citep{de2018minibatch,zhang2019poisson}. Unlike work that uses 2-Wasserstein, MSE, or empirical risk minimization to evaluate the convergence rate of SG-MCMC, we are the first to use the spectral gap---a traditional metric for evaluating MCMC convergence \citep{hairer2014spectral,levin2017markov} that is directly related to another common measurement, the mixing time \citep{levin2017markov}. Our bound only requires mild assumptions compared to prior work \citep{vollmer2016exploration,chen2015convergence}, and we measure convergence to the target distribution directly, rather than empirical risk minimization \citep{gao2018global}.

The spectral gap $\gamma$ of a reversible Markov chain with transition probability operator $G$ is defined as the smallest distance between any non-principal eigenvalue of $G$ and $1$, the principal eigenvalue of $G$ \citep{levin2017markov}.
The spectral gap determines the convergence rate of a Markov chain: a chain with a smaller $\gamma$ will take longer to converge. To ensure the existence of $\gamma$, we assume geometric ergodicity of the full-batch chain \citep{rudolf2011explicit}.
To bound $\gamma$, we assume the covariance of the gradient samples of AMAGOLD is bounded isotropically with
\vspace{-.1cm}
\begin{align*}
    \Exv{(\nabla \tilde U(\theta) - \nabla U(\theta))(\nabla \tilde U(\theta) - \nabla U(\theta))^T}
    \preceq
    \frac{V^2}{d} I
\end{align*}
for some constant $V > 0$.
This sort of bounded-variance assumption is standard in the analysis of stochastic gradient algorithms.

The following theorem shows that with appropriate hyperparameter settings the convergence rate of AMAGOLD will not be slowed down by more than a constant factor.
\begin{theorem}\label{thm:convergence-rate}
For some parameters $\epsilon > 0$, $\sigma > 0$, and $\beta > 0$, let $\bar \gamma$ denote the spectral gap of the L2MC chain running with parameters $(\epsilon, \sigma, \beta)$.
Assume that these parameters are such that $\epsilon V^2 \le 4 \sigma^2 \beta d$.
Define a constant
$c
=
1
+
\sqrt{
  \frac{\epsilon V^2}{16 \sigma^2 \beta T d^2}
}$.
Let $\gamma$ denote the spectral gap of AMAGOLD running with parameters $(\epsilon, \sigma \cdot c^{-1/4}, \beta \cdot c^{-1/2})$.
Then,
\[
\frac{\gamma}{\bar \gamma}
\ge
\exp\left( -\frac{\epsilon T V^2}{4 \sigma^2 \beta} - 
   \sqrt{
      \frac{\epsilon T V^2}{\sigma^2 \beta}
    }
  \right).
\]
\end{theorem}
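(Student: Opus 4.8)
The plan is to use the variational (Dirichlet-form) characterization of the spectral gap and to reduce the comparison of $\gamma$ and $\bar\gamma$ to a pointwise comparison of the two chains' Dirichlet forms. Theorem~\ref{thm:convergence} tells us that, with momentum resampling, both AMAGOLD and amortized-M-H L2MC are reversible with the \emph{same} stationary distribution $\pi$, so I may write
\[
\gamma = \inf_{f}\frac{\mathcal{E}_A(f)}{\mathrm{Var}_\pi(f)},\qquad
\bar\gamma = \inf_{f}\frac{\mathcal{E}_L(f)}{\mathrm{Var}_\pi(f)},
\]
where $\mathcal{E}_A,\mathcal{E}_L$ are the Dirichlet forms of AMAGOLD and of L2MC. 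Because both chains live on the same state space and the variances in the denominators are identical, it suffices to establish the pointwise domination $\mathcal{E}_A(f)\ge C\,\mathcal{E}_L(f)$ for every test function $f$, with $C$ the claimed exponential factor; this immediately yields $\gamma/\bar\gamma\ge C$.

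First I would write each Dirichlet form as an expectation over a starting state $x=(\theta,r)\sim\pi$ and the proposal randomness. For an M--H chain only the accepted (off-diagonal) mass contributes, so
\[
\mathcal{E}_A(f) = \tfrac12\,\E_{x\sim\pi}\,\E_{\zeta}\!\left[\min(1,a(x,\zeta))\,\big(f(x)-f(F(x,\zeta))\big)^2\right],
\]
where $\zeta$ collects the minibatch draws and the injected Gaussian noise, $F(x,\zeta)$ is the leapfrog proposal (deterministic given $\zeta$), and $a$ is the acceptance factor of line~17 of Algorithm~\ref{alg:IMA}; $\mathcal{E}_L$ is the analogous expression using full-batch gradients and injected noise only.

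The crux is to couple the two proposals so that the factors $\big(f(x)-f(F(x,\zeta))\big)^2$ line up, after which the comparison reduces to the acceptance probabilities alone. This is exactly the role of the rescaling $(\sigma\,c^{-1/4},\,\beta\,c^{-1/2})$: the stochastic gradient injects extra momentum noise of covariance at most $\tfrac{\epsilon^2 V^2}{d}I$ per step, and the rescaling shrinks AMAGOLD's friction and injected-noise variance so that its effective diffusion/temperature balance reproduces that of L2MC run with the original $(\epsilon,\sigma,\beta)$, with $c$ measuring the residual discrepancy that must be absorbed (the assumption $\epsilon V^2\le 4\sigma^2\beta d$ is what keeps $c$ finite and the matching well-posed). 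Under such a coupling the acceptance factors differ only by the discrepancy in the energy accumulator, $a = \bar a\,e^{W}$ with $W=\rho-\bar\rho=\sum_{t}\tfrac12\epsilon\sigma^{-2}\big(\nabla\tilde U_t(\theta_t)-\nabla U(\theta_t)\big)^{\!T}\!\big(r_{t-\frac12}+r_{t+\frac12}\big)$, and I can then invoke the elementary pointwise bound $\min(1,\bar a e^{W})\ge \min(1,\bar a)\,\min(1,e^{W})$ to separate the clean acceptance from the noise.

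The remaining step is to control the noise in expectation, i.e.\ to show $\E[\min(1,e^{W})\mid x,F]\ge C$ conditionally on the coupled proposal. I would do this with an exponential-moment estimate: by Jensen, $\E[\min(1,e^{W})]\ge \exp\big(\E[\min(0,W)]\big)$, and $\E[\min(0,W)]\ge \E[W]-\tfrac12\E|W-\E W|\ge \E[W]-\tfrac12\sqrt{\mathrm{Var}(W)}$. The mean $\E[W]$ is negative because the gradient noise $\xi_t=\nabla\tilde U_t-\nabla U$ is anticorrelated with the very momentum $r_{t+\frac12}$ it helps create, contributing the term $-\tfrac{\epsilon T V^2}{4\sigma^2\beta}$, while the bounded-variance assumption turns the per-coordinate covariance bound into $\mathrm{Var}(W)=\Theta\!\big(\epsilon T V^2/(\sigma^2\beta)\big)$, contributing $-\sqrt{\epsilon T V^2/(\sigma^2\beta)}$. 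I expect the main obstacle to lie precisely here, in making the coupling rigorous: the two chains cannot share trajectories exactly (their inner loops use different gradients), so the factorization $a=\bar a e^{W}$ and the conditional moment-generating-function control of the trajectory-long sum $W$—accounting for the correlations between the momenta $r_{t\pm\frac12}$ and the noise across all $T$ inner steps—are where the real work and the specific form of the rescaling $c$ must be justified. Assembling the conditional bound into $\mathcal{E}_A(f)\ge C\,\mathcal{E}_L(f)$ then completes the argument.
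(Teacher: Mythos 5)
Your high-level skeleton matches the paper's: both arguments reduce the spectral-gap comparison to a pointwise domination (the paper bounds the transition kernels, you bound the Dirichlet forms, which is equivalent here), both use Jensen's inequality plus the bounded-covariance assumption to produce the two terms $-\frac{\epsilon T V^2}{4\sigma^2\beta}$ and $-\sqrt{\epsilon T V^2/(\sigma^2\beta)}$ in the exponent, and both use the parameter rescaling to absorb a residual term proportional to $\sum_t\|B_t\|^2$ into the friction. However, the step you yourself flag as ``where the real work'' lies is a genuine gap, and it is not a technicality: the coupling you propose cannot be made to work in the form stated. You want to fix the injected randomness $\zeta$ so that the two proposals $F(x,\zeta)$ coincide and only the acceptance factors differ by $e^{W}$; but since the inner loops use different gradients, fixing $\zeta$ makes the position trajectories diverge, so the factors $\bigl(f(x)-f(F(x,\zeta))\bigr)^2$ do \emph{not} line up, and the quantity you call $\bar a$ is not the acceptance probability of the L2MC chain on its own trajectory. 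Moreover, conditioning on the proposal endpoint (as in your $\E[\min(1,e^W)\mid x,F]$) tilts the law of the minibatch noise, because the trajectory is a function of that noise; your moment computation for $W$ ignores this tilting, and your heuristic that $\E[W]<0$ ``because the noise is anticorrelated with the momentum it helps create'' is not what actually produces the $-\epsilon TV^2/(4\sigma^2\beta)$ term.

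The paper's resolution is to avoid coupling altogether: starting from the explicit expression for $\pi(\theta)G(\theta,\theta^*)$ obtained in the proof of Theorem~\ref{thm:convergence}, it changes variables so that the transition density is an integral over the \emph{position trajectory} $(\theta_0,\dots,\theta_{T-1})$. For a fixed trajectory, the injected noise $\eta_t$ needed to realize it is determined by the gradient sample, and its Gaussian density appears explicitly in the integrand; both AMAGOLD and full-batch L2MC can realize the same trajectory, so the two integrands can be compared term by term. Writing $\nabla\tilde U_t=\nabla U_t+N_t$, the cross term $A_t^TN_t$ vanishes in expectation (for fixed trajectory $A_t$ is deterministic and $\E[N_t]=0$), the quadratic term $\E\|N_t\|^2\le V^2$ gives the first exponential factor, and a Cauchy--Schwarz bound on $\E\bigl|\sum_t N_t^TB_t\bigr|$ followed by an AM--GM split gives the second factor plus the $\sum_t\|B_t\|^2$ term that the rescaled parameters absorb. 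If you want to salvage your proposal, you should replace the coupling by this change-of-variables / density-comparison step; without it, the factorization $a=\bar a e^{W}$ and the conditional control of $W$ have no rigorous footing.
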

This requirement on parameters is easy to satisfy because $d$ is generally large and $\epsilon$ is generally small in practice. For the same reason, $c$ is usually close to 1, so the parameters used by the two chains are very close.

This theorem has three useful takeaways:
First, AMAGOLD's convergence rate is essentially the same as L2MC up to a constant, which will approach $1$ as the batch size increases ($V$ decreases) or $\epsilon$ decreases. 
Second, it shows the effect of minibatching on convergence rate: if one reduces the minibatch size (i.e. $V^2$ increases), they can expect the convergence rate to decrease with a rate of $\exp(-O(V^2))$. 
Third, the theorem outlines a range of parameters (where $\epsilon T V^2 \ll \sigma^2 \beta$) 
over which AMAGOLD converges at a similar rate to the full-batch algorithm.

\subsection{AMAGOLD in Practice}\label{sec:practice}
Here we describe some simple modifications that can further improve AMAGOLD's performance.

\paragraph{Minibatch M-H}
Amortizing the cost of an M-H correction over $T$ steps is not always sufficient for achieving good performance on large datasets. This is because calculating the true energy $U$ requires a scan over the whole dataset. We can further reduce the cost of a single correction by using minibatch M-H to compute the acceptance probability\textemdash using a minibatch at line 17 of Algorithm~\ref{alg:IMA}. Prior work has estimated the M-H correction using a subset of the data \citep{korattikara2014austerity, bardenet2014towards, maclaurin2015firefly, seita2016efficient}. These methods are composable with, rather than exclusive with, AMAGOLD and could provide additional speed-ups.

\paragraph{Tuning the step size} \label{sec:tuning}
Our experiments on double well potential (Figure~\ref{fig:1dim}) show step size significantly influences SGHMC's performance. Besides being more robust to step size, AMAGOLD's step size can be more easily tuned. The M-H step's acceptance probability provides information about whether a step size is desirable. Based on this information, the step size can be tuned automatically to target some fixed acceptance probability during burn-in without affecting convergence. With a fixed step size $\epsilon = 0.01$, both AMAGOLD and SGHMC provide poor density estimates due to too small step size. However, when we let AMAGOLD adjust $\epsilon$ such that the average M-H acceptance probability is $85\%$, it estimates the density accurately (Figure~\ref{fig:1dim}c, Appendix~\ref{sec:append:dwp}).

\section{Experiments}\label{sec:experiments}
Here we validate our theory empirically and explore the performance of AMAGOLD on a variety of applications. We compare to full-batch baselines HMC and L2MC to show AMAGOLD is more efficient and we compare to SGHMC because, despite exhibiting bias, it is commonly used in the literature. Unless otherwise specified, we use reversible AMAGOLD, meaning we resample the momentum, $T=10$ and $\beta=0.25$. We set hyperparameters for AMAGOLD in a similar way as SGHMC \citep{chen2014stochastic}. For simplicity, we do not use the techniques in Section~\ref{sec:practice}. Additional details are in Appendix~\ref{sec:append:exp2}. The code can be found at \url{https://github.com/ruqizhang/amagold}.

\subsection{Synthetic Distributions}\label{sec:synthetic}
\begin{figure}[t!]
    \vspace{-0mm}
    \begin{tabular}{c c}        
        \hspace{-5mm}
        \centering
        \raisebox{0.1\height}{\includegraphics[height=2.5cm]{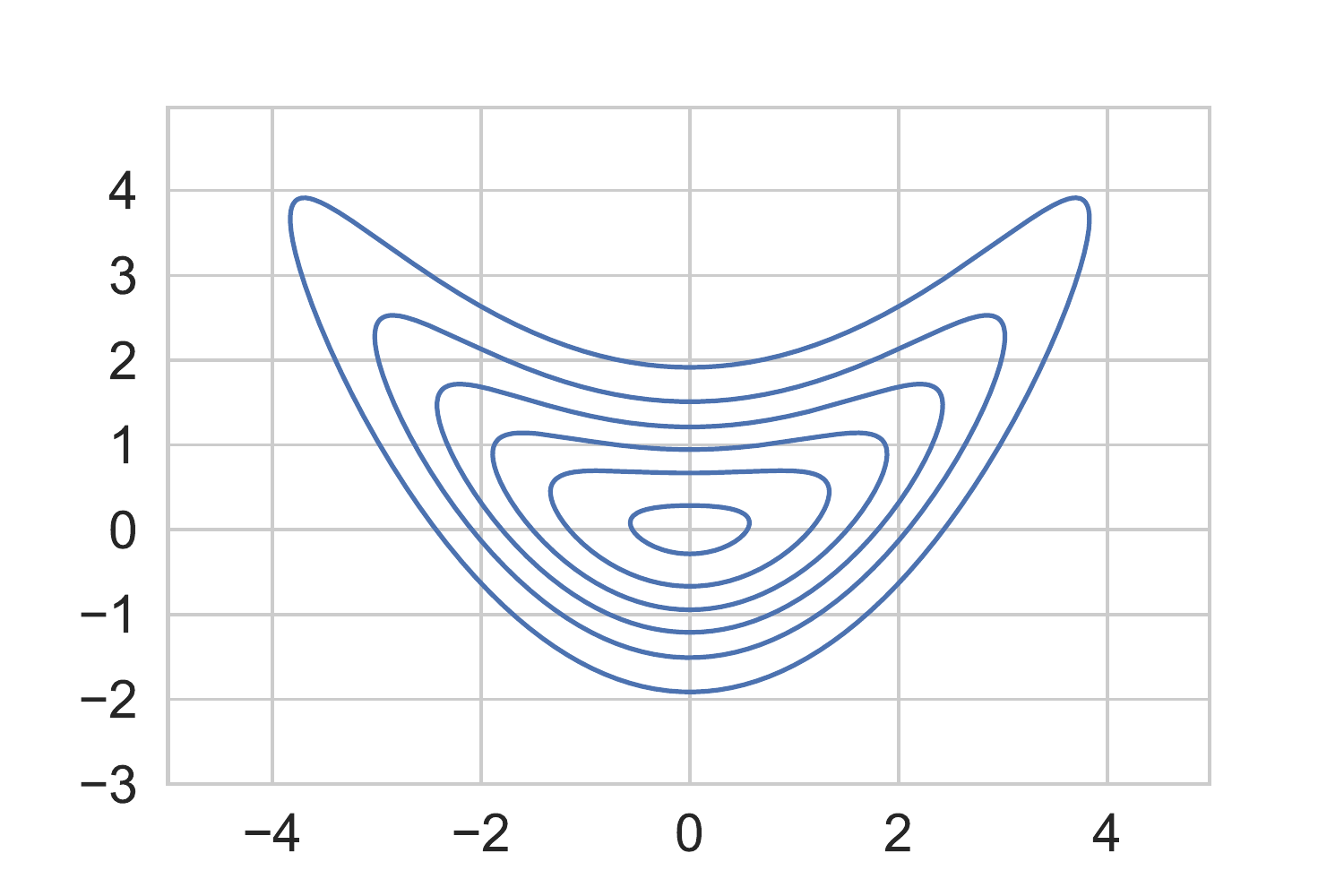}} &
        \hspace{-4mm}
        \includegraphics[width=4.2cm]{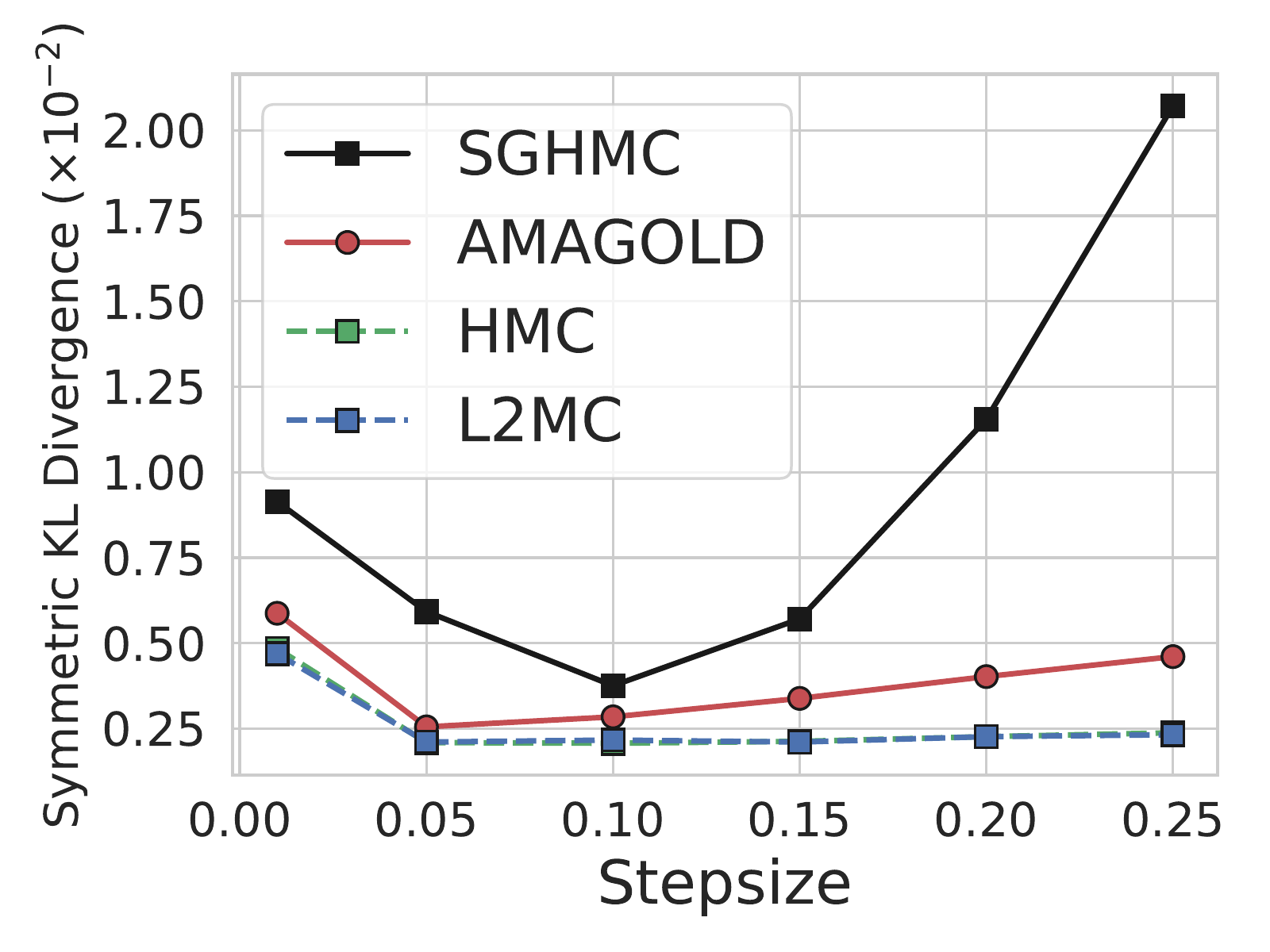}  \\
        (a) Dist1 \vspace{-0mm} & \hspace{-4mm}
        (b) KL comparison on Dist1 \hspace{-0mm} \hspace{-2mm}\\   
        \hspace{-5mm}
        \raisebox{0.1\height}{\includegraphics[height=2.5cm]{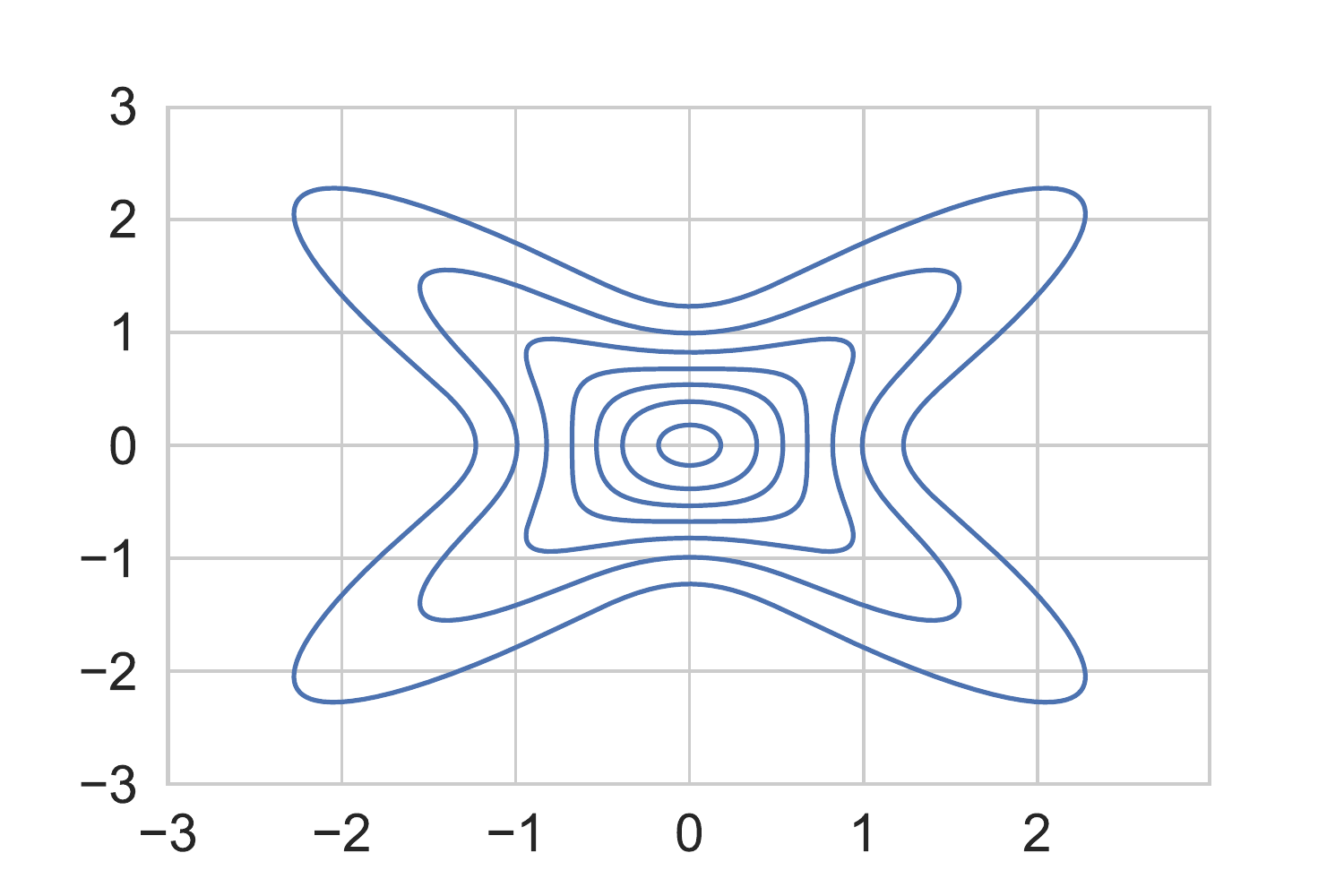}} &
        \hspace{-4mm}
        \includegraphics[width=4.2cm]{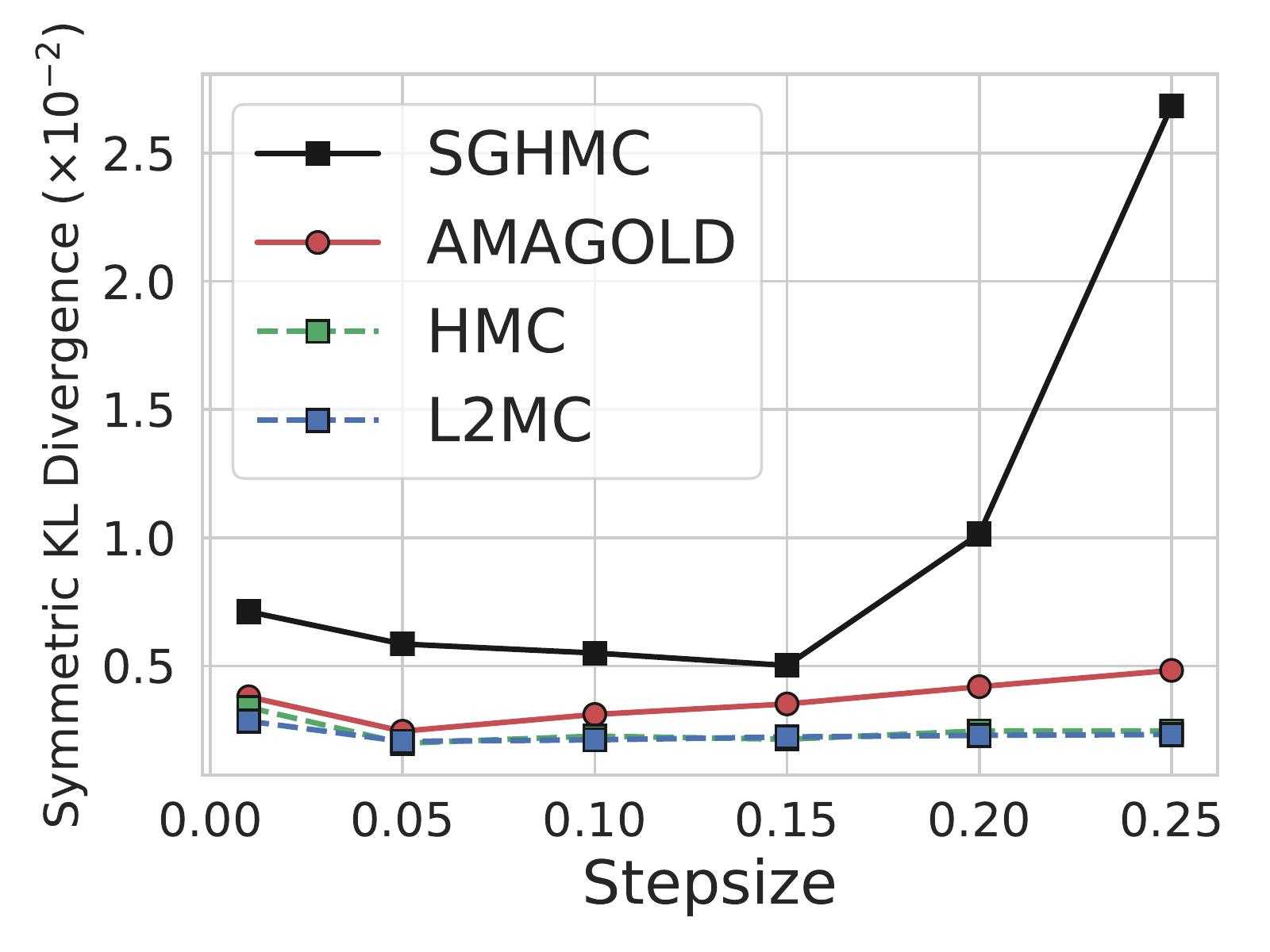} \\
        (c) Dist2 \vspace{-0mm} & \hspace{-4mm}
        (d) KL comparison on Dist2 \hspace{-0mm} \hspace{-2mm}\\   
    \end{tabular}
    \vspace{-0mm}
    \caption{AMAGOLD's performance against baselines. In (b) and (d) the step size varies from $0.01$ to $0.25$; the symmetric KL divergence is a function of step size.}
    \vspace{-0mm}
    \label{fig:synthetic}
\end{figure}
\vspace{-.1cm}
\begin{figure}[t!]
    \centering
    \includegraphics[width=5cm, height=3cm]{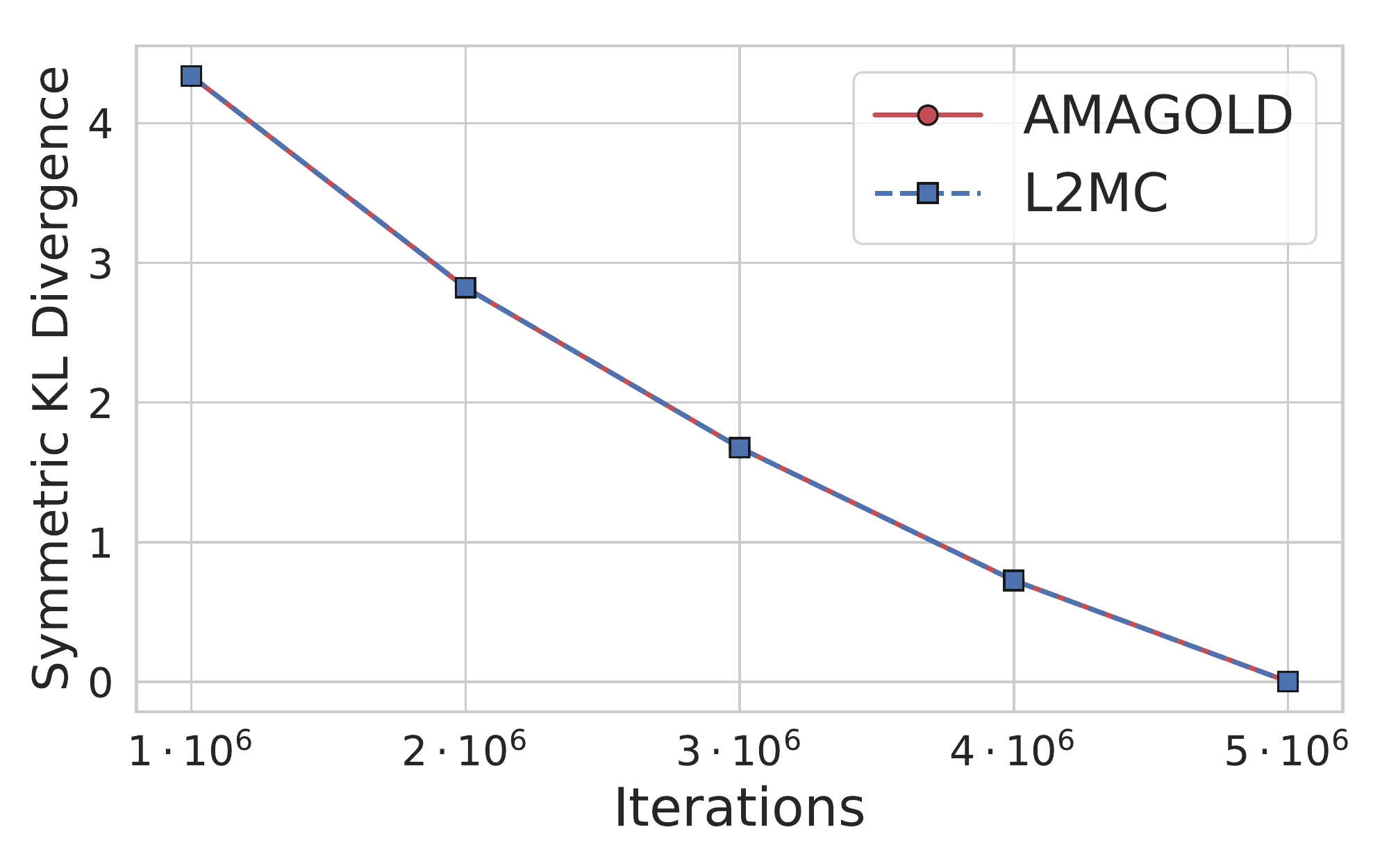} 
    \caption{The convergence speed (symmetric KL divergence as a function of iterations) of AMAGOLD compared to L2MC on Dist1 with step size 0.15.}
    \label{fig:syntheticcompare}
    \vspace{-5mm}
\end{figure}
We conduct experiments on synthetic two-dimensional distributions (Figures~\ref{fig:synthetic}a and~\ref{fig:synthetic}c), which are adapted from \citep{yin2018semi}. The analytical expressions are in Appendix~\ref{sec:append:dims}. We compare our algorithm against three baselines: (1) HMC, (2) L2MC with amortized M-H correction, and (3) SGHMC. HMC and L2MC serve as non-stochastic, unbiased baselines. As in \citet{chen2014stochastic}, we replace $\nabla U$ by stochastic estimates $\nabla \tilde U = \nabla U + \mathcal{N}(0, I)$ for the stochastic methods. We draw $5\times 10^6$ samples and use symmetric KL divergence as a function of step size to quantitatively evaluate the convergence of the Markov chain. On both distributions, AMAGOLD's symmetric KL divergence is close to full-batch methods and is much lower than SGHMC's, especially when the step size is large. This validates our theory that AMAGOLD is unbiased, while SGHMC's bias increases with step size. See Appendix \ref{sec:append:kl} for more details.

We then verify the theory that AMAGOLD has a comparable convergence rate to L2MC while using stochastic gradient estimates. Specifically, in Figure \ref{fig:syntheticcompare} AMAGOLD's convergence rate is the same as L2MC's (up to a constant factor slowdown of about $10^{-3}$). We include runtime comparisons in Appendix~\ref{sec:append:runtime}.

\subsection{Bayesian Logistic Regression on Real-World Data}\label{sec:rw}

We evaluate our method on Bayesian logistic regression using two real-world datasets: \emph{Australian}  and \emph{Heart} (Figure~\ref{fig:rw}). We compute the MSE between the estimated and true parameters, obtained from $10^7$ samples from HMC as in \citet{li2016preconditioned}. AMAGOLD exhibits smaller error than SGHMC on varying step sizes. We show runtime comparisons with step size $10^{-4}$. Compared to full-batch HMC and L2MC, AMAGOLD is significantly faster due to minibatching. It is also not much slower than SGHMC, indicating AMA can reduce the cost of adding the M-H step. AMAGOLD's large error using a large step size is due to a drop in M-H acceptance probability (Appendix~\ref{sec:append:lr}). However, this drop can be easily avoided in practice. One can either set the step size such that it achieves a reasonable acceptance rate (usually 20--80\%, depending on the application) or use the tuning technique in Section \ref{sec:practice}. With a reasonable acceptance rate, AMAGOLD achieves much lower error compared to SGHMC.

\begin{figure}[t]
    \vspace{-4mm}
    \begin{tabular}{c c}        
        \hspace{-.5cm}
        \centering
        \includegraphics[width=4cm]{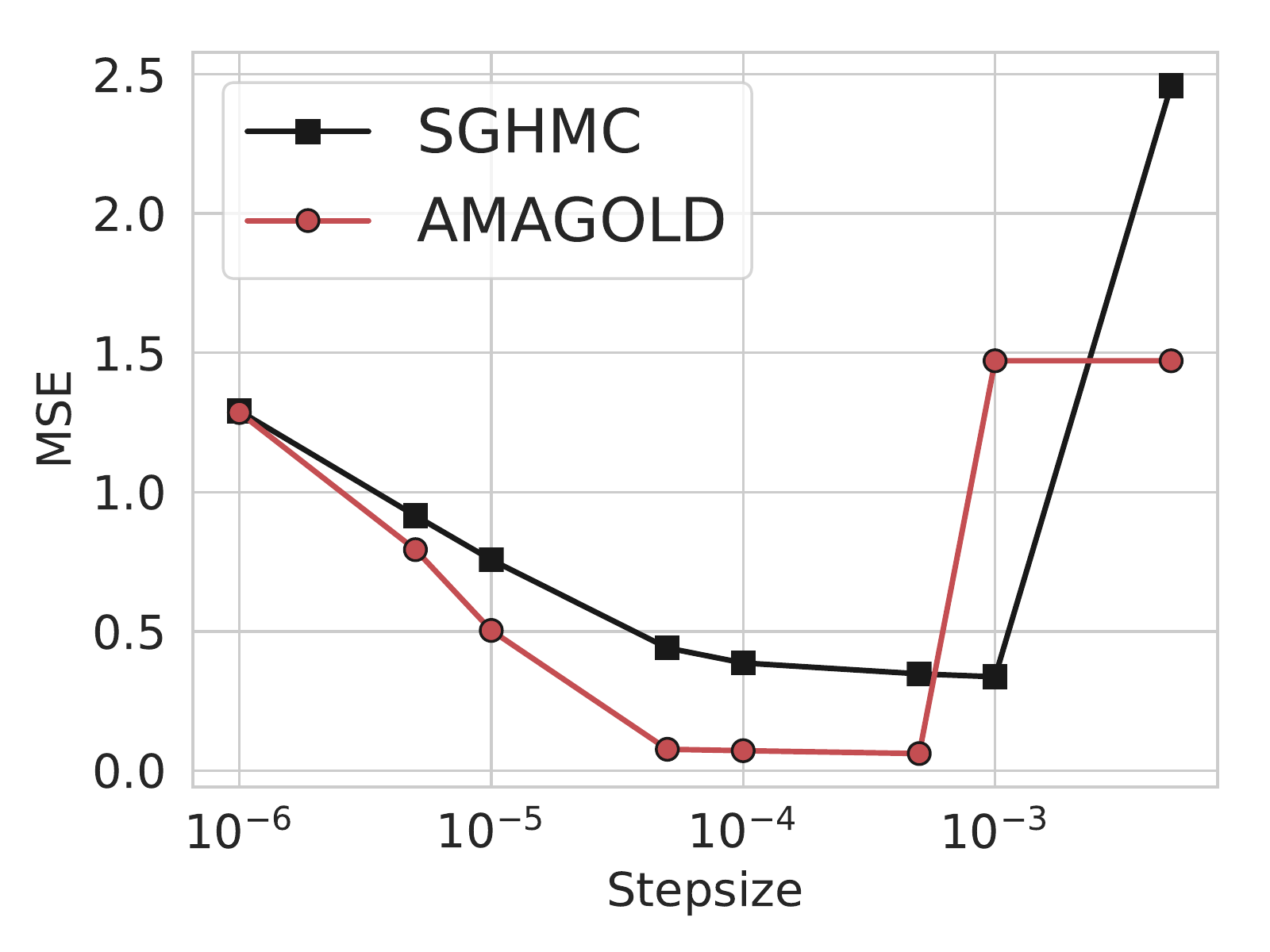} &
        \hspace{-6mm}
        \includegraphics[width=4cm]{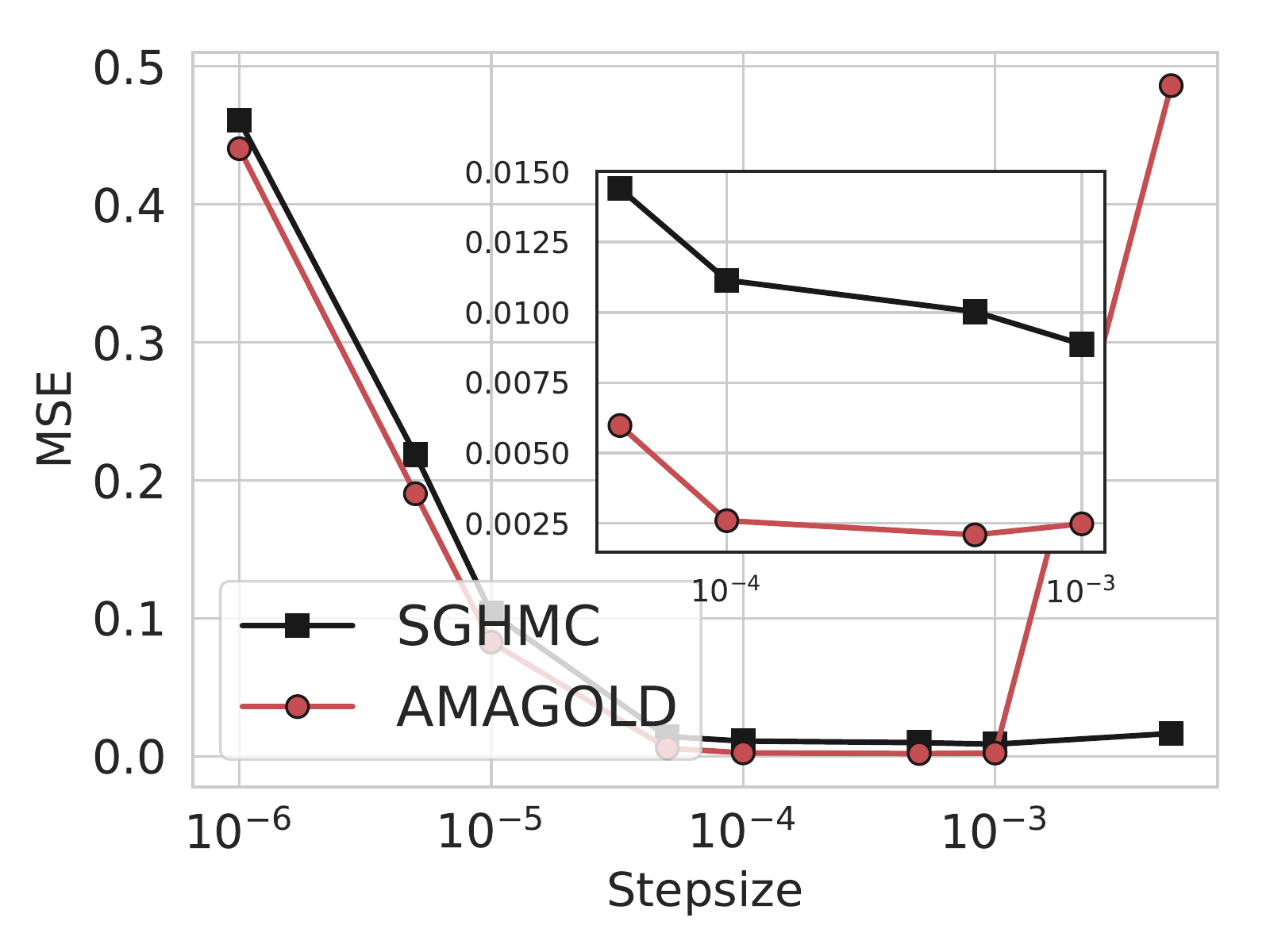}  \\
        (a) Australian \vspace{-0mm} & \hspace{-0mm}
        (b) Heart \hspace{-0mm} \hspace{-2mm}\\
        \hspace{-.5cm}
        \includegraphics[width=4cm]{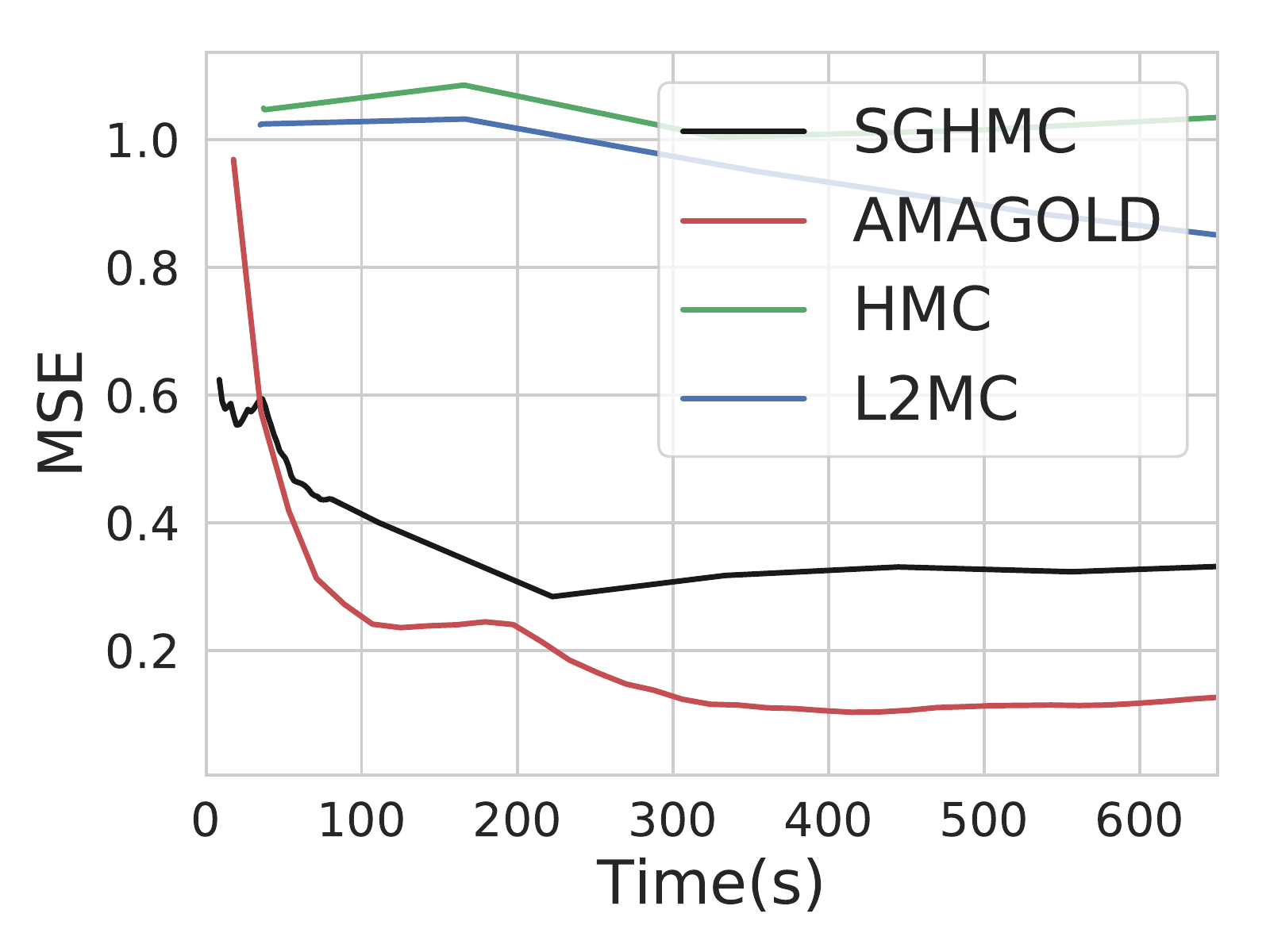} &
        \hspace{-4mm}
        \includegraphics[width=4cm]{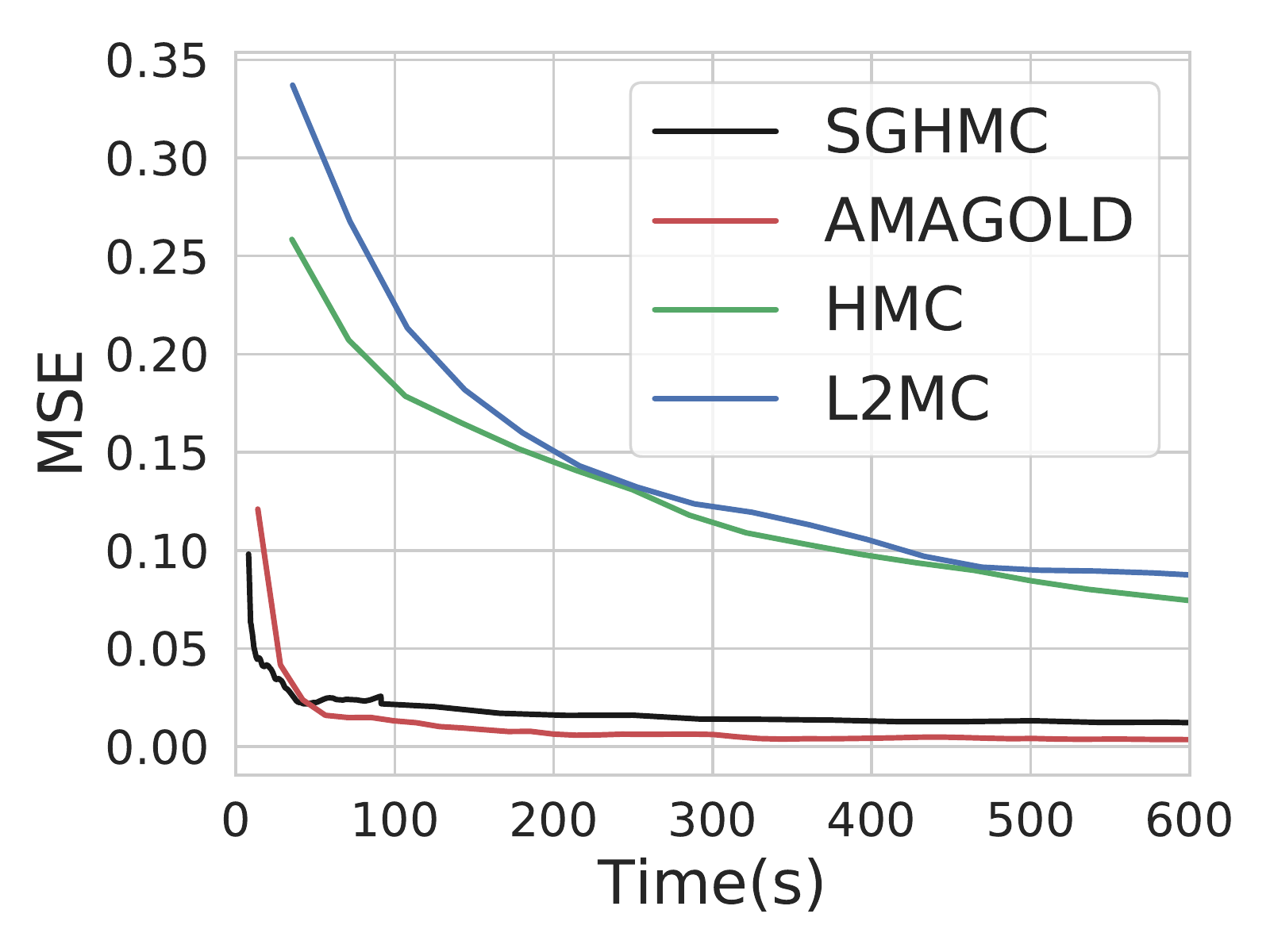} \\
        (c) Australian runtime \vspace{-0mm} & \hspace{-4mm}
        (d) Heart runtime \hspace{-0mm} \hspace{-2mm}\\ 
    \end{tabular}
    \vspace{-0mm}
    \caption{We use two real-world datasets (a) \emph{Australian} (15 covariates, 690 data points) and (b) \emph{Heart} (14 covariates, 270 data points). The minibatch size is 32 and 16, respectively. We collect $5\times 10^6$ samples and test step size varying from $10^{-6}$ to $5\times 10^{-3}$.}
    \label{fig:rw}
\end{figure}

\subsection{Bayesian Neural Networks}\label{sec:bnn}
We apply AMAGOLD on Bayesian neural networks. The architecture is a MLP with two-layer with RELU non-linearities. The dataset size is 60000 and we use minibatch size 2000. We use irreversible AMAGOLD since we find it gives better results. Similar to \citet{zhang2019cyclical}, to speed up the convergence of the sampling methods, we use SGD with momentum in the first 3 epochs as burn-in and then switch to either SGHMC or AMAGOLD.

\begin{table}
\centering
\begin{tabular}{@{}lrrr@{}} \toprule
Algorithm & $b$ & $h=0.0005$ & $h=0.001$\\ \midrule
SGHMC & 0.01  &3.69$\pm$0.03   &3.77$\pm$0.17    \\
SGHMC & 5e-6 &89.95$\pm$0.29    &89.70$\pm$0.91 \\ 
AMAGOLD & 0.01  &3.63$\pm$0.04   &3.65$\pm$0.08       \\
AMAGOLD & 5e-6 &3.65$\pm$0.10   &3.63$\pm$0.10 \\
\bottomrule
\end{tabular}
\caption{Comparison between AMAGOLD and SGHMC of test error (\%) $\pm$ standard error. We collect 20 samples in total.}
\label{tab:mnist}
\vspace{-.4cm}
\end{table}

\paragraph{Classification}
We evaluate the classification accuracy of AMAGOLD and SGHMC. As in \citet{chen2014stochastic}, we reparameterize our algorithm, setting $v=\epsilon\sigma^{-2}r, b=\epsilon\beta$ and $h=\epsilon^2\sigma^{-2}$ (Appendix~\ref{sec:append:reform}). This equivalent two-parameter reformulated update is similar to SGD with momentum and thus more easily tuned on DNNs. Table~\ref{tab:mnist} shows the test error on various hyperparameter settings. AMAGOLD yields consistent test error, regardless of the hyperparameter values. In contrast, the performance of SGHMC is affected significantly by the hyperparameters. When $b$ is small, SGHMC diverges. Similar performance of SGHMC has also been reported in \citet{ding2014bayesian}.

\paragraph{Uncertainty Evaluation}
We evaluate the sampling performance in terms of uncertainty evaluation, which is important in many ML applications \citep{lakshminarayanan2017simple,blundell2015weight}. We test predictive uncertainty estimation on out-of-distribution samples \citep{lakshminarayanan2017simple}. The 8 models in Table~\ref{tab:mnist} are tested on the notMNIST dataset \citep{notmnist}. Since the models have never seen the samples from notMNIST, ideally the predictive distribution should be uniform, which gives the maximum entropy. We plot the empirical CDF for the entropy of the predictive distribution (Figure~\ref{fig:uncertaintycdf}). AMAGOLD provides consistent uncertainty estimations on all settings, which aligns with the classification results. In contrast, when $b$ is small or $h$ is large, SGHMC performance suffers; it is overconfident about its prediction. 

Both of these experiments indicate that SGHMC is very sensitive to hyperparameters. It needs to be carefully tuned to achieve desired performance on classification and uncertainty estimation. In contrast, AMAGOLD is robust to various hyperparameter settings because it is guaranteed to converge to the target distribution.

\begin{figure}[ht!]
    \vspace{-2mm}
        \hspace{-5mm}
        \centering
        \includegraphics[width=6cm, height=3.75cm]{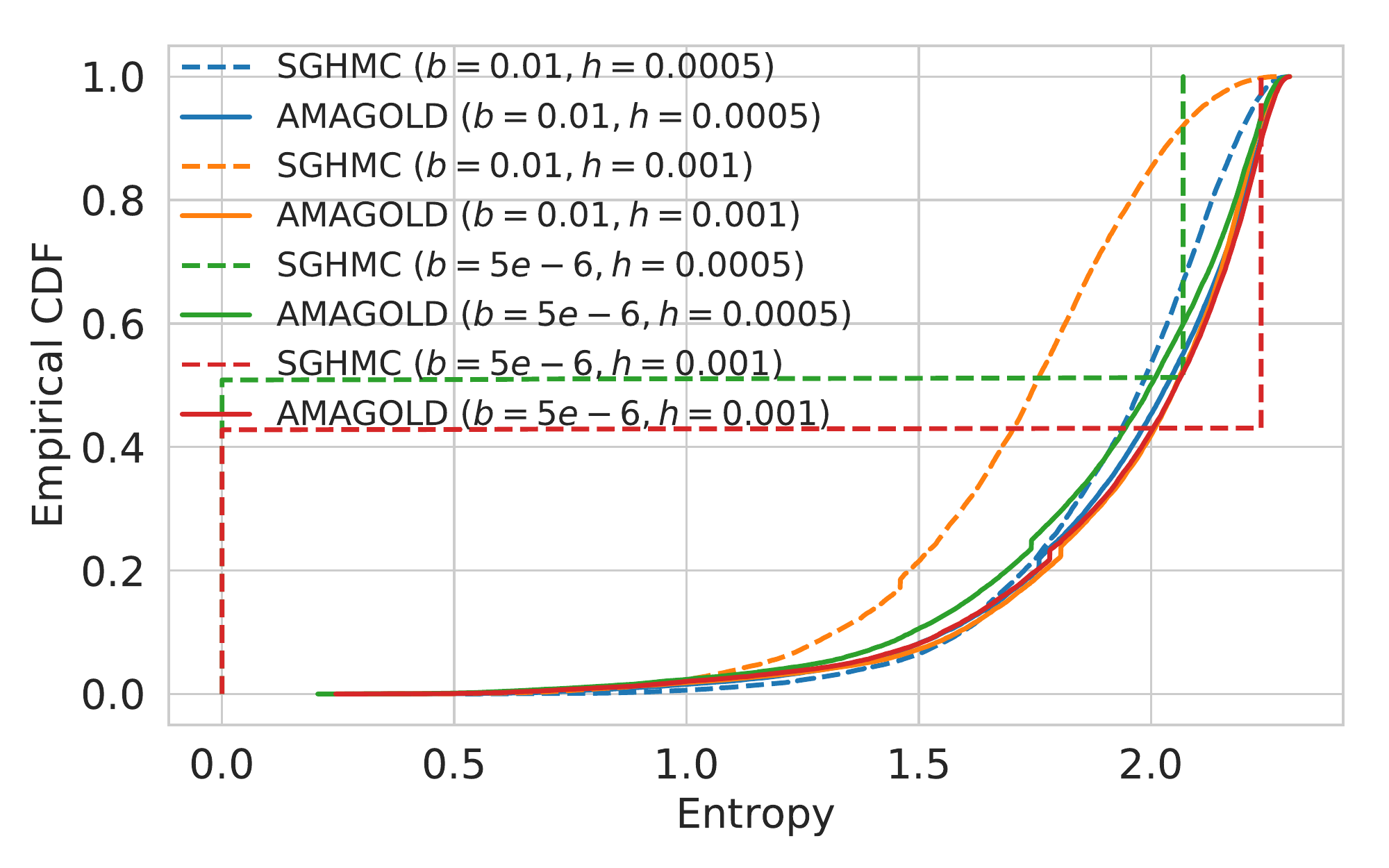}
        \hspace{-4mm}
    \vspace{-0mm}
    \caption{Empirical CDF on notMNIST dataset.}
    \vspace{-3mm}
    \label{fig:uncertaintycdf}
\end{figure}

\section{Conclusion}
Our work represents a first step toward unbiased, efficient second-order SG-MCMC. We introduced AMAGOLD, which achieves these goals by infrequently applying the computationally-expensive Metropolis-Hasting adjustment step, amortizing the cost across multiple algorithm steps. We prove this is sufficient for convergence to the target distribution, and provide reversible and non-reversible versions for practical use. AMAGOLD's convergence rate is theoretically guaranteed: the bound captures the trade-off between the speed-up from minibatching and the convergence rate. Lastly, our work is complementary to, rather than exclusive with, other research in stochastic MCMC. In future work it would be interesting to explore combining AMA with other SG-MCMC variants \citep{ding2014bayesian, zhang2017stochastic, ma2015complete} and minibatch M-H methods \citep{korattikara2014austerity, bardenet2014towards, maclaurin2015firefly, seita2016efficient}.

\section*{Acknowledgements}
This work was supported in part by Huawei Technologies Co., Ltd.
\bibliography{references}
\bibliographystyle{plainnat}

\newpage
\appendix
\onecolumn

{\Large\textbf{Appendix}}
\section{Background of SGHMC}\label{sec:append:sghmc}
SGHMC, Algorithm~\ref{alg:SGHMC}, reduces the computational cost of full-batch methods by using a stochastic gradient in lieu of $\nabla U$.
SGHMC estimates $U(\theta)$ by minibatch $\tilde{ \mathcal{D}}$. However, na\"ively replacing $U$ by $\tilde U$ in HMC will lead to divergence from the target distribution \citep{chen2014stochastic}. Therefore, SGHMC adds the additional friction term of L2MC to offset the noise introduced by using minibatch. That is, SGHMC is simulating the dynamics
\begin{align*}
d\theta = \sigma^{-2} r dt,
\end{align*}
\vspace{-2em}
\begin{align}
d r = - \nabla \tilde U (\theta)dt - 2\beta r dt + \mathcal{N}(0, 4(\beta-\hat\beta)\sigma^2 dt)   \label{eq:momentum}
\end{align}
where $\beta$ controls the friction term and $\hat\beta$ is an estimate of the stochastic gradient noise, which is often set to be zero in practice.

\section{Proof of Results in Section~\ref{sec:AMA}}\label{sec:append:AMA}

In this section, we will provide proofs of the results that we asserted in Section~\ref{sec:AMA} about reversibility and skew-reversibility of our algorithms.
First, for completeness we re-prove the fact that a skew-reversible chain has stationary distribution $\pi$, which is known, but not as well-known as the corresponding result for reversible chains.

\begin{lemma}
If $G$ is a skew-reversible chain, that is one that satisfies (\ref{eq:skew-detail-balance}), then $\pi$ is its stationary distribution.
\end{lemma}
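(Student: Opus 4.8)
The plan is to verify the stationarity identity $\int \pi(x)\, G(x,y)\, dx = \pi(y)$ directly, using the skew-detailed-balance condition to rewrite the integrand and then exploiting the involution to collapse the integral into a normalization constant.

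First I would start from the left-hand side $\int \pi(x)\, G(x,y)\, dx$ and apply the skew-reversibility condition (\ref{eq:skew-detail-balance}), replacing the integrand $\pi(x)\, G(x,y)$ by $\pi(y^\bot)\, G(y^\bot, x^\bot)$. Since $\pi(y^\bot)$ does not depend on the integration variable $x$, I can pull it outside, leaving $\pi(y^\bot) \int G(y^\bot, x^\bot)\, dx$.

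Next I would evaluate $\int G(y^\bot, x^\bot)\, dx$. Because the map $x \mapsto x^\bot$ is a measure-preserving involution, the change of variables $z = x^\bot$ leaves the reference measure invariant and sends $x$ ranging over the whole state space to $z$ ranging over the whole state space. Hence $\int G(y^\bot, x^\bot)\, dx = \int G(y^\bot, z)\, dz = 1$, where the final equality holds because $G$ is a transition probability operator and so normalizes to one in its second argument. Combining the pieces gives $\int \pi(x)\, G(x,y)\, dx = \pi(y^\bot)$, and by the assumed symmetry $\pi(y^\bot) = \pi(y)$ this equals $\pi(y)$, which is exactly stationarity.

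The computation is short; the only point demanding care is the use of the measure-preserving involution in the change of variables, which is precisely the property guaranteeing that the substitution $z = x^\bot$ preserves both the domain of integration and the underlying measure, so that nothing is lost in rewriting $\int G(y^\bot, x^\bot)\, dx$ as $\int G(y^\bot, z)\, dz$. I expect no substantive obstacle beyond stating this cleanly and noting that $G(x,\cdot)$ integrates to one, i.e.\ that $G$ is a genuine Markov kernel rather than merely sub-stochastic.
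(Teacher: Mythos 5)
Your proof is correct and follows essentially the same route as the paper's: apply skew-detailed balance, use the measure-preserving involution to re-index the integral over the second argument of $G$, invoke the fact that $G$ integrates to one, and finish with $\pi(y^\bot)=\pi(y)$. The only cosmetic difference is that the paper substitutes $\pi(y^\bot)=\pi(y)$ before summing rather than at the end, and writes the argument with sums instead of integrals.
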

\begin{proof}
Since $G$ is skew-reversibile, by definition it satisfies for any states $x$ and $y$ the conditions that $\pi(x) = \pi(x^{\bot})$ and
\[
\pi(x) G(x, y) = \pi(y^{\bot}) G(y^{\bot}, x^{\bot}).
\]
By combining these two we can easily get
\[
\pi(x) G(x, y) = \pi(y) G(y^{\bot}, x^{\bot}).
\]
Next, summing up over all $x$ in the while state space $\Omega$,
\[
\sum_{x \in \Omega} \pi(x) G(x, y) = \sum_{x \in \Omega} \pi(y) G(y^{\bot}, x^{\bot}) = \pi(y) \sum_{x \in \Omega} G(y^{\bot}, x^{\bot}).
\]
Since $\bot$ denotes an involution, it follows that summing up over $x$ for all $x$ in the state space is equal to summing up over all $x^{\bot}$, so
\[
\sum_{x \in \Omega} \pi(x) G(x, y) = \pi(y) \sum_{x \in \Omega} G(y^{\bot}, x) = \pi(y),
\]
where the last equality follows from the fact that for any Markov chain, the sum of the probabilities of transitioning into all states is always $1$.
So, we've shown that
\[
\sum_{x \in \Omega} \pi(x) G(x, y) = \pi(y)
\]
which can be written in matrix form as $\pi G = \pi$; this proves the lemma.
\end{proof}

\begin{lemma}
The amortized Metropolis-Hastings procedure described in Section~\ref{sec:AMA} using acceptance probability (\ref{eqn:amortmcmc}) results in a chain that is reversible with stationary distribution $\pi$.
\end{lemma}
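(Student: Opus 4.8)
The plan is to verify that the corrected chain $G$ satisfies the detailed balance condition (\ref{eq:detail-balance}); once that holds, the fact (noted immediately after (\ref{eq:detail-balance})) that detailed balance implies stationarity with respect to $\pi$ finishes the lemma. Since for $y \neq x$ the only way to move from $x$ to $y$ is to propose a $T$-step trajectory ending at $y$ and accept it, I would first write the off-diagonal transition density as an integral over all intermediate states and all noise draws,
\[
G(x,y) = \int \left(\prod_{t=0}^{T-1} p(\zeta_t)\, P(x_t,x_{t+1};\zeta_t)\right) \min(1,A)\, d\zeta_{0:T-1}\, dx_{1:T-1},
\]
where $x_0 = x$, $x_T = y$, $p$ is the density of the noise $\zeta_t$, and $A = \frac{\pi(y)}{\pi(x)}\prod_{t} \frac{P(x_{t+1},x_t;\zeta_t)}{P(x_t,x_{t+1};\zeta_t)}$ is exactly the ratio inside the $\min$ in (\ref{eqn:amortmcmc}). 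The case $y=x$ makes (\ref{eq:detail-balance}) symmetric automatically, so only $y\neq x$ needs checking.

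Second, I would prove a pointwise (``trajectory-level'') detailed balance identity before integrating. Fix a trajectory $(x_0,\ldots,x_T;\zeta_0,\ldots,\zeta_{T-1})$ with $x_0=x$, $x_T=y$, abbreviate its forward proposal weight by $F = \prod_t p(\zeta_t)P(x_t,x_{t+1};\zeta_t)$ and its reverse weight by $R = \prod_t p(\zeta_t)P(x_{t+1},x_t;\zeta_t)$, so that $A = \pi(y)R/(\pi(x)F)$. The standard Metropolis algebra $\pi(x)F\min(1,A)=\pi(y)R\min(1,1/A)$ then follows by splitting into the cases $A\le 1$ and $A>1$ and substituting the definition of $A$; this says precisely that a single trajectory contributes the same mass to the forward flux as its reversal does to the backward flux.

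The main step, and the place where the product form of (\ref{eqn:amortmcmc2}) is essential, is to match the integral defining $\pi(y)G(y,x)$ against that defining $\pi(x)G(x,y)$. For the reverse move I would reparameterize a trajectory from $y$ to $x$ by the time-reversed labels $\tilde x_s = x_{T-s}$ and $\tilde\zeta_s = \zeta_{T-1-s}$. This relabeling is merely a permutation of the integration variables, hence measure-preserving, and under the substitution $t=T-1-s$ the reverse proposal weight $\prod_s p(\tilde\zeta_s)P(\tilde x_s,\tilde x_{s+1};\tilde\zeta_s)$ collapses exactly to $R$ while the reverse acceptance factor becomes $\min(1,1/A)$. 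Combining this change of variables with the trajectory-level identity and integrating over all intermediate states and noise then yields $\pi(x)G(x,y)=\pi(y)G(y,x)$. I expect the only delicate bookkeeping to be checking index-by-index that time reversal sends each forward factor $P(x_t,x_{t+1};\zeta_t)$ to a reverse factor $P(\tilde x_{s+1},\tilde x_s;\tilde\zeta_s)$ and that the noise densities $p(\zeta_t)$ are invariant under the relabeling; both reduce to the substitution $t=T-1-s$, so no genuinely hard estimate is involved.
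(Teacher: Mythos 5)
Your proposal is correct and follows essentially the same route as the paper's proof: both write $G(x,y)$ as an integral of the proposal weight times the acceptance probability over intermediate states and noise, use the Metropolis identity $\pi(x)F\min(1,A)=\pi(y)R\min(1,1/A)$ (the paper phrases it as the symmetric form $\min(\pi(x)F,\pi(y)R)$), and conclude by the time-reversal substitution $x_t\mapsto x_{T-t}$ together with exchangeability of the i.i.d.\ noise variables. No substantive difference.
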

\begin{proof}
According to the algorithm described in Section~\ref{sec:AMA}, the probability density of transitioning from state $x$ to state $y \ne x$ via intermediate states $x = x_0, x_1, x_2, \ldots, x_{T-1}, x_T = y$ is
\[
    \Exv{ \tau \cdot \prod_{t=0}^{T-1} P(x_t, x_{t+1}; \zeta_t) },
\]
where the expected value here is taken over the randomness used to select the stochastic samples $\zeta_t$.
This follows from the law of total expectation.
This means that the total probability of transitioning from $x$ to $y \ne x$ is just the integral of this over the intermediate states
\[
    G(x,y) = \int \Exv{ \tau \cdot \prod_{t=0}^{T-1} P(x_t, x_{t+1}; \zeta_t) } \; dx_1 \cdot dx_2 \cdots dx_{T-1},
\]
Now substituting in the value of $\tau$ from (\ref{eqn:amortmcmc}) gives us
\begin{align*}
    G(x,y)
    &=
    \int \Exv{ 
        \min\left(1, \frac{\pi(y)}{\pi(x)} \prod_{t = 0}^{T-1} \frac{P(x_{t+1},x_t; \zeta_t)}{P(x_t,x_{t+1}; \zeta_t)}\right) 
        \cdot
        \prod_{t=0}^{T-1} P(x_t, x_{t+1}; \zeta_t)
    } \; dx_1 \cdot dx_2 \cdots dx_{T-1} \\
    &=
    \int \Exv{ 
        \min\left(
            \prod_{t=0}^{T-1} P(x_t, x_{t+1}; \zeta_t), 
            \frac{\pi(y)}{\pi(x)} \prod_{t = 0}^{T-1} P(x_{t+1},x_t; \zeta_t)
        \right)
    } \; dx_1 \cdot dx_2 \cdots dx_{T-1}.
\end{align*}
Multiplying both sides by $\pi(x)$,
\begin{align*}
    \pi(x)  G(x,y)
    &=
    \int \Exv{ 
        \min\left(
            \pi(x) \prod_{t=0}^{T-1} P(x_t, x_{t+1}; \zeta_t), 
            \pi(y) \prod_{t = 0}^{T-1} P(x_{t+1},x_t; \zeta_t)
        \right)
    } \; dx_1 \cdot dx_2 \cdots dx_{T-1}.
\end{align*}
From here, the fact that $G$ is reversible follows directly from a substitution of $x_t \mapsto x_{T-t}$ in the integral, combined with the observation that the $\zeta_t$ are i.i.d. and so exchangeable.
\end{proof}

\begin{lemma}
The amortized Metropolis-Hastings procedure for skew-reversible chains described in Section~\ref{sec:AMA} using acceptance probability (\ref{eqn:amortmcmcskew}) results in a chain that is skew-reversible with stationary distribution $\pi$, as long as $\pi$ satisfies $\pi(x) = \pi(x^\bot)$.
\end{lemma}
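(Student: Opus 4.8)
The plan is to mirror the template of the reversible case above, replacing the time-reversal of the trajectory with the involution-and-reversal map $x_t \mapsto x_{T-t}^{\bot}$. As before, I would write the density of proposing the trajectory $x = x_0, x_1, \ldots, x_T = y$ and accepting it as $\mathbb{E}\left[\tau \prod_{t=0}^{T-1} P(x_t, x_{t+1}; \zeta_t)\right]$, so that integrating over the interior states $x_1, \ldots, x_{T-1}$ gives the off-diagonal kernel $G(x,y)$. Substituting the skew acceptance probability~(\ref{eqn:amortmcmcskew}) and using $\min(1, a/b)\,b = \min(a,b)$ collapses $\tau$ together with the forward product; multiplying through by $\pi(x)$ then yields the symmetric expression
\[
\pi(x) G(x,y) = \int \mathbb{E}\left[ \min\left( \pi(x) \prod_{t=0}^{T-1} P(x_t, x_{t+1}; \zeta_t),\; \pi(y^{\bot}) \prod_{t=0}^{T-1} P(x_{t+1}^{\bot}, x_t^{\bot}; \zeta_t) \right) \right] dx_1 \cdots dx_{T-1}.
\]

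The core step is then a change of variables $w_t = x_{T-t}^{\bot}$ on the interior integration variables. Since $\bot$ is a measure-preserving involution, this substitution is volume-preserving and sends the endpoints to $w_0 = y^{\bot}$ and $w_T = x^{\bot}$. Tracking indices with $s = T-1-t$, each factor $P(x_t, x_{t+1}; \zeta_t)$ becomes $P(w_{s+1}^{\bot}, w_s^{\bot}; \zeta_{T-1-s})$ and each $P(x_{t+1}^{\bot}, x_t^{\bot}; \zeta_t)$ becomes $P(w_s, w_{s+1}; \zeta_{T-1-s})$; I would simultaneously relabel the i.i.d. noise via $\zeta_{T-1-s} \mapsto \zeta_s$, which leaves the expectation unchanged by exchangeability. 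After swapping the two symmetric arguments of the minimum, the integrand is exactly the one produced by running the same computation from $y^{\bot}$ to $x^{\bot}$ --- here using $(x^{\bot})^{\bot} = x$, so that the acceptance ratio for the reversed trajectory carries numerator $\pi((x^{\bot})^{\bot}) = \pi(x)$. This gives $\pi(x) G(x,y) = \pi(y^{\bot}) G(y^{\bot}, x^{\bot})$, which is precisely~(\ref{eq:skew-detail-balance}); invoking the first lemma of this section together with the hypothesis $\pi(x) = \pi(x^{\bot})$ then shows $\pi$ is stationary.

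I expect the main obstacle to be the index bookkeeping in the substitution: one must verify carefully that the involution-plus-reversal interchanges the forward and reverse transition products and swaps the two branches of the minimum, and that relabeling the $\zeta_t$ is legitimate (it is, since they are i.i.d. and hence exchangeable). A secondary point requiring care is the diagonal contribution: upon rejection the chain applies the involution (negating the momentum), so I would check separately that this flip move $x \mapsto x^{\bot}$ respects~(\ref{eq:skew-detail-balance}), which follows from $\pi(x) = \pi(x^{\bot})$ and the symmetry of the total rejection mass under the involution.
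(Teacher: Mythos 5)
Your proposal is correct and follows essentially the same route as the paper's proof: write the accepted-trajectory density, integrate out the interior states, absorb $\tau$ into the forward product via the $\min$ identity, multiply by $\pi(x)$ using $\pi(x)=\pi(x^{\bot})$, and apply the measure-preserving substitution $x_t \mapsto x_{T-t}^{\bot}$ together with exchangeability of the i.i.d.\ $\zeta_t$. Your explicit treatment of the diagonal (rejection) move $x \mapsto x^{\bot}$ is a small point of added care that the paper's proof leaves implicit, but it does not change the argument.
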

\begin{proof}
As above, the probability density of transitioning from state $x$ to state $y \ne x$ via intermediate states $x = x_0, x_1, x_2, \ldots, x_{T-1}, x_T = y$ is
\[
    \Exv{ \tau \cdot \prod_{t=0}^{T-1} P(x_t, x_{t+1}; \zeta_t) },
\]
where the expected value here is taken over the randomness used to select the stochastic samples $\zeta_t$.
This follows from the law of total expectation.
This means that the total probability of transitioning from $x$ to $y \ne x$ is just the integral of this over the intermediate states
\[
    G(x,y) = \int \Exv{ \tau \cdot \prod_{t=0}^{T-1} P(x_t, x_{t+1}; \zeta_t) } \; dx_1 \cdot dx_2 \cdots dx_{T-1},
\]
Now substituting in the value of $\tau$ from (\ref{eqn:amortmcmc}) gives us
\begin{align*}
    G(x,y)
    &=
    \int \Exv{ 
        \min\left(1, \frac{\pi(y)}{\pi(x)} \prod_{t = 0}^{T-1} \frac{P(x_{t+1}^\bot,x_t^\bot; \zeta_t)}{P(x_t,x_{t+1}; \zeta_t)}\right) 
        \cdot
        \prod_{t=0}^{T-1} P(x_t, x_{t+1}; \zeta_t)
    } \; dx_1 \cdot dx_2 \cdots dx_{T-1} \\
    &=
    \int \Exv{ 
        \min\left(
            \prod_{t=0}^{T-1} P(x_t, x_{t+1}; \zeta_t), 
            \frac{\pi(y)}{\pi(x)} \prod_{t = 0}^{T-1} P(x_{t+1}^\bot,x_t^\bot; \zeta_t)
        \right)
    } \; dx_1 \cdot dx_2 \cdots dx_{T-1}.
\end{align*}
Multiplying both sides by $\pi(x)$, and leveraging the fact that $\pi(x) = \pi(x^\bot)$,
\begin{align*}
    \pi(x)  G(x,y)
    &=
    \int \Exv{ 
        \min\left(
            \pi(x) \prod_{t=0}^{T-1} P(x_t, x_{t+1}; \zeta_t), 
            \pi(y^\bot) \prod_{t = 0}^{T-1} P(x_{t+1}^\bot,x_t^\bot; \zeta_t)
        \right)
    } \; dx_1 \cdot dx_2 \cdots dx_{T-1}.
\end{align*}
From here, the fact that $G$ is skew-reversible follows directly from a substitution of $x_t \mapsto x_{T-t}^\bot$ in the integral (which is a valid substitution without introducing an extra constant term because the involution $\bot$ is measure-preserving by assumption), combined with the observation that the $\zeta_t$ are i.i.d. and so exchangeable.
\end{proof}

\begin{lemma}
A skew-reversible chain will become reversible by resampling the momentum at the beginning of outer loop.
\end{lemma}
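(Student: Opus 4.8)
The plan is to model the momentum resampling as its own Markov kernel and show that composing it with the skew-reversible chain restores ordinary detailed balance. Write $R$ for the resampling kernel, which leaves the position unchanged and draws a fresh momentum from $\mathcal{N}(0,\sigma^2 I)$, i.e.\ $R((\theta,r),(\theta',r')) = \delta(\theta'-\theta)\,\phi(r')$ with $\phi(r)\propto \exp(-\|r\|^2/(2\sigma^2))$. Because the joint target factorizes as $\pi(\theta,r)\propto \exp(-U(\theta))\,\phi(r)$, resampling draws from the exact momentum conditional, so I would first record two elementary facts: (i) $R$ is itself reversible, $\pi(x)R(x,y)=\pi(y)R(y,x)$, immediate from the factorization and the delta forcing $\theta=\theta'$; and (ii) $R$ is blind to the momentum sign, $R(x,y)=R(x^\bot,y)=R(x,y^\bot)=R(x^\bot,y^\bot)$, since $R$ ignores the incoming momentum and $\phi(r)=\phi(-r)$.

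The key conceptual point I would stress is that the momentum negation $G$ performs on a rejected step is immediately overwritten by the next outer-loop resampling, so the flip is invisible to the resampled chain. Concretely, because $R$ is idempotent ($R^2=R$) and opens every outer loop, running the algorithm is, for the purpose of checking detailed balance, the same as iterating the symmetric kernel $\hat K(x,z)=\iint R(x,y)\,G(y,w)\,R(w,z)\,dy\,dw$: an extra trailing resample changes nothing, since the following iteration's leading resample would reproduce it identically. I would then verify $\pi(x)\hat K(x,z)=\pi(z)\hat K(z,x)$ directly. Starting from $\pi(x)\hat K(x,z)$, I push $\pi(x)$ through the first $R$ using (i), apply the skew-detailed-balance relation (\ref{eq:skew-detail-balance}) to the factor $\pi(y)G(y,w)=\pi(w^\bot)G(w^\bot,y^\bot)$, substitute $y\mapsto y^\bot$ and $w\mapsto w^\bot$ (a measure-preserving change of variables, as $\bot$ is a measure-preserving involution), and collapse the resulting flips with identity (ii); a final application of (i) to the remaining $R$ factor converts the expression into $\pi(z)\hat K(z,x)$.

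Equivalently — and this is the version I would include as a sanity check — one can integrate out the momentum entirely and show that the induced position kernel $\bar K(\theta,\xi)=\iint \phi(s)\,G((\theta,s),(\xi,t))\,ds\,dt$ satisfies $\pi_\Theta(\theta)\bar K(\theta,\xi)=\pi_\Theta(\xi)\bar K(\xi,\theta)$; here skew-reversibility again turns the integrand into its reverse-time counterpart, and the substitutions $s\mapsto -s$, $t\mapsto -t$ (valid because $\phi$ is even) finish the job. This makes transparent why the upgrade works: the auxiliary momentum is refreshed from its exact conditional at every step, so it carries no information across iterations and the sign convention inside $G$ drops out.

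The main obstacle is a bookkeeping subtlety rather than a hard estimate: the bare one-step kernels ``resample then step'' ($RG$) and ``step then resample'' ($GR$) are \emph{not} themselves self-adjoint with respect to $\pi$ — they only preserve $\pi$ — so one cannot simply assert reversibility of the naive composite. The crux is to recognize that resampling is idempotent and symmetric under the flip, which is precisely what permits inserting the second $R$ to form the symmetric $\hat K=RGR$ (equivalently, passing to the momentum marginal $\bar K$) and thereby obtaining genuine detailed balance. Getting this framing right, and confirming that the flip performed on rejection is neutralized by the refreshment, is the heart of the argument.
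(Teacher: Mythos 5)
Your proposal is correct, and your ``sanity check'' momentum-marginal version is essentially the paper's own proof: integrate the skew-detailed-balance identity against the resampling density $\phi$ and absorb the momentum sign flips via the substitution $r\mapsto -r$, $r^*\mapsto -r^*$, which is harmless because $\phi$ is even. Your primary $\hat K = RGR$ formulation is the same mechanism packaged on the joint state space; it is slightly more careful than the paper (which passes directly to the position-marginal kernel $H$ without remarking that the bare composite $RG$ need not be self-adjoint), but it is not a genuinely different route.
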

\begin{proof}
Assume the chain starts at $(\theta, r)$ and ends at $(\theta^*, r^*)$. By the skew-detailed balance, we have 
\[
\pi(\theta, r) G((\theta, r), (\theta^*, r^*)) = \pi(\theta^*, -r^*) G((\theta^*, -r^*), (\theta, -r))
\]
Since the momentum is resampled and is independent of $\theta$, we can integrate it and describe the chain in terms of $\theta$
\begin{align*}
    \pi(\theta)H(\theta,\theta^*) := \int \pi(\theta) \pi(r)G((\theta, r), (\theta^*, r^*))drdr^*
\end{align*}
Similarly, we have
\begin{align*}
    \pi(\theta^*)H(\theta^*,\theta) := \int \pi(\theta^*) \pi(-r^*)G((\theta^*, -r^*),(\theta, -r))dr^*dr
\end{align*}
By the skew-detailed balance we know
\[
\pi(\theta)H(\theta,\theta^*) = \pi(\theta^*)H(\theta^*,\theta)
\]
This proves the lemma.
\end{proof}

\section{Proof of Theorem \ref{thm:convergence}}\label{sec:thm1}
\begin{proof}
First, we consider resampling momentum in Algorithm~\ref{alg:IMA} and will show that the chain is reversible. We consider the probability of starting from $\theta$ and going through a particular sequence of $r_{t + \frac{1}{2}}$ and $\theta_t$ and arriving at $(\theta^*, r^*)$. We have $G(\theta, \theta^*)$, which is the transition probability from $\theta$ to $\theta^*$, as the following
\[
G(\theta, \theta^*) = \mathbf{E}\int \Probc{\theta_0, \theta_1, \ldots, \theta_{T-1}, \theta^*}{\theta, \tilde U_0, \ldots, \tilde U_{T-1}} \min(1, a({\bm \theta}))d\theta_0\cdots d\theta_{T-1}
\]
where ${\bm \theta} = \{\theta_0,\ldots, \theta_{T-1}, \theta^*\}$ and the expectation is taken over the stochastic energy function samples $\tilde U_t$.

Next, we want to derive the probability density in terms of $r$ and ${\bm \eta} = \{\eta_0,\ldots,\eta_{T-1}\}$. This involves a change of variables in the PDF formula. We notice that ${\bm \theta}$ is a bijective function of $r$ and ${\bm \eta}$. By the rule of change of variables, we know that
\begin{align*}
&\hspace{2em}\Probc{\theta_0, \theta_1, \ldots, \theta_{T-1}, \theta^*}{\theta, \tilde U_0, \ldots, \tilde U_{T-1}} \min(1, a({\bm \theta})) \\
&= \Probc{r, \eta_0, \ldots, \eta_{T-1}}{\theta, \tilde U_0, \ldots, \tilde U_{T-1}} \min(1, a({\bm \eta, r)})) det^{-1}(D_{({\bm \eta,r})} ({\bm \theta}))
\end{align*}
where $D_{({\bm \eta,r})} ({\bm \theta}, \theta^*)$ is the Jacobian matrix. 

To get this Jacobian matrix, we first apply the chain rule,
\begin{align*}
  D_{({\bm \eta,r})} ({\bm \theta}) = D_{{\bm r} } ({\bm \theta}) \cdot D_{({\bm \eta,r})} {\bm r}
\end{align*}
where ${\bm r} = \{r, r ,\ldots,r_{T-\frac{1}{2}}\}$.

Since the derivative of $\theta_t$ with respect to any $r_{s - \frac{1}{2}}$ for $s > t$ is zero, it follows that $D_{{\bm r} } ({\bm \theta})$ will be triangular, and so the determinant is just the product of the diagonal entries.
From our formula for the update rule,
\begin{align*}
    \theta_t &= \theta + \frac{1}{2}\epsilon\sigma^{-2}r_{t-\frac{1}{2}},\text{ for }t=0,T\\
    \theta_t &= \theta_{t-1} + \epsilon\sigma^{-2}r_{-\frac{1}{2}},\text{ for }t=1,\ldots,T-1.
\end{align*}
Therefore,
\begin{align*}
    \frac{\partial \theta_0}{\partial r_{t-\frac{1}{2}}} &= \frac{1}{2}\epsilon\sigma^{-2}I_d,\text{ for }t=0,T\\
    \frac{\partial \theta_t}{\partial r_{t-\frac{1}{2}}} &= \epsilon\sigma^{-2}I_d,\text{ for }t=1,\ldots,T-1.
\end{align*}
It follows that
\[
  \det \left(D_{{\bm r} } ({\bm \theta})\right) = \frac{1}{4^d}\left(\epsilon\sigma^{-2}\right)^{(T+1)d}.
\]
Similarly, the derivative of $\eta_t$ with respect to any $r_{s-\frac{1}{2}}$ for $s > t$ is zero, it follows that $D_{({\bm \eta,r})} {\bm r}$ will be triangular, and so the determinant is just the product of the diagonal entries.
That is,
\[
  D_{{\bm r}} ({\bm \eta,r})) = \prod_{t = 0}^{T-1} \frac{\partial \eta_t}{\partial r_{t+\frac{1}{2}}}.
\]
From our original formula for the update rule,
\[
  r_{t + \frac{1}{2}} = r_{t - \frac{1}{2}} - \epsilon \nabla \tilde U_t(\theta_t) - \epsilon \beta \left(r_{t - \frac{1}{2}} + r_{t + \frac{1}{2}}\right) + \eta_t
\]
we have
\[
  (1 + \epsilon \beta) r_{t + \frac{1}{2}} = r_{t - \frac{1}{2}} - \epsilon \nabla  \tilde U_t(\theta_t) - \epsilon \beta r_{t - \frac{1}{2}} + \eta_t,
\]
and so
\[
  \frac{\partial \eta_t}{\partial r_{t+\frac{1}{2}}}
  =
  (1 + \epsilon \beta)I_d.
\]
It follows that
\[
  \det \left(D_{({\bm \eta,r})} {\bm r}\right)
  =
  (1 + \epsilon \beta)^{-Td }
\]

Now we can get that
\begin{align*}
  \det\left( D_{({\bm \eta,r})} ({\bm \theta})\right) &= \det\left( D_{{\bm r} } ({\bm \theta}) \right)\cdot \det\left( D_{({\bm \eta,r})} {\bm r}\right) \\
  &=(1 + \epsilon \beta)^{-Td } \cdot \frac{1}{4^d}\left(\epsilon\sigma^{-2}\right)^{(T+1)d}
\end{align*}
Thus, 
\begin{align*}
G(\theta, \theta^*) &= \mathbf{E}\int \Probc{\theta_0, \theta_1, \ldots, \theta_{T-1}, \theta^*}{\theta, \tilde U_0, \ldots, \tilde U_{T-1}} \min(1, a({\bm \theta}))d\theta_0\cdots d\theta_{T-1}\\
&=
(1 + \epsilon \beta)^{Td } \cdot 4^d\left(\epsilon\sigma^{-2}\right)^{-(T+1)d}\mathbf{E}\int \Probc{r, \eta_0, \ldots, \eta_{T-1}}{\theta, \tilde U_0, \ldots, \tilde U_{T-1}} 
\\&\hspace{2em}\min(1, a({\bm \eta, r)})) d\theta_0\cdots d\theta_{T-1}
\end{align*}

By the distribution of $r$ and $\eta_t$, we know that
\begin{align*}
&\hspace{2em}\Probc{r, \eta_0, \eta_2, \ldots, \eta_{T - 2}}{\theta, \tilde U_0, \ldots, \tilde U_{T-1}}\\
  &=
  \left(2 \pi \sigma^2 \right)^{\frac{-d}{2}}
  \cdot
  \exp\left( -\frac{\norm{r}^2}{2 \sigma^2} \right)
  \cdot
  \prod_{t = 0}^{T - 1}
  \left(8 \pi \epsilon \beta \sigma^2 \right)^{\frac{-d}{2}}
  \cdot
  \exp\left( -\frac{\norm{\eta_t}^2}{8 \epsilon \beta \sigma^2} \right) \\
  &=
  \left(2 \pi \sigma^2 \right)^{\frac{-d}{2}}
  \cdot
  \left(8 \pi \epsilon \beta \sigma^2 \right)^{\frac{-T d}{2}}
  \cdot
  \exp\left( -\frac{\norm{r}^2}{2 \sigma^2} \right)
  \cdot
  \exp\left( -\frac{1}{8 \epsilon \beta \sigma^2} \sum_{t = 0}^{T - 1} \norm{\eta_t}^2\right).
\end{align*}

Notice that
\begin{align*}
  \sum_{t = 0}^{T - 1} \norm{\eta_t}^2
  &=
  \sum_{t = 0}^{T - 1} \norm{r_{t + \frac{1}{2}} - r_{t - \frac{1}{2}} + \epsilon   \nabla \tilde U_t(\theta_t) + \epsilon \beta \left(r_{t - \frac{1}{2}} + r_{t + \frac{1}{2}}\right)}^2 \\
  &=
  \sum_{t = 0}^{T - 1}
  \norm{r_{t + \frac{1}{2}} - r_{t - \frac{1}{2}} + \epsilon   \nabla \tilde U_t(\theta_t)}^2
  \\&\hspace{2em}+ 
  2 \epsilon \beta \left( r_{t + \frac{1}{2}} - r_{t - \frac{1}{2}} + \epsilon   \nabla \tilde U_t(\theta_t) \right)^T \left(r_{t - \frac{1}{2}} + r_{t + \frac{1}{2}}\right)
  +
  \epsilon^2 \beta^2 \norm{ r_{t - \frac{1}{2}} + r_{t + \frac{1}{2}} }^2 \\
  &=
  \sum_{t = 0}^{T - 1}
  \norm{r_{t + \frac{1}{2}} - r_{t - \frac{1}{2}} + \epsilon   \nabla \tilde U_t(\theta_t)}^2
  + 
  2 \epsilon \beta \left( \norm{ r_{t + \frac{1}{2}} }^2 - \norm{ r_{t - \frac{1}{2}} }^2 \right)
  \\&\hspace{2em}+
  2 \epsilon^2 \beta   \nabla \tilde U_t(\theta_t)^T \left(r_{t - \frac{1}{2}} + r_{t + \frac{1}{2}}\right)
  +
  \epsilon^2 \beta^2 \norm{ r_{t - \frac{1}{2}} + r_{t + \frac{1}{2}} }^2 \\
  &=
  2 \epsilon \beta \left( \norm{ r_{T - \frac{1}{2}} }^2 - \norm{ r}^2 \right)
  +
  \sum_{t = 0}^{T - 1}
  \norm{r_{t + \frac{1}{2}} - r_{t - \frac{1}{2}} + \epsilon   \nabla \tilde U_t(\theta_t)}^2
  \\&\hspace{2em}+
  4 \epsilon \beta \sigma^2 \left( \rho_{t + \frac{1}{2}} - \rho_{t - \frac{1}{2}} \right)
  +
  \epsilon^2 \beta^2 \norm{ r_{t - \frac{1}{2}} + r_{t + \frac{1}{2}} }^2 \\
  &=
  2 \epsilon \beta \left( \norm{ r^* }^2 - \norm{ r }^2 \right)
  +
  4 \epsilon \beta \sigma^2 \left( \rho_{T - \frac{1}{2}} - \rho_{-\frac{1}{2}} \right)
  \\&\hspace{2em}+
  \sum_{t = 0}^{T - 1}
  \norm{r_{t + \frac{1}{2}} - r_{t - \frac{1}{2}} + \epsilon   \nabla \tilde U_t(\theta_t)}^2
  +
  \epsilon^2 \beta^2 \norm{ r_{t - \frac{1}{2}} + r_{t + \frac{1}{2}} }^2.
\end{align*}

By substituting this above and recalling that $\rho_{-\frac{1}{2}} = 0$, 
\begin{align*}
G(\theta, \theta^*)
  &=
  (1 + \epsilon \beta)^{Td } \cdot 4^d\left(\epsilon\sigma^{-2}\right)^{-(T+1)d}\mathbf{E}\int \Probc{r, \eta_0, \eta_2, \ldots, \eta_{T - 2},\theta^*}{\theta, \tilde U_0, \ldots, \tilde U_{T-1}}
  \\&\hspace{2em}\min(1, a({\bm \eta}, r))d\theta_0\cdots d\theta_{T-1}\\
  &=
 (1 + \epsilon \beta)^{Td } \cdot 4^d\left(\epsilon\sigma^{-2}\right)^{-(T+1)d}\mathbf{E}\int
  \left(2 \pi \sigma^2 \right)^{\frac{-d}{2}}
  \cdot
  \left(8 \pi \epsilon \beta \sigma^2 \right)^{\frac{-T d}{2}}
  \cdot
  \exp\left( -\frac{\norm{r}^2}{2 \sigma^2} \right)
  \\&\hspace{2em}\cdot
  \exp\left( -\frac{1}{8 \epsilon \beta \sigma^2} \cdot 2 \epsilon \beta \left( \norm{ r^* }^2 - \norm{ r }^2 \right) \right)
  \\&\hspace{2em}\cdot
  \exp\left( -\frac{1}{8 \epsilon \beta \sigma^2} \cdot 4 \epsilon \beta \sigma^2 \left( \rho_{T - \frac{1}{2}} - \rho_{-\frac{1}{2}} \right) \right)
  \\&\hspace{2em}\cdot
  \exp\left( -\frac{1}{8 \epsilon \beta \sigma^2} \cdot \sum_{t = 0}^{T - 1}
  \norm{r_{t + \frac{1}{2}} - r_{t - \frac{1}{2}} + \epsilon   \nabla \tilde U_t(\theta_t)}^2
  +
  \epsilon^2 \beta^2 \norm{ r_{t - \frac{1}{2}} + r_{t + \frac{1}{2}} }^2 \right)
  \\&\hspace{2em}\cdot\min(1, a)d\theta_0\cdots d\theta_{T-1} \\
  &=
  (1 + \epsilon \beta)^{Td } \cdot 4^d\left(\epsilon\sigma^{-2}\right)^{-(T+1)d}\mathbf{E}\int
  \left(2 \pi \sigma^2 \right)^{\frac{-d}{2}}
  \cdot
  \left(8 \pi \epsilon \beta \sigma^2 \right)^{\frac{-T d}{2}}
  \\&\hspace{2em}\cdot
  \exp\left( -\frac{1}{4 \sigma^2} \left( \norm{ r^* }^2 + \norm{ r }^2 \right) \right)
  \cdot
  \exp\left( -\frac{1}{2} \rho_{T - \frac{1}{2}} \right)
  \\&\hspace{2em}\cdot
  \exp\left( -\frac{1}{8 \epsilon \beta \sigma^2} \cdot \sum_{t = 0}^{T - 1}
  \norm{r_{t + \frac{1}{2}} - r_{t - \frac{1}{2}} + \epsilon   \nabla \tilde U_t(\theta_t)}^2
  +
  \epsilon^2 \beta^2 \norm{ r_{t - \frac{1}{2}} + r_{t + \frac{1}{2}} }^2 \right)
  \\&\hspace{2em}\cdot\min(1, a)d\theta_0\cdots d\theta_{T-1} \\
\end{align*}
where ${\bm r}$ are to be understood as functions of the $\theta_t$, and the integral is taken over $\theta_t$.

Substituting the expression of $a$, then the term inside the integral is
\begin{align*}
  &\left(2 \pi \sigma^2 \right)^{\frac{-d}{2}}
  \cdot
  \left(8 \pi \epsilon \beta \sigma^2 \right)^{\frac{-T d}{2}}
  \cdot
  \exp\left( -\frac{1}{4 \sigma^2} \left( \norm{ r^*}^2 + \norm{ r }^2 \right) \right)
  \\&\hspace{2em}\cdot
  \exp\left( -\frac{1}{2} \rho_{T - \frac{1}{2}} \right)
  \\&\hspace{2em}\cdot
  \exp\left( -\frac{1}{8 \epsilon \beta \sigma^2} \cdot \sum_{t = 0}^{T - 1}
  \norm{r_{t + \frac{1}{2}} - r_{t - \frac{1}{2}} + \epsilon   \nabla \tilde U_t(\theta_t)}^2
  +
  \epsilon^2 \beta^2 \norm{ r_{t - \frac{1}{2}} + r_{t + \frac{1}{2}} }^2 \right)
  \\&\hspace{2em}\cdot
  \min\left(1, \exp\left( U(\theta) - U(\theta^*) + \rho_{T - \frac{1}{2}} \right) \right) \\
  &=
  \left(2 \pi \sigma^2 \right)^{\frac{-d}{2}}
  \cdot
  \left(8 \pi \epsilon \beta \sigma^2 \right)^{\frac{-T d}{2}}
  \cdot
  \exp\left( -\frac{1}{4 \sigma^2} \left( \norm{ r^* }^2 + \norm{ r}^2 \right) \right)
  \\&\hspace{2em}\cdot
  \exp\left( -\frac{1}{8 \epsilon \beta \sigma^2} \cdot \sum_{t = 0}^{T - 1}
  \norm{r_{t + \frac{1}{2}} - r_{t - \frac{1}{2}} + \epsilon   \nabla \tilde U_t(\theta_t)}^2
  +
  \epsilon^2 \beta^2 \norm{ r_{t - \frac{1}{2}} + r_{t + \frac{1}{2}} }^2 \right)
  \\&\hspace{2em}\cdot
  \exp\left( U(\theta) \right)
  \cdot
  \min\left(\exp\left( -U(\theta) - \frac{1}{2} \rho_{T - \frac{1}{2}} \right), \exp\left( - U(\theta^*) + \frac{1}{2} \rho_{T - \frac{1}{2}} \right) \right).
\end{align*}
Finally, this probability multiplied by the probability of $\theta_0$, which is $\frac{1}{Z} \exp\left( -U(\theta) \right)$, is
\begin{align*}
  &\pi(\theta) G(\theta, \theta^*)\\
  &=\frac{1}{Z} \cdot (1 + \epsilon \beta)^{Td } \cdot 4^d\left(\epsilon\sigma^{-2}\right)^{-(T+1)d}
  \\&\hspace{2em}\mathbf{E}\int \left(2 \pi \sigma^2 \right)^{\frac{-d}{2}}
  \cdot
  \left(8 \pi \epsilon \beta \sigma^2 \right)^{\frac{-T d}{2}}
  \cdot
  \exp\left( -\frac{1}{4 \sigma^2} \left( \norm{ r^* }^2 + \norm{ r}^2 \right) \right)
  \\&\hspace{2em}\cdot
  \exp\left( -\frac{1}{8 \epsilon \beta \sigma^2} \cdot \sum_{t = 0}^{T - 1}
  \norm{r_{t + \frac{1}{2}} - r_{t - \frac{1}{2}} + \epsilon   \nabla \tilde U_t(\theta_t)}^2
  +
  \epsilon^2 \beta^2 \norm{ r_{t - \frac{1}{2}} + r_{t + \frac{1}{2}} }^2 \right)
  \\&\hspace{2em}\cdot
  \exp\left( U(\theta) \right)
  \cdot
  \min\left(\exp\left( -U(\theta) - \frac{1}{2} \rho_{T - \frac{1}{2}} \right), \exp\left( - U(\theta^*) + \frac{1}{2} \rho_{T - \frac{1}{2}} \right) \right) d\theta_0\cdots d\theta_{T-1}\\
  &=
  \frac{1}{Z} \cdot (1 + \epsilon \beta)^{Td } \cdot 4^d\left(\epsilon\sigma^{-2}\right)^{-(T+1)d} \cdot \left(2 \pi \sigma^2 \right)^{\frac{-d}{2}}
  \cdot
  \left(8 \pi \epsilon \beta \sigma^2 \right)^{\frac{-T d}{2}}
  \cdot
  \\&\hspace{2em}\mathbf{E}\int \exp\left( -\frac{1}{4 \sigma^2} \left( \norm{ r^* }^2 + \norm{ r }^2 \right) \right)
  \\&\hspace{2em}\cdot
  \exp\left( -\frac{1}{8 \epsilon \beta \sigma^2} \cdot \sum_{t = 0}^{T - 1} \left(
  \norm{r_{t + \frac{1}{2}} - r_{t - \frac{1}{2}} + \epsilon   \nabla \tilde U_t(\theta_t)}^2
  +
  \epsilon^2 \beta^2 \norm{ r_{t - \frac{1}{2}} + r_{t + \frac{1}{2}} }^2 \right) \right)
  \\&\hspace{2em}\cdot
  \min\left(\exp\left( -U(\theta) - \frac{1}{2} \rho_{T - \frac{1}{2}} \right), \exp\left( - U(\theta^*) + \frac{1}{2} \rho_{T - \frac{1}{2}} \right) \right) d\theta_0\cdots d\theta_{T-1}\\
  &=
  \frac{1}{Z} \cdot (1 + \epsilon \beta)^{Td }\cdot \beta^{-\frac{Td}{2}} \cdot 2^{-\frac{3(T-1)d}{2}} \cdot \pi^{-\frac{(T+1)d}{2}} \cdot \epsilon^{-\frac{(3T+2)d}{2}} \cdot \sigma^{(T+1)d}
  \\&\hspace{2em}\cdot
  \mathbf{E}\int\exp\left( -\frac{1}{4 \sigma^2} \left( \norm{ r^*}^2 + \norm{ r }^2 \right) \right)
  \\&\hspace{2em}\cdot
  \exp\left( -\frac{1}{8 \epsilon \beta \sigma^2} \cdot \sum_{t = 0}^{T - 1} \left(
  \norm{r_{t + \frac{1}{2}} - r_{t - \frac{1}{2}} + \epsilon   \nabla \tilde U_t(\theta_t)}^2
  +
  \epsilon^2 \beta^2 \norm{ r_{t - \frac{1}{2}} + r_{t + \frac{1}{2}} }^2 \right) \right)
  \\&\hspace{2em}\cdot
  \exp\left( -\frac{U(\theta) + U(\theta^*)}{2} \right) 
  \cdot \exp\left( - \frac{1}{2} \Abs{ U(\theta) - U(\theta^*) + \rho_{T - \frac{1}{2}} } \right) d\theta_0\cdots d\theta_{T-1}.
\end{align*}
And writing this out explicitly in terms of
\[
  \rho_{T - \frac{1}{2}}
  =
  \frac{1}{2} \epsilon \sigma^{-2}   
  \sum_{t=0}^{T-1} 
  \nabla \tilde U_t(\theta_t)^T \left(r_{t - \frac{1}{2}} + r_{t + \frac{1}{2}} \right),
\]
we get
\begin{align*}
  &\pi(\theta) G(\theta, \theta^*)\\
  &=
  \frac{1}{Z} \cdot (1 + \epsilon \beta)^{Td }\cdot \beta^{-\frac{Td}{2}} \cdot 2^{-\frac{3(T-1)d}{2}} \cdot \pi^{-\frac{(T+1)d}{2}} \cdot \epsilon^{-\frac{(3T+2)d}{2}} \cdot \sigma^{(T+1)d}
  \\&\hspace{2em}\cdot
  \mathbf{E}\int
  \exp\left( -\frac{1}{4 \sigma^2} \left( \norm{ r^* }^2 + \norm{ r }^2 \right) \right)
  \\&\hspace{2em}\cdot
  \exp\left( -\frac{1}{8 \epsilon \beta \sigma^2} \cdot \sum_{t = 0}^{T - 1} \left(
  \norm{r_{t + \frac{1}{2}} - r_{t - \frac{1}{2}} + \epsilon   \nabla \tilde U_t(\theta_t)}^2
  +
  \epsilon^2 \beta^2 \norm{ r_{t - \frac{1}{2}} + r_{t + \frac{1}{2}} }^2 \right) \right)
  \\&\hspace{2em}\cdot
  \exp\left( -\frac{U(\theta) + U(\theta^*)}{2} \right) 
  \\&\hspace{2em}\cdot 
  \exp\left( - \frac{1}{2} \Abs{ U(\theta) - U(\theta^*) + \frac{1}{2} \epsilon \sigma^{-2}   
  \sum_{t=0}^{T-1} 
  \nabla \tilde U_t(\theta_t)^T \left(r_{t - \frac{1}{2}} + r_{t + \frac{1}{2}} \right) } \right) d\theta_0\cdots d\theta_{T-1}.
\end{align*}

Now, for this forward path from $\theta$ to $\theta^*$, consider the reverse leapfrog trajectory from $\theta^*$ to $\theta$.
This trajectory will have the same values for $\theta, \theta_0, \ldots, \theta^*$ in the reversed order and will have negated values for $r_{-\frac{1}{2}}, r_{1 - \frac{1}{2}}, \ldots, r_{T - \frac{1}{2}}$ in the reversed order again.
Because of this negation, the values of $\rho_{\frac{1}{2}}, \rho_{1 + \frac{1}{2}}, \ldots, \rho_{T - \frac{1}{2}}$ will also be negated. It follows that $\pi(\theta^*) G(\theta^*, \theta)$ will have the same expression.

Therefore,
\begin{align*}
\pi(\theta) G(\theta, \theta^*) =\pi(\theta^*) G(\theta^*, \theta).
\end{align*}
This shows that Algorithm~\ref{alg:IMA} with resampling momentum is reversible.

Now we show that the chain satisfies \emph{skew detailed balance} and the stationary distribution of $\theta$ is $\pi(\theta)$ if not resampling momentum. Skew detailed balance means that the chain satisfies the following condition \citep{turitsyn2011irreversible}
\[
\pi(x) G(x, y) = \pi\left(y^{\perp}\right) G\left(y^{\perp}, x^{\perp}\right)
\]
where $G$ is the transition probability. 

By Section~\ref{sec:append:AMA}, we know that a chain that satisfies the above condition will have invariant distribution $\pi(x)$.

In our setting, $x = (\theta, r)$ and $x^{\perp} = (\theta, -r)$. Given this, the skew detailed balance is
\[
\pi(\theta, r) G((\theta, r),(\theta^*, r^*)) = \pi(\theta^*, -r^*) G((\theta^*, -r^*), (\theta, -r)).
\]
Next we will show that Algorithm~\ref{alg:IMA} without resampling momentum satisfies the above condition and it naturally follows that Algorithm~\ref{alg:IMA} without resampling converges to the desired distribution.

We consider the joint distribution of $(\theta, r)$. By a similar analysis of resampling case, we can get that
\begin{align*}
  &\hspace{-2em}\pi(\theta, r) \cdot  G((\theta, r),(\theta^*, r^*)) \\
  &=
  \frac{1}{Z} \cdot (1 + \epsilon \beta)^{Td }\cdot \beta^{-\frac{Td}{2}} \cdot 2^{-\frac{3(T-1)d}{2}} \cdot \pi^{-\frac{(T+1)d}{2}} \cdot \epsilon^{-\frac{(3T+2)d}{2}} \cdot \sigma^{(T+1)d}
  \\&\hspace{2em}\cdot
  \mathbf{E}\int
  \exp\left( -\frac{1}{4 \sigma^2} \left( \norm{ r^* }^2 + \norm{ r }^2 \right) \right)
  \\&\hspace{2em}\cdot
  \exp\left( -\frac{1}{8 \epsilon \beta \sigma^2} \cdot \sum_{t = 0}^{T - 1} \left(
  \norm{r_{t + \frac{1}{2}} - r_{t - \frac{1}{2}} + \epsilon   \nabla \tilde U_t(\theta_t)}^2
  +
  \epsilon^2 \beta^2 \norm{ r_{t - \frac{1}{2}} + r_{t + \frac{1}{2}} }^2 \right) \right)
  \\&\hspace{2em}\cdot
  \exp\left( -\frac{U(\theta) + U(\theta^*)}{2} \right) 
  \\&\hspace{2em}\cdot 
  \exp\left( - \frac{1}{2} \Abs{ U(\theta) - U(\theta^*) + \frac{1}{2} \epsilon \sigma^{-2}
  \sum_{t=0}^{T-1}   
  \nabla \tilde U_t(\theta_t)^T \left(r_{t - \frac{1}{2}} + r_{t + \frac{1}{2}} \right) } \right) d\theta_0\cdots d\theta_{T-1}.
\end{align*}
Again, the reverse trajectory will have the same values for $\theta, \theta_0, \ldots, \theta^*$ and will have negated values for $r_{-\frac{1}{2}}, r_{1 - \frac{1}{2}}, \ldots, r_{T - \frac{1}{2}}$ in the reversed order. Therefore, $\pi(\theta^*, -r^*) G((\theta^*, -r^*), (\theta, -r))$ will have the same expression.

It follows that
\begin{align*}
\pi(\theta, r) G((\theta, r),(\theta^*, r^*)) = \pi(\theta^*, -r^*) G((\theta^*, -r^*), (\theta, -r))
\end{align*}
which is what we want.
\end{proof}

\section{Connection to HMC}\label{sec:relation-hmc}
When using a full-batch, $\beta=0$, and resampling, AMAGOLD becomes HMC. To see this, we first notice that with $\beta = 0$ the update rules of $\theta$ and $r$ are the same as in HMC. The remaining thing is to show $a$ is also the same as in HMC. We rewrite $\rho_{t+\frac{1}{2}}$ as
\begin{align*}
    \rho_{t+\frac{1}{2}} &= \rho_{t-\frac{1}{2}} + \frac{1}{2}\sigma^{-2}\left(r_{t-\frac{1}{2}} - r_{t+\frac{1}{2}}\right)^T\left(r_{t-\frac{1}{2}} + r_{t+\frac{1}{2}}\right)
    = \rho_{t-\frac{1}{2}} + \frac{1}{2}\sigma^{-2}\left(\norm{r_{t-\frac{1}{2}}}^2 - \norm{r_{t+\frac{1}{2}}}^2\right)
\end{align*}
As a result,
\begin{align*}
\rho_{T-\frac{1}{2}} &=
\frac{1}{2}\sigma^{-2}\sum_{t=0}^{T-1} \left(\norm{r_{t-\frac{1}{2}}}^2 - \norm{r_{t+\frac{1}{2}}}^2\right)
= \frac{1}{2}\sigma^{-2}\left(\norm{r_{-\frac{1}{2}}}^2 - \norm{r_{T-\frac{1}{2}}}^2\right)
\end{align*}
It follows that $a$ becomes the same as in HMC.

\section{Proof of Theorem \ref{thm:convergence-rate}}

In this section we prove a bound on the convergence rate of AMAGOLD as compared with second-order Langevin dynamics (L2MC).

\begin{proof}
We start with the expression we derived for the transition probability in the proof of reversibility.
\begin{align*}
  &\pi(\theta) G(\theta, \theta^*)\\
  &=
  \frac{1}{Z} \cdot (1 + \epsilon \beta)^{Td }\cdot \beta^{-\frac{Td}{2}} \cdot 2^{-\frac{(3T-1)d}{2}} \cdot \pi^{-\frac{(T+1)d}{2}} \cdot \epsilon^{-\frac{(3T+2)d}{2}} \cdot \sigma^{(T+1)d}
  \\&\hspace{2em}\cdot
  \mathbf{E}\int
  \exp\left( -\frac{1}{4 \sigma^2} \left( \norm{ r^* }^2 + \norm{ r }^2 \right) \right)
  \\&\hspace{2em}\cdot
  \exp\left( -\frac{1}{8 \epsilon \beta \sigma^2} \cdot \sum_{t = 0}^{T - 1} \left(
  \norm{r_{t + \frac{1}{2}} - r_{t - \frac{1}{2}} + \epsilon \nabla \tilde U_t(\theta_t)}^2
  +
  c^2\epsilon^2 \beta^2 \norm{ r_{t - \frac{1}{2}} + r_{t + \frac{1}{2}} }^2 \right) \right)
  \\&\hspace{2em}\cdot
  \exp\left( -\frac{U(\theta) + U(\theta^*)}{2} \right) 
  \\&\hspace{2em}\cdot 
  \exp\left( - \frac{1}{2} \Abs{ U(\theta) - U(\theta^*) + \frac{1}{2} \epsilon \sigma^{-2}
  \sum_{t=0}^{T-1} 
  \nabla \tilde U_t(\theta_t)^T \left(r_{t - \frac{1}{2}} + r_{t + \frac{1}{2}} \right) } \right)
  d\theta_0\cdots d\theta_{T-1}.
\end{align*}
Since
\[
    \theta_t = \theta_{t-1} + \epsilon \sigma^{-2} r_{t-\frac{1}{2}},
\]
if we define $\theta_{-1}$ and $\theta_{T}$ by convention such that
\[
    \frac{\theta_{-1} + \theta_0}{2} = \theta
    \hspace{2em}\text{and}\hspace{2em}
    \frac{\theta_{T-1} + \theta_T}{2} = \theta^*,
\]
it follows that for all $t \in \{0,\ldots,T-1\}$
\[
    r_{t + \frac{1}{2}} - r_{t - \frac{1}{2}}
    =
    \epsilon^{-1} \sigma^{2} \left( \theta_{t+1} - 2 \theta_{t} + \theta_{t-1} \right)
\]
and
\[
    r_{t + \frac{1}{2}} + r_{t - \frac{1}{2}}
    \epsilon^{-1} \sigma^{2} \left( \theta_{t+1} - \theta_{t-1} \right)
\]
so we can write the above transition probability explicitly in terms of the $\theta_t$ as
\begin{align*}
  &\pi(\theta) G(\theta, \theta^*)\\
  &=
  \frac{1}{Z} \cdot (1 + \epsilon \beta)^{Td }\cdot \beta^{-\frac{Td}{2}} \cdot 2^{-\frac{(3T-1)d}{2}} \cdot \pi^{-\frac{(T+1)d}{2}} \cdot \epsilon^{-\frac{(3T+2)d}{2}} \cdot \sigma^{(T+1)d}
  \\&\hspace{2em}\cdot
  \mathbf{E}\int
  \exp\left( -\frac{c}{4 \sigma^2} \left( \norm{ r^* }^2 + \norm{ r }^2 \right) \right)
  \\&\hspace{2em}\cdot
  \exp\Bigg( -\frac{1}{8 \epsilon \beta \sigma^2} \cdot \sum_{t = 0}^{T - 1} \Bigg(
  \norm{\epsilon^{-1} \sigma^{2} \left( \theta_{t+1} - 2 \theta_{t} + \theta_{t-1} \right) + \epsilon \nabla \tilde U_t(\theta_t) }^2
  \\&\hspace{2em}+
  c^2\epsilon^2 \beta^2 \norm{ \epsilon^{-1} \sigma^{2} \left( \theta_{t+1} - \theta_{t-1} \right) }^2 \Bigg) \Bigg)
  \cdot
  \exp\left( -\frac{U(\theta) + U(\theta^*)}{2} \right) 
  \\&\hspace{2em}\cdot 
  \exp\left( - \frac{1}{2} \Abs{ U(\theta) - U(\theta^*) + \frac{1}{2} \epsilon \sigma^{-2}
  \sum_{t=0}^{T-1} \nabla \tilde U(\theta_t)^T \left(\epsilon^{-1} \sigma^{2} \left( \theta_{t+1} - \theta_{t-1} \right) \right) } \right)
  \;\\&\hspace{2em} \cdot d\theta_0\cdots d\theta_{T-1}.
\end{align*}

Simplifying this a bit, we get
\begin{align*}
  &\pi(\theta) G(\theta, \theta^*)\\
  &=
  \frac{1}{Z} \cdot (1 + \epsilon \beta)^{Td }\cdot \beta^{-\frac{Td}{2}} \cdot 2^{-\frac{(3T-1)d}{2}} \cdot \pi^{-\frac{(T+1)d}{2}} \cdot \epsilon^{-\frac{(3T+2)d}{2}} \cdot \sigma^{(T+1)d}
  \\&\hspace{2em} \cdot \mathbf{E}\int
  \exp\left( -\frac{\sigma^2}{\epsilon^2} \left( \norm{ \theta^* - \theta_{T-1} }^2 + \norm{ \theta_0 - \theta }^2 \right) \right)
  \\&\hspace{2em}\cdot
  \exp\left( -\frac{\sigma^2}{8 \epsilon^3 \beta} \cdot \sum_{t = 0}^{T - 1}
  \norm{ \theta_{t+1} - 2 \theta_{t} + \theta_{t-1} + \epsilon^2 \sigma^{-2} \nabla \tilde U_t(\theta_t) }^2 \right)
  \\&\hspace{2em}\cdot
  \exp\left( -\frac{\beta \sigma^2}{8 \epsilon} \cdot \sum_{t = 0}^{T - 1}
  \norm{ \theta_{t+1} - \theta_{t-1} }^2 \right)
  \\&\hspace{2em}\cdot
  \exp\left( -\frac{U(\theta) + U(\theta^*)}{2} \right) 
  \\&\hspace{2em}\cdot 
  \exp\left( - \frac{1}{2} \Abs{ U(\theta) - U(\theta^*) + \frac{1}{2}
  \sum_{t=0}^{T-1} 
  \nabla \tilde U_t(\theta_t)^T \left( \theta_{t+1} - \theta_{t-1} \right) } \right)
  \; d\theta_0\cdots d\theta_{T-1}.
\end{align*}
Next, let
\begin{align*}
    N_t &= \nabla \tilde U_t(\theta_t) - \nabla U_t(\theta_t), \\
    A_t &= \theta_{t+1} - 2 \theta_{t} + \theta_{t-1} + \epsilon^2 \sigma^{-2} \nabla U_t(\theta_t), \\
    B_t &= \theta_{t+1} - \theta_{t-1}, \\
    C_t &=
    U(\theta) - U(\theta^*) + \frac{1}{2}
      \sum_{t=0}^{T-1} 
      \nabla U_t(\theta_t)^T \left( \theta_{t+1} - \theta_{t-1} \right).
\end{align*}
Notice that only $N_t$ depends on the randomness of the stochastic gradient samples. 
Then,
\begin{align*}
  &\pi(\theta) G(\theta, \theta^*)\\
  &=
  \frac{1}{Z} \cdot (1 + \epsilon \beta)^{Td }\cdot \beta^{-\frac{Td}{2}} \cdot 2^{-\frac{(3T-1)d}{2}} \cdot \pi^{-\frac{(T+1)d}{2}} \cdot \epsilon^{-\frac{(3T+2)d}{2}} \cdot \sigma^{(T+1)d}
  \\&\hspace{2em}\cdot \int \mathbf{E}\Bigg[
  \exp\left( -\frac{\sigma^2}{\epsilon^2} \left( \norm{ \theta^* - \theta_{T-1} }^2 + \norm{ \theta_0 - \theta }^2 \right) \right)
  \\&\hspace{2em}\cdot
  \exp\left( -\frac{\sigma^2}{8 \epsilon^3 \beta} \cdot \sum_{t = 0}^{T - 1}
  \left(
    \norm{ A_t  }^2
    +
    2 \epsilon^2 \sigma^{-2} A_t^T N_t
    +
    \epsilon^4 \sigma^{-4} \norm{ N_t }^2
\right) \right)
  \\&\hspace{2em}\cdot
  \exp\left( -\frac{\beta \sigma^2}{8 \epsilon} \cdot \sum_{t = 0}^{T - 1}
  \norm{  B_t }^2 \right)
  \\&\hspace{2em}\cdot
  \exp\left( -\frac{U(\theta) + U(\theta^*)}{2} \right) 
  \\&\hspace{2em}\cdot 
  \exp\left( - \frac{1}{2} \Abs{ C_t + \frac{1}{2}
  \sum_{t=0}^{T-1} 
  N_t^T B_t } \right) \Bigg]
  \; d\theta_0\cdots d\theta_{T-1}.
\end{align*}
Now, for any constant $c > 1$, we can bound
\begin{align*}
  \Exv{ \Abs{ \sum_{t=0}^{T-1} N_t^T B_t } }
  &\le
  \sqrt{ \Exv{ \left( \sum_{t=0}^{T-1} N_t^T B_t \right)^2 } }
  \\&=
  \sqrt{ \sum_{t=0}^{T-1} B_t^T \Exv{ N_t N_t^T } B_t } 
  \\&\le
  \sqrt{ \sum_{t=0}^{T-1} \frac{V^2}{d} \norm{B_t}^2 } 
  \\&\le
  \frac{V^2}{d} \frac{\epsilon}{2 (c -1) \beta \sigma^2}
  +
  (c - 1) \frac{\beta \sigma^2}{2 \epsilon} \sum_{t=0}^{T-1} \norm{B_t}^2.
\end{align*}
Additionally, we know that $\Exv{N_t} = 0$ and
\[
  \Exv{\norm{ N_t }^2} = \Exv{ \trace{N_t N_t^T} } \le \trace{\frac{V^2}{d} I} = V^2.
\]
So, since by Jensen's inequality, $\Exv{\exp(X)} \ge \exp(\Exv{X})$, we can bound this with
\begin{align*}
  &\pi(\theta) G(\theta, \theta^*)\\
  &\ge
  \frac{1}{Z} \cdot (1 + \epsilon \beta)^{Td }\cdot \beta^{-\frac{Td}{2}} \cdot 2^{-\frac{(3T-1)d}{2}} \cdot \pi^{-\frac{(T+1)d}{2}} \cdot \epsilon^{-\frac{(3T+2)d}{2}} \cdot \sigma^{(T+1)d}
  \\&\hspace{2em}\cdot \int
  \exp\left( -\frac{\sigma^2}{\epsilon^2} \left( \norm{ \theta^* - \theta_{T-1} }^2 + \norm{ \theta_0 - \theta }^2 \right) \right)
  \\&\hspace{2em}\cdot
  \exp\left( -\frac{\sigma^2}{8 \epsilon^3 \beta} \cdot \sum_{t = 0}^{T - 1}
    \norm{ A_t  }^2
  \right)
  \cdot
  \exp\left( -\frac{\epsilon T V^2}{8 \sigma^2 \beta} \right)
  \\&\hspace{2em}\cdot
  \exp\left( -\frac{c \sigma^2 \beta}{8 \epsilon} \cdot \sum_{t = 0}^{T - 1}
  \norm{  B_t }^2 \right)
  \\&\hspace{2em}\cdot
  \exp\left( -\frac{U(\theta) + U(\theta^*)}{2} \right) 
  \\&\hspace{2em}\cdot 
  \exp\left( - \frac{1}{2} \Abs{ C_t } \right)
  \cdot
  \exp\left( - \frac{1}{(c-1) T d}
  \cdot
  \frac{\epsilon T V^2}{8 \sigma^2 \beta} \right)
  \; d\theta_0\cdots d\theta_{T-1}.
\end{align*}
Now, this is a lower bound on the AMAGOLD chain with parameters $(\epsilon, \sigma, \beta)$.
Next, we consider the transition probability of a \emph{rescaled} chain, with slightly different parameters, that will be set as a function of $c$.
Specifically, consider the chain with parameters $(\epsilon, \sigma \cdot c^{-1/4}, \beta \cdot c^{-1/2})$.
(We will set the parameter $c$ later; at this point in the proof it is just an arbitrary constant $c > 1$.)
If we call this rescaled chain $G_r$, then by substitution of the parameters into the above expression, we get
\begin{align*}
  &\pi(\theta) G_r(\theta, \theta^*)\\
  &\ge
  \frac{1}{Z} \cdot (1 + c^{-1/2} \epsilon \beta)^{Td} \cdot
  \beta^{-\frac{Td}{2}} \cdot
  2^{-\frac{3(T-1)d}{2}} \cdot \pi^{-\frac{(T+1)d}{2}} \cdot c^{-\frac{d}{4}} \cdot \epsilon^{-\frac{(3T+2)d}{2}} \cdot \sigma^{(T+1)d}
  \\&\hspace{2em}\cdot
  \int \mathbf{E}
  \exp\left( -\frac{\sigma^2}{c^{1/2} \epsilon^2} \left( \norm{ \theta^* - \theta_{T-1} }^2 + \norm{ \theta_0 - \theta }^2 \right) \right)
  \\&\hspace{2em}\cdot
  \exp\left( -\frac{\sigma^2}{8 \epsilon^3 \beta} \cdot \sum_{t = 0}^{T - 1}
    \norm{ A_t  }^2
  \right)
  \cdot
  \exp\left( -\frac{c \epsilon T V^2}{8 \sigma^2 \beta} \right)
  \\&\hspace{2em}\cdot
  \exp\left( -\frac{\sigma^2 \beta}{8 \epsilon} \cdot \sum_{t = 0}^{T - 1}
  \norm{  B_t }^2 \right)
  \\&\hspace{2em}\cdot
  \exp\left( -\frac{U(\theta) + U(\theta^*)}{2} \right) 
  \\&\hspace{2em}\cdot 
  \exp\left( - \frac{1}{2} \Abs{ C_t } \right)
  \cdot
  \exp\left( - \frac{1}{(c-1) T d}
  \cdot
  \frac{c \epsilon T V^2}{8 \sigma^2 \beta} \right)
  \; d\theta_0\cdots d\theta_{T-1}\\
  &\ge
  \frac{1}{Z} \cdot (1 + c^{-1/2} \epsilon \beta)^{Td} \cdot
  \beta^{-\frac{Td}{2}} \cdot
  2^{-\frac{3(T-1)d}{2}} \cdot \pi^{-\frac{(T+1)d}{2}} \cdot c^{-\frac{d}{4}} \cdot \epsilon^{-\frac{(3T+2)d}{2}} \cdot \sigma^{(T+1)d}
  \\&\hspace{2em}
  \cdot
  \exp\left( -\frac{c \epsilon T V^2}{8 \sigma^2 \beta} \right)
  \cdot
  \exp\left( - \frac{1}{(c-1) T d}
  \cdot
  \frac{c \epsilon T V^2}{8 \sigma^2 \beta} \right)
  \\&\hspace{2em} \cdot
  \int \mathbf{E}
  \exp\left( -\frac{\sigma^2}{\epsilon^2} \left( \norm{ \theta^* - \theta_{T-1} }^2 + \norm{ \theta_0 - \theta }^2 \right) \right)
  \\&\hspace{2em}\cdot
  \exp\left( -\frac{\sigma^2}{8 \epsilon^3 \beta} \cdot \sum_{t = 0}^{T - 1}
    \norm{ A_t  }^2
  \right)
  \\&\hspace{2em}\cdot
  \exp\left( -\frac{\sigma^2 \beta}{8 \epsilon} \cdot \sum_{t = 0}^{T - 1}
  \norm{  B_t }^2 \right)
  \\&\hspace{2em}\cdot
  \exp\left( -\frac{U(\theta) + U(\theta^*)}{2} \right) 
  \\&\hspace{2em}\cdot 
  \exp\left( - \frac{1}{2} \Abs{ C_t } \right)
  \; d\theta_0\cdots d\theta_{T-1}.
\end{align*}
On the other hand, consider the transition probability of the full-gradient L2MC chain with parameters $(\epsilon, \sigma, \beta)$.
This chain will be the same as the AMAGOLD chain, except that $N_t = 0$ always.
So, if we call this chain's transition probability $\bar G$, we will have
\begin{align*}
  &\pi(\theta) \bar G(\theta, \theta^*)\\
  &=
  \frac{1}{Z} \cdot (1 + \epsilon \beta)^{Td }\cdot \beta^{-\frac{Td}{2}} \cdot 2^{-\frac{(3T-1)d}{2}} \cdot \pi^{-\frac{(T+1)d}{2}} \cdot \epsilon^{-\frac{(3T+2)d}{2}} \cdot \sigma^{(T+1)d}
  \\&\hspace{2em}\cdot \int
  \exp\left( -\frac{\sigma^2}{\epsilon^2} \left( \norm{ \theta^* - \theta_{T-1} }^2 + \norm{ \theta_0 - \theta }^2 \right) \right)
  \\&\hspace{2em}\cdot
  \exp\left( -\frac{\sigma^2}{8 \epsilon^3 \beta} \cdot \sum_{t = 0}^{T - 1}
    \norm{ A_t  }^2  \right)
  \\&\hspace{2em}\cdot
  \exp\left( -\frac{\beta \sigma^2}{8 \epsilon} \cdot \sum_{t = 0}^{T - 1}
  \norm{  B_t }^2 \right)
  \\&\hspace{2em}\cdot
  \exp\left( -\frac{U(\theta) + U(\theta^*)}{2} \right) 
  \\&\hspace{2em}\cdot 
  \exp\left( - \frac{1}{2} \Abs{ C_t } \right)
  \; d\theta_0\cdots d\theta_{T-1}.
\end{align*}
Using this, we can simplify our bound on the transition probability of the AMAGOLD chain to
\begin{align*}
  \pi(\theta) G_r(\theta, \theta^*)
  &\ge
  \left( \frac{ 1 + c^{-1/2} \epsilon \beta }{1 + \epsilon \beta} \right)^{Td} \cdot
  c^{-\frac{d}{4}}
  \cdot
  \exp\left( -\frac{c \epsilon T V^2}{8 \sigma^2 \beta} \right)
  \cdot
  \exp\left( - \frac{1}{(c-1) T d}
  \cdot
  \frac{c \epsilon T V^2}{8 \sigma^2 \beta} \right)
  \\&\hspace{2em}\cdot
  \pi(\theta) \bar G(\theta, \theta^*).
\end{align*}
Thus,
\begin{align*}
  \frac{\pi(\theta) G_r(\theta, \theta^*)}{\pi(\theta) \bar G(\theta, \theta^*)}
  \ge
  \left( \frac{ 1 + c^{-1/2} \epsilon \beta }{1 + \epsilon \beta} \right)^{Td} \cdot
  c^{-\frac{d}{4}}
  \cdot
  \exp\left( -\frac{c \epsilon T V^2}{8 \sigma^2 \beta} \right)
  \cdot
  \exp\left( - \frac{1}{(c-1) T d}
  \cdot
  \frac{c \epsilon T V^2}{8 \sigma^2 \beta} \right).
\end{align*}
All that remains to get a bound is to set $c$ appropriately.
Since
\[
  c^{-\frac{d}{4}}
  =
  \exp\left(-\frac{d}{4} \log(c) \right)
  \ge
  \exp\left(-\frac{d}{4} (c - 1) \right),
\]
and
\[
  c^{-1/2}
  \ge
  1 - \frac{c - 1}{2},
\]
we can bound this with
\begin{align*}
  \frac{\pi(\theta) G_r(\theta, \theta^*)}{\pi(\theta) \bar G(\theta, \theta^*)}
  &\ge
  \left( \frac{ 1 + \left( 1 - \frac{c - 1}{2} \right) \epsilon \beta }{1 + \epsilon \beta} \right)^{Td}
  \cdot
  \exp\left(-\frac{d}{4} (c - 1)
   -\frac{c \epsilon T V^2}{8 \sigma^2 \beta}
   - \frac{1}{(c-1) T d}
  \cdot
  \frac{c \epsilon T V^2}{8 \sigma^2 \beta} \right) \\
  &\ge
  \left( 1 - \frac{ (c - 1) \epsilon \beta }{2 (1 + \epsilon \beta)} \right)^{Td}
  \cdot
  \exp\left(-\frac{d}{4} (c - 1)
   -\frac{c \epsilon T V^2}{8 \sigma^2 \beta}
   - \frac{1}{(c-1) T d}
  \cdot
  \frac{c \epsilon T V^2}{8 \sigma^2 \beta} \right).
\end{align*}
Since for any $0 \le x < 1/2$, it holds that $1 - x \ge \exp(-2x)$,
as long as
\[
  \frac{ (c - 1) \epsilon \beta }{1 + \epsilon \beta}
  \le
  1,
\]
it holds that
\[
  1 - \frac{ (c - 1) \epsilon \beta }{2 (1 + \epsilon \beta)}
  \ge
  \exp\left(-\frac{ (c - 1) \epsilon \beta }{1 + \epsilon \beta} \right).
\]
So, under this assumption,
\begin{align*}
  \frac{\pi(\theta) G_r(\theta, \theta^*)}{\pi(\theta) \bar G(\theta, \theta^*)}
  &\ge
  \exp\left( - \frac{ (c - 1) \epsilon \beta Td }{1 + \epsilon \beta} 
   -\frac{d}{4} (c - 1)
   -\frac{c \epsilon T V^2}{8 \sigma^2 \beta}
   - \frac{1}{(c-1) T d}
  \cdot
  \frac{c \epsilon T V^2}{8 \sigma^2 \beta} \right) \\
  &=
  \exp\left( -\frac{\epsilon T V^2}{8 \sigma^2 \beta} - \frac{\epsilon V^2}{8 \sigma^2 \beta d} \right)
  \\&\hspace{2em}\cdot
  \exp\left( - (c - 1) \left(
    \frac{ (1 + \epsilon \beta (1 + 4 T)) d }{4 (1 + \epsilon \beta)} 
    +
    \frac{\epsilon T V^2}{8 \sigma^2 \beta}
   \right)
   -
   \frac{\epsilon V^2}{8 (c - 1) \sigma^2 \beta d} \right) \\
  &=
  \exp\left( -\frac{\epsilon T V^2}{8 \sigma^2 \beta} - \frac{\epsilon V^2}{8 \sigma^2 \beta d} \right)
  \\&\hspace{2em}\cdot
  \exp\left( - (c - 1) \left(
    \frac{3}{2} Td
    +
    \frac{\epsilon T V^2}{8 \sigma^2 \beta}
   \right)
   -
   \frac{\epsilon V^2}{8 (c - 1) \sigma^2 \beta d} \right).
\end{align*}
If we also assume that 
\[
  \frac{\epsilon V^2}{4 \sigma^2 \beta d} \le 1,
\]
then
\[
  \frac{\epsilon T V^2}{8 \sigma^2 \beta} \le \frac{Td}{2},
\]
and so
\begin{align*}
  \frac{\pi(\theta) G_r(\theta, \theta^*)}{\pi(\theta) \bar G(\theta, \theta^*)}
  &\ge
  \exp\left( -\frac{\epsilon T V^2}{8 \sigma^2 \beta} - \frac{\epsilon V^2}{8 \sigma^2 \beta d} \right)
  \\&\hspace{2em}\cdot
  \exp\left( - (c - 1) 2 Td
   -
   \frac{\epsilon V^2}{8 (c - 1) \sigma^2 \beta d} \right).
\end{align*}
Next, set
\[
  c - 1
  =
  \sqrt{
    \frac{\epsilon V^2}{16 \sigma^2 \beta T d^2}
  }.
\]
From this, we will get
\begin{align*}
  \frac{\pi(\theta) G_r(\theta, \theta^*)}{\pi(\theta) \bar G(\theta, \theta^*)}
  &\ge
  \exp\left( -\frac{\epsilon T V^2}{8 \sigma^2 \beta} - \frac{\epsilon V^2}{8 \sigma^2 \beta d} \right)
  \cdot
  \exp\left(
   -
   \sqrt{
      \frac{\epsilon T V^2}{\sigma^2 \beta}
    }
  \right) \\
  &\ge
  \exp\left( -\frac{\epsilon T V^2}{4 \sigma^2 \beta} - 
   \sqrt{
      \frac{\epsilon T V^2}{\sigma^2 \beta}
    }
  \right).
\end{align*}
Now, in order for this to hold, we needed
\[
  \frac{ (c - 1) \epsilon \beta }{1 + \epsilon \beta}
  \le
  1.
\]
With our setting of $c$, and our other assumption,
\[
  c - 1
  =
  \sqrt{
    \frac{\epsilon V^2}{16 \sigma^2 \beta T d^2}
  }
  =
  \sqrt{
    \frac{\epsilon V^2}{4 \sigma^2 \beta d}
    \cdot
    \frac{1}{4 T d}
  }
  \le
  \sqrt{
    \frac{1}{4 T d}
  }
  \le
  1,
\]
so the bound will trivially hold.
Thus the only added assumption we needed is the one stated in the Theorem statement, that
\[
  \frac{\epsilon V^2}{4 \sigma^2 \beta d} \le 1.
\]
Now we apply the standard Dirichlet form argument. The spectral gap of a Markov chain can be written as \citep{aida1998uniform}
\[
\gamma = \inf_{f\in L^2_0(\pi): Var_{\pi}[f] = 1} \mathcal{E}(f)
\]
where $L^2_0(\pi)$ denotes the Hilbert space of all functions that are square integrable with respect to probability measure $\pi$ and have mean zero. $\mathcal{E}(f)$ is the Dirichlet form of a Markov chain associated with transition operator $T$ \citep{fukushima2010dirichlet}:
\begin{align*}
\mathcal{E}(f) 
	= 
	\frac{1}{2}\int\int\left[\left(f(\theta)-f(\theta^*)\right)^2\right]G(\theta,\theta^*)\pi(\theta)d\theta d\theta^*
\end{align*}

By the expression of the spectral gap, it follows that
\begin{align*}
\gamma &= \inf_{f\in L^2_0(\pi): Var_{\pi}[f] = 1} \left[\frac{1}{2}\int\int\left[\left(f(\theta)-f(\theta^*)\right)^2\right]G(\theta,\theta^*)\pi(\theta)d\theta d\theta^*\right]\\
&\geq \exp\left( -\frac{\epsilon T V^2}{4 \sigma^2 \beta} - 
   \sqrt{
      \frac{\epsilon T V^2}{\sigma^2 \beta}
    }
  \right)
\cdot \inf_{f\in L^2_0(\pi): Var_{\pi}[f] = 1} \left[\frac{1}{2}\int\int\left[\left(f(\theta)-f(\theta^*)\right)^2\right]\bar G(\theta,\theta)\pi(\theta)d\theta d\theta^*\right]\\
&= \exp\left( -\frac{\epsilon T V^2}{4 \sigma^2 \beta} - 
   \sqrt{
      \frac{\epsilon T V^2}{\sigma^2 \beta}
    }
  \right) \cdot \bar\gamma
\end{align*}
This finishes the proof.
\end{proof}

\section{Reformulation of AMAGOLD Algorithm}\label{sec:append:reform}
We reformulate our algorithm by setting $v=\epsilon\sigma^{-2}r$,$b=\epsilon\beta$, $h=\epsilon^2\sigma^{-2}$ and outline the algorithm after reformulation in Algorithm \ref{alg:re-IMA}.

\begin{algorithm}[H]
\caption{Reformulated AMAGOLD}
\label{alg:re-IMA}
\begin{algorithmic}[1]
  \STATE \textbf{given:} Energy $U$, initial state $\theta \in \Theta$
  \LOOP
    \STATE \textbf{optionally, resample momentum: } $v \sim \mathcal{N}(0, h{\bf I})$
    \STATE \textbf{initialize momentum and energy acc: } $v_{-\frac{1}{2}} \leftarrow v$, $\rho_{- \frac{1}{2}} \leftarrow 0$
    \STATE \textbf{half position update:} $\theta_{0} \leftarrow \theta + \frac{1}{2} v_{-\frac{1}{2}}$
    \FOR {$t = 0$ \textbf{to} $T-1$}
      \IF {$t\neq 0$ }
        \STATE \textbf{position update:} $\theta_{t} \leftarrow \theta_{t-1} + v_{t-\frac{1}{2}}$
    \ENDIF
    \STATE \textbf{sample noise } $\eta_t \sim \mathcal{N}(0, 4 h b)$
    \STATE \textbf{sample random energy component} $\tilde U_t$
    \STATE \textbf{update momentum: } $v_{t + \frac{1}{2}} \leftarrow \left((1-b)v_{t - \frac{1}{2}} - h \nabla \tilde U_t(\theta_t) + \eta_t\right)/(1+b)$
    \STATE \textbf{update energy acc: $\rho_{t + \frac{1}{2}} \leftarrow \rho_{t - \frac{1}{2}} + \frac{1}{2} \nabla \tilde U_t(\theta_t)^T \left(v_{t - \frac{1}{2}} + v_{t + \frac{1}{2}}\right)$}
    \ENDFOR
    \STATE \textbf{half position update:} $\theta_T \leftarrow \theta_{T-1} + \frac{1}{2} v_{T-\frac{1}{2}}$
    \STATE \textbf{new values:} $\theta^* \leftarrow \theta_T$, $v^* \leftarrow v_{T-\frac{1}{2}}$
    \STATE $a \leftarrow \exp\left( U(\theta) - U(\theta^*) + \rho_{T - \frac{1}{2}} \right)$
    \STATE \textbf{with probability } $\min(1, a)$ \textbf{ update } $\theta \leftarrow \theta^*$, $v\leftarrow v^*$ (as long as $\theta^* \in \Theta$)
    \STATE \textbf{otherwise update } $v \leftarrow - v_{-\frac{1}{2}}$
  \ENDLOOP
\end{algorithmic}
\end{algorithm}
\vspace{10cm}
\section{Additional Experiments Results and Setting Details}\label{sec:append:exp2}
\subsection{Double Well Potential}\label{sec:append:dwp}
We visualize the estimated density on additional step size settings. Consistent with Figure \ref{fig:1dim}d, it is clear here that SGHMC is very sensitive to step size. A small change in step size will cause a big difference in the estimated density. In contrast, AMAGOLD is more robust and can work well with a large range of step sizes. 

When the setup of step size is inappropriate, as in Figures \ref{fig:1dim_2}a and b where it is fixed to be too small, either SGHMC or AMAGOLD converges in the training time. This is because the chain moves too slowly toward the stationary distribution. However, AMAGOLD with step size tuning is able to automatically adjust the step size based on the information provided by M-H step. As shown in Figure \ref{fig:1dim}c, tuned AMAGOLD can determine a step size that causes convergence given the same training time budget. All results are obtained by collecting $10^5$ samples with 1000 burn-in samples.

\begin{figure}[h!]
\centering
	\begin{tabular}{cc}		
		\includegraphics[width=6cm]{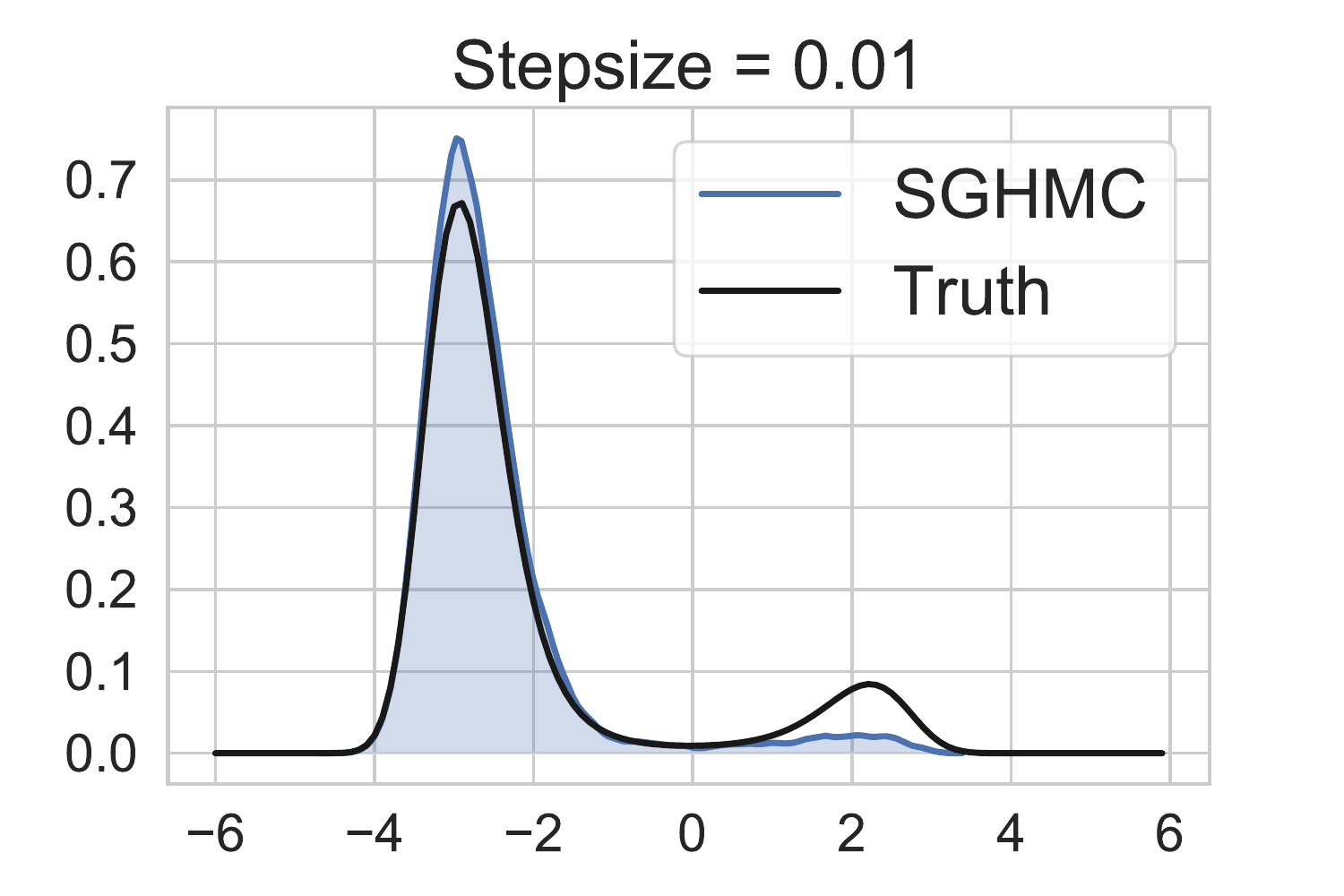}  &
        \includegraphics[width=6cm]{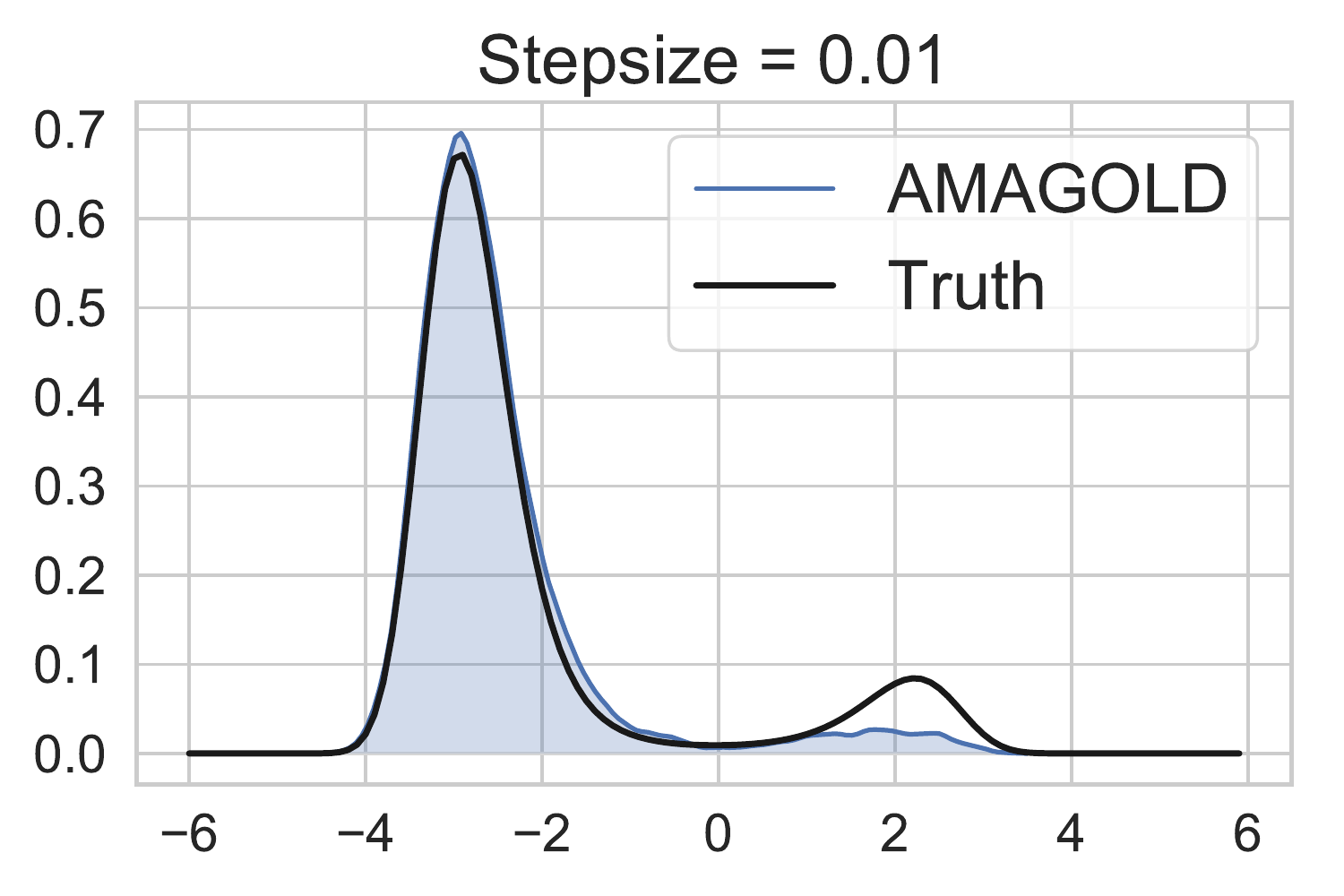}  
		\\
		(a)&
		(b)
		\\
		\includegraphics[width=6cm]{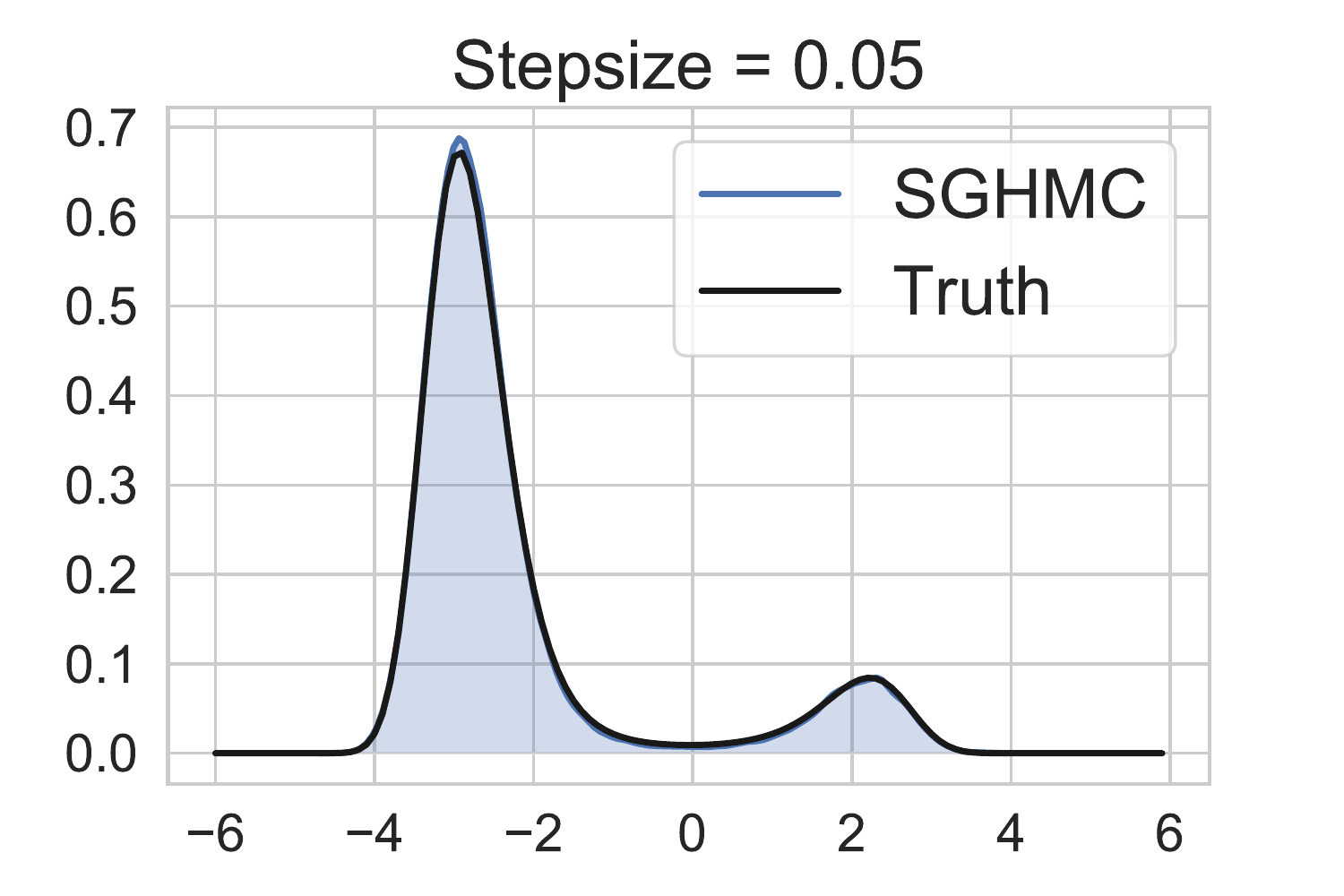}  &
		\includegraphics[width=6cm]{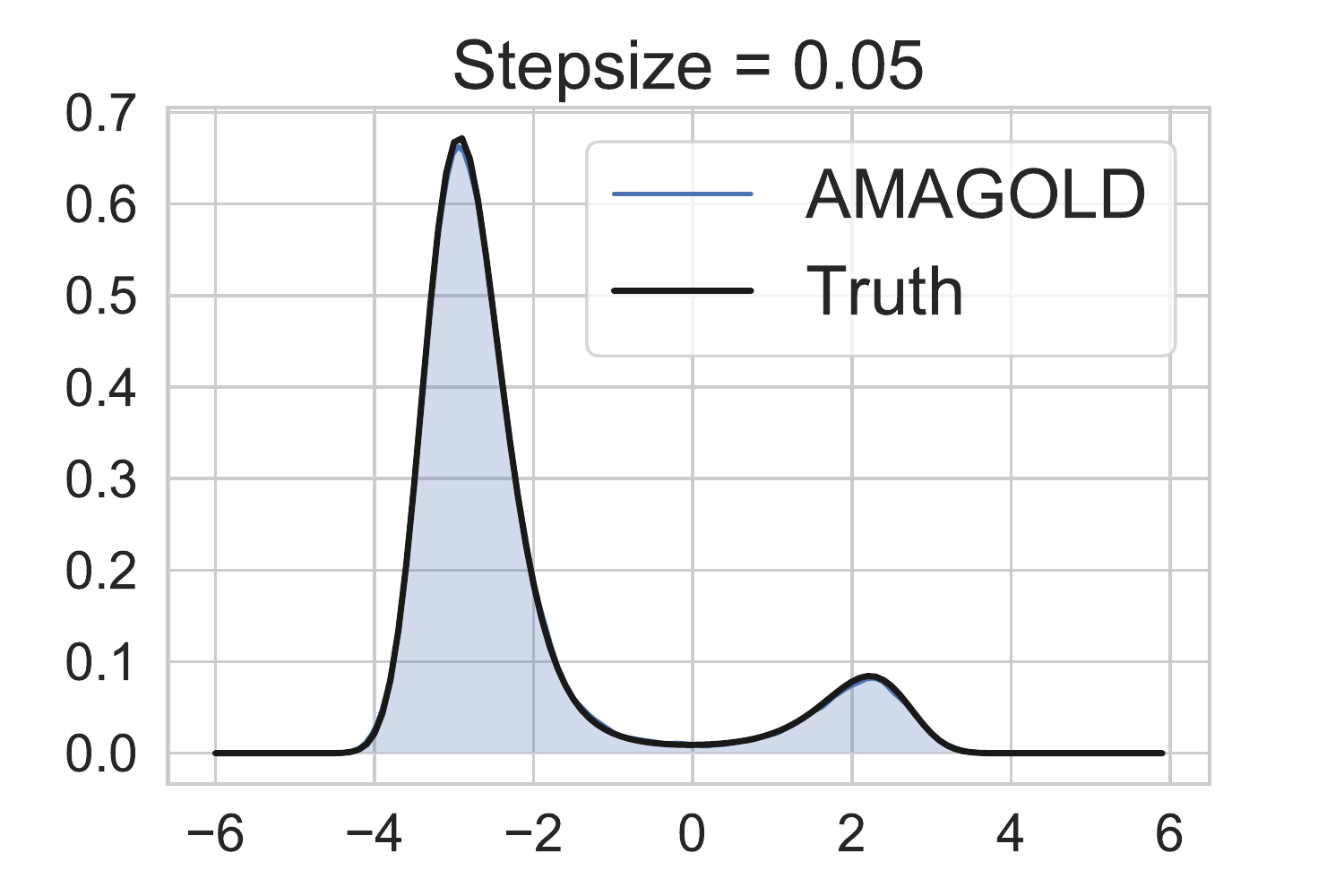}
		\\
		(c)&
		(d)
		\\
		\includegraphics[width=6cm]{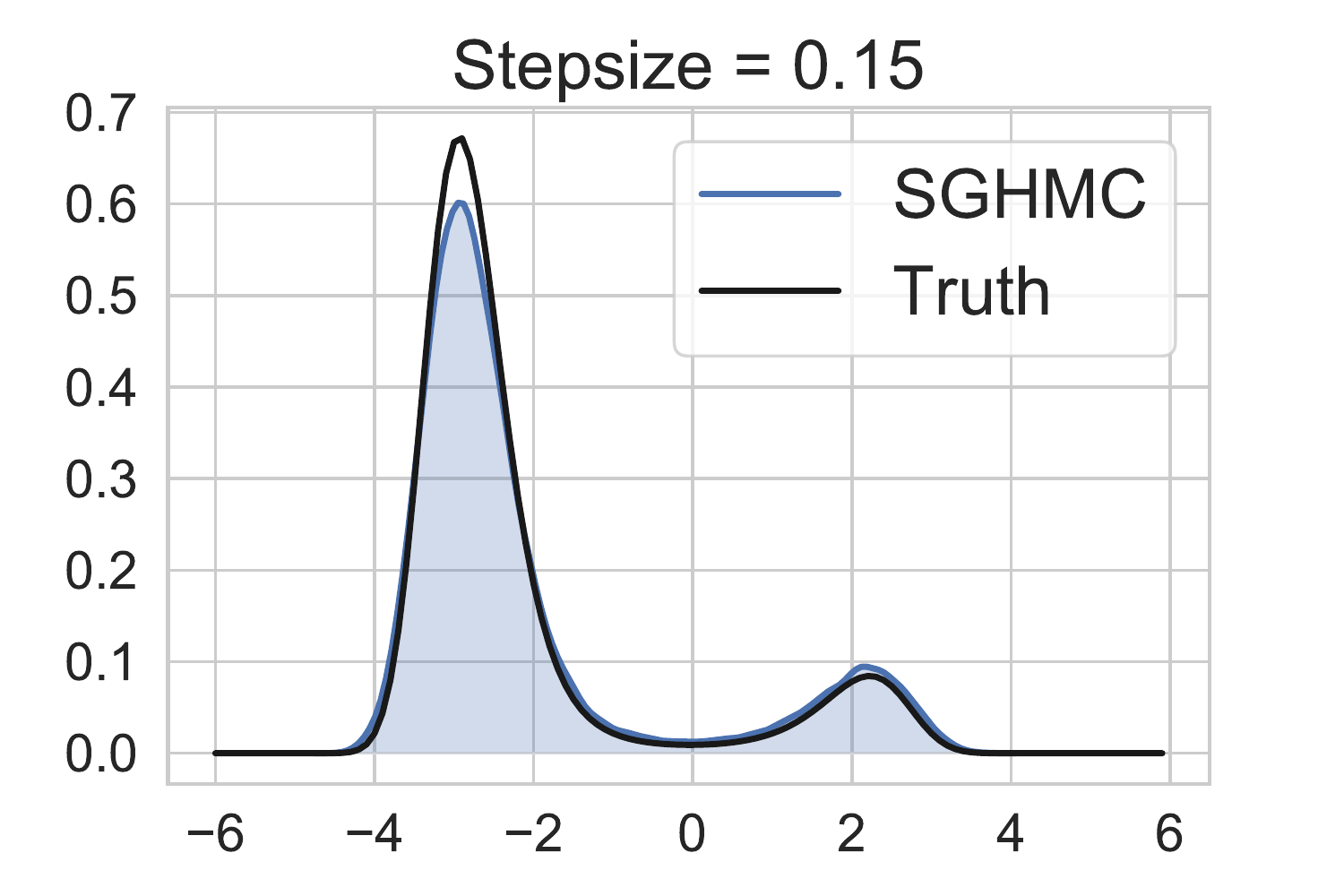} &
		\includegraphics[width=6cm]{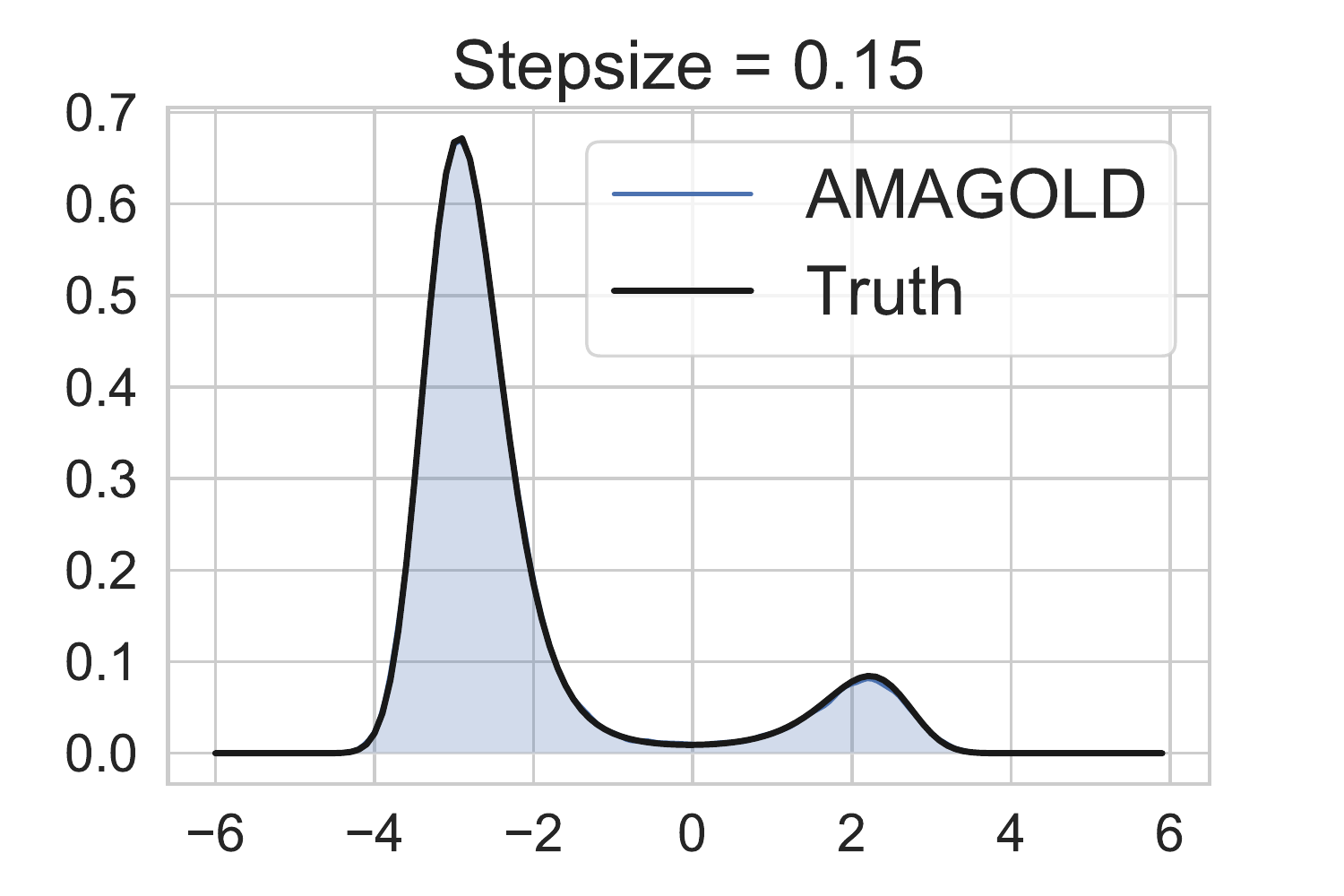}
		\\
		(e)&
		(f)
	\end{tabular}
	\caption{Estimated densities of SGHMC (1st column) and AMAGOLD (2nd column) on varying step sizes.}
	\label{fig:1dim_2}
\end{figure} 
\subsection{Two-Dimensional Synthetic Distributions}
\subsubsection{Analytical Expression}\label{sec:append:dims}
\begin{align*}
    &\text{Dist1: }\mathcal{N}(z_1; z_2^2/4,1)\mathcal{N}(z_2;0,4)\\
    &\text{Dist2: }0.5\mathcal{N}\Bigg(\bm z; 0 ,\begin{bmatrix}2&1.8\\1.8&2 \end{bmatrix}\Bigg) + 0.5\mathcal{N}\Bigg(\bm z; 0 ,\begin{bmatrix}2&-1.8\\-1.8&2 \end{bmatrix}\Bigg)
\end{align*}

\subsubsection{Runtime Comparisons} \label{sec:append:runtime}
We report runtime comparisons between AMAGOLD and SGHMC on Dist1 and Dist2 with step size 0.15 (Figure \ref{fig:runtimeapp}). This experiment uses the analytical energy expression (no data examples), so there is no speed-up of stochastic methods over full-batch methods. At the beginning, SGHMC converges faster due to the lack of M-H step, but eventually it converges to a biased distribution. AMAGOLD is not much slower than SGHMC, which shows that AMA can reduce the amount of computation of adding M-H step while keep the chain unbiased.

\begin{figure*}[h!]
    \centering
    \vspace{-2mm}
    \begin{tabular}{cc}		
    	\includegraphics[width=6cm]{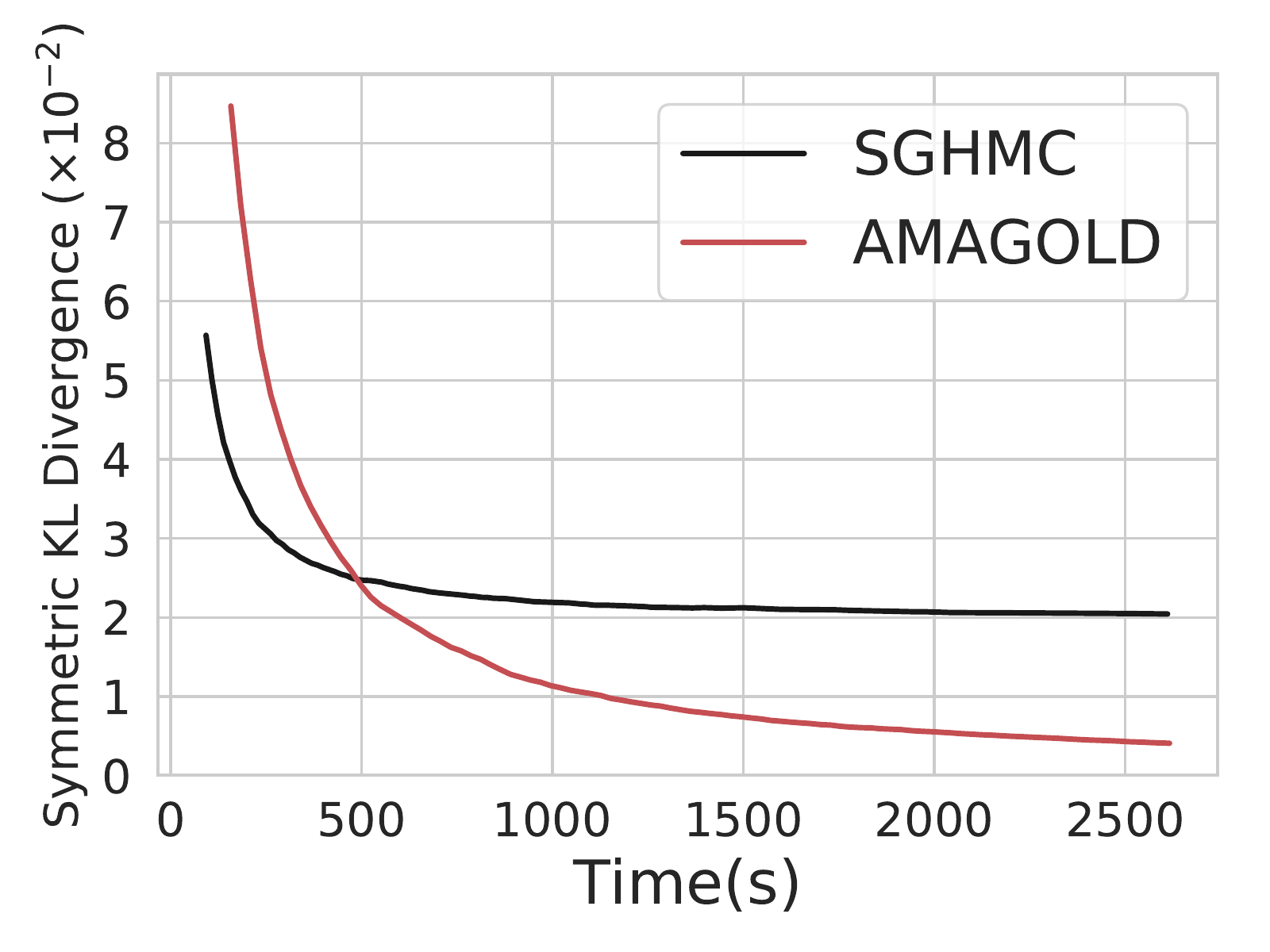}  &
    	\includegraphics[width=6cm]{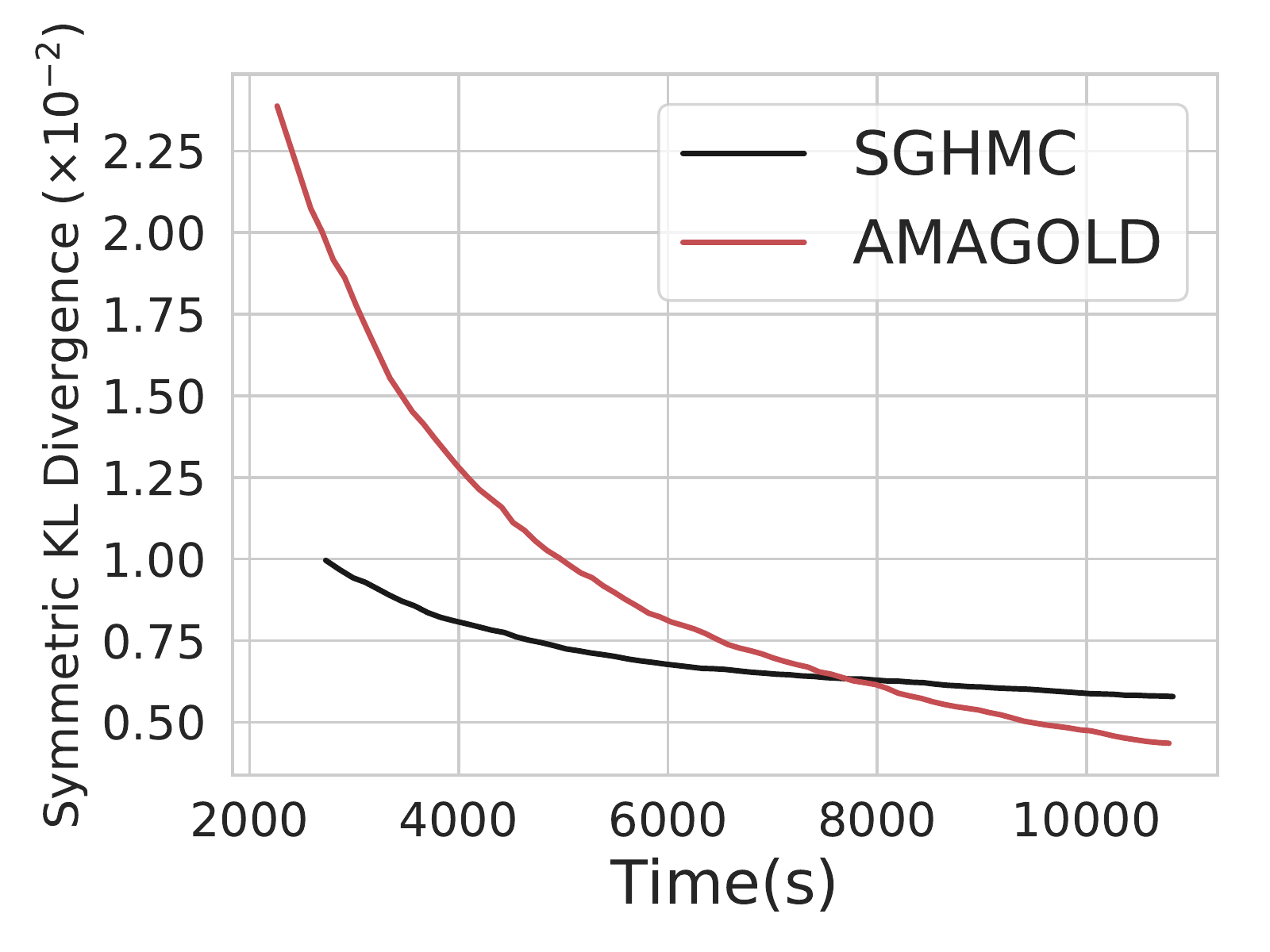} 
    	\\		
    	(a) &
    	(b)
    \\		
    \end{tabular}
    \caption{Runtime comparisons between SGHMC and AMAGOLD on synthetic distributions (a) Dist1 and (b) Dist2.}
    \label{fig:runtimeapp}
\end{figure*}
\vspace{-2mm} 

\subsubsection{Additional Note on Figure~\ref{fig:synthetic}} \label{sec:append:kl}
It is worth noting that, even though it is lower than SGHMC's, AMAGOLD's KL divergence grows when the step size is large compared to full-batch methods. This is because the M-H acceptance probability decreases, causing the chain to converge more slowly. This is expected. It is well-known that stochastic methods are more sensitive to step sizes than full-batch methods \citep{nemirovski2009robust}. However, since AMAGOLD's KL divergence grows much slower than SGHMC's, AMAGOLD is more robust to different step sizes

\subsection{Bayesian Logistic Regression} \label{sec:append:lr}
We report the acceptance probability of AMAGOLD on \emph{Heart} for varying step sizes in Figure \ref{fig:mhheart}. For a large range of step sizes, the acceptance rate is sufficiently high to allow the chain converge fast, demonstrated in Figure \ref{fig:rw}. The acceptance rate may become very low with a large step size resulting in slow move. But this undesired acceptance probability can be easily detected and avoided in practice.

\begin{figure}[h]
    \vspace{-0mm}
        \hspace{-5mm}
        \centering
        \includegraphics[width=6cm]{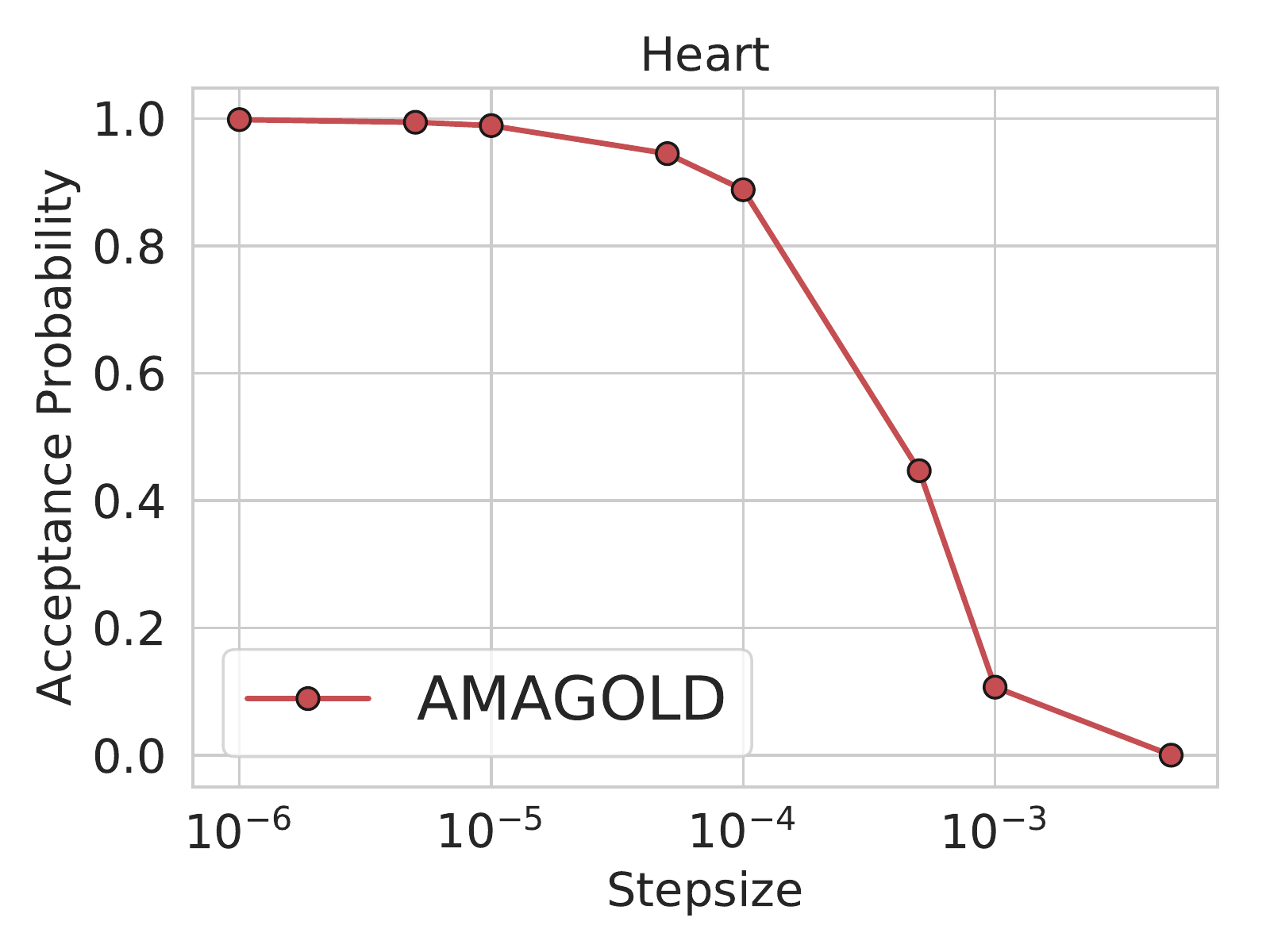}
        \hspace{-4mm}
    \vspace{-0mm}
    \caption{The acceptance probability of the M-H step in AMAGOLD for varying step sizes on the Heart dataset.}
    \label{fig:mhheart}
    \vspace{-0mm}
\end{figure}

\subsection{Bayesian Neural Networks}
The architecture of Bayesian Neural Networks is a two-layer MLP with first hidden layer size 500 and the second hidden layer size 256. 

\end{document}